\newcommand{\A}{\mathcal{A}}
\newcommand{\C}{\mathcal{C}}
\newcommand{\E}{\mathcal{E}}
\newcommand{\I}{\mathcal{I}}
\renewcommand{\O}{\mathcal{O}}
\renewcommand{\P}{\mathcal{P}}
\newcommand{\R}{\mathcal{R}}
\newcommand{\Q}{\mathcal{Q}}
\renewcommand{\S}{\mathcal{S}}
\newcommand{\T}{\mathcal{T}}
\newcommand{\conceptSet}{\Gamma_C}
\newcommand{\roleSet}{\Gamma_R}
\newcommand{\individualSet}{\Gamma_I}
\newcommand{\variableSet}{\Gamma_V}
\newcommand{\labeledNullSet}{\Gamma_N}
\newcommand{\tup}[1]{\langle #1 \rangle}
\newcommand{\body}{\textit{body}}
\newcommand{\head}{\textit{head}}
\newcommand{\aczero}{\mathrm{AC}^0}
\newcommand{\ra}{\rightarrow}
\newcommand{\chase}{\mathit{Chase}}
\newcommand{\TGD}{\mathit{TGD}}
\newcommand{\modelseql}{\models_\mathsf{EQL}}
\newcommand{\bcqclosure}{\textit{\BCQ}\text{-}Cons}
\newcommand{\bcqkclosure}{\textit{\BCQ}_k\text{-}Cons}
\newcommand{\closure}{\textit{Cons}}
\newcommand{\optcqcens}{\textit{OptCQCens}}
\newcommand{\dlliter}{\text{DL-Lite}_\R}
\newcommand{\cqlength}{\textit{Len}}
\newcommand{\maxlength}{\textit{MaxLenCQ}}
\newcommand{\maxlengthatoms}{\textit{MaxLenCQ}}
\newcommand{\BCQ}{\textbf{BCQ}}
\newcommand{\BUCQ}{\textbf{BUCQ}}
\newcommand{\BCQk}{\textbf{BCQ}_k}
\newcommand{\BCQh}{\textbf{BCQ}_h}
\newcommand{\BCQl}{\textbf{BCQ}_{\ell}}
\newcommand{\BUCQk}{\textbf{BUCQ}_k}
\newcommand{\eval}{\textit{eval}}
\newcommand{\perfectref}{\textit{PerfectRef}}
\newcommand{\kcons}{\textit{PolicyCons}}
\newcommand{\phikcons}{\phi_\mathit{pc}}
\newcommand{\intcens}{\C_{\mathit{int}}(\E)}
\newcommand{\clash}{\textit{Clash}}
\newcommand{\cqe}{\text{SC}}
\newcommand{\icqe}{\text{IC}}
\newcommand{\censentailed}{\textit{\cqe-Entailed}}
\newcommand{\icensentailed}{\textit{\icqe-Entailed}}
\newcommand{\modelscqe}{\models_{\cqe}}
\newcommand{\modelsicqe}{\models_{\icqe}}
\newcommand{\coNP}{coNP\xspace}
\newcommand{\decprob}{\textsf{REC}}
\newcommand{\acdecprob}{\textsf{AREC}}
\newcommand{\qedex}{\hfill\ensuremath{\vrule height 4pt width 4pt depth 0pt}}
\newcommand{\ISA}{\sqsubseteq}
\newcommand{\lang}{\mathcal{L}}
\newcommand{\LT}{\lang_\T}
\newcommand{\LQ}{\lang_q}
\newcommand{\constpred}{\textit{Ind}}
\newcommand{\ucqrewrite}{\textit{UCQRewrite}}
\newcommand{\andcqs}{\textit{And}}
\newcommand{\comA}{C$_A$\xspace}
\newcommand{\comB}{C$_B$\xspace}
\newcommand{\profiledActivity}{\mathsf{profiledActivity}}
\newcommand{\citOf}{\mathsf{citOf}}
\newcommand{\SR}{\mathsf{SR}}
\newcommand{\consent}{\mathsf{Consent}}
\newcommand{\name}{\mathsf{name}}
\newcommand{\dateBirth}{\mathsf{dateB}}
\newcommand{\pA}{\mathsf{p_1}}
\newcommand{\pB}{\mathsf{p_2}}
\newcommand{\actA}{\mathsf{act_1}}
\newcommand{\actB}{\mathsf{act_2}}
\newcommand{\ann}{\mathsf{ann}}
\newcommand{\dA}{\mathsf{date_1}}
\newcommand{\patient}{\mathsf{Patient}}
\newcommand{\admitted}{\mathsf{admitted}}
\newtheorem{example}{Example}
\newtheorem{theorem}{Theorem}
\newtheorem{definition}{Definition}
\newtheorem{proposition}{Proposition}
\newtheorem{lemma}{Lemma}
\newenvironment{proofsk}{\noindent \textsl{Proof (sketch).\ }}{\qed}
\newcommand{\tabella}{
\begin{table}
\centering
\setlength\tabcolsep{3pt} 
\begin{tabular}{|c|c|c|c|c|c|}
\hline
\multirow[c]{3}{*}{Policy} & Query & \multicolumn{2}{c|}{\mbox{} Confidentiality} & \multicolumn{2}{c|}{Data} \\
 & language & \multicolumn{2}{c|}{\mbox{} preservation} & \multicolumn{2}{c|}{complexity} \\
\cline{3-6}
& & \cqe-ent. & \icqe-ent. & \cqe-ent. & \icqe-ent. \\
\hline
Acyclic & BCQ & \multicolumn{2}{c|}{yes} & \multicolumn{2}{c|}{in $\aczero$} \\
\hline
Arbitrary & BCQ & \multicolumn{2}{c|}{yes} & \multicolumn{2}{c|}{\coNP-c} \\
\hline
Acyclic & BUCQ & no & yes & in $\aczero$ & in $\aczero$ \\
\hline
Arbitrary & BUCQ & no & yes & \coNP-c & \coNP-c \\
\hline
\end{tabular}
\caption{Results on data complexity and confidentiality preservation. \coNP-c stands for \coNP-complete.}
\label{tab:results}
\end{table}
}
\newcommand{\tabellaref}{
\begin{tableref}
\centering
\setlength\tabcolsep{3pt} 
\begin{tabular}{|c|c|c|c|c|c|}
\hline
\multirow[c]{3}{*}{Policy} & Query & \multicolumn{2}{c|}{\mbox{} Confidentiality} & \multicolumn{2}{c|}{Data} \\
 & language & \multicolumn{2}{c|}{\mbox{} preservation} & \multicolumn{2}{c|}{complexity} \\
\cline{3-6}
& & \cqe-ent. & \icqe-ent. & \cqe-ent. & \icqe-ent. \\
\hline
Acyclic & BCQ & \multicolumn{2}{c|}{yes} & \multicolumn{2}{c|}{in $\aczero$} \\
\hline
Arbitrary & BCQ & \multicolumn{2}{c|}{yes} & \multicolumn{2}{c|}{\coNP-c} \\
\hline
Acyclic & BUCQ & no & yes & in $\aczero$ [\ref{thm:cqe-entailment-full}] & in $\aczero$ [\ref{thm:icqe-entailment-full}] \\
\hline
Arbitrary & BUCQ & no & yes & \coNP-c [\ref{thm:cqe-entailment-bcq-ub}] & \coNP-c [\ref{thm:icqe-entailment-bcq-ub}] \\
\hline
\end{tabular}
\caption{Results on data complexity and confidentiality preservation. \coNP-c stands for \coNP-complete.}
\label{tab:results}
\end{table}
}
\newif\ifjournal
\newif\ifcameraready
\newif\ifdraft
\newcommand{\nb}[1]{\textcolor{red}{\bf!}%
	\marginpar[\parbox{14.5mm}{\raggedleft\scriptsize\textcolor{red}{#1}}]%
	{\parbox{14.5mm}{\raggedright\scriptsize\textcolor{red}{#1}}}}
\newcommand{\warn}[1]{\textcolor{red}{#1}}
\newcommand{\nb}  [1]{}
\newcommand{\warn}[1]{}
\title{\ifjournal
Controlled Query Evaluation under Epistemic Policies
\else
Controlled Query Evaluation through Epistemic Dependencies
\fi}
\author{Gianluca Cima$^1$\and
        Domenico Lembo$^1$\and
        Lorenzo Marconi$^1$\and
        \\Riccardo Rosati$^1$\And
        Domenico Fabio Savo$^2$\\
\affiliations
$^1$Sapienza University of Rome\\
$^2$University of Bergamo\\
\emails
\{lastname\}@diag.uniroma1.it,
domenicofabio.savo@unibg.it
}
\begin{document}

\maketitle

\begin{abstract}
    In this paper, we propose the use of epistemic dependencies to express data protection policies in Controlled Query Evaluation (CQE), which is a form of confidentiality-preserving query answering over ontologies and databases.
    The resulting policy language goes significantly beyond those proposed in the literature on CQE so far, allowing for very rich and practically interesting forms of data protection rules. We show the expressive abilities of our framework and study the data complexity of CQE for (unions of) conjunctive queries when ontologies are specified in the Description Logic $\dlliter$. Interestingly, while we show that the problem is in general intractable, we prove tractability for the case of acyclic epistemic dependencies by providing a suitable query rewriting algorithm. The latter result paves the way towards the implementation and practical application of this new approach to CQE.
\end{abstract}

\sloppy

\section{Introduction}
\label{sec:introduction}

Controlled Query Evaluation (CQE) 
is a confidentiality-preserving query answering approach that protects sensitive information 
by filtering query answers in such a way that a user 
cannot deduce information declared confidential by a data protection policy~\cite{Bisk00,BiBo04,BoSa13,CKKZ13}. 

In CQE, one crucial aspect concerns the expressiveness of the policy language, which determines the form of the logic formulas definable in the policy, consequently influencing a designer's capacity to declare which pieces of information must not be disclosed. 
Previous literature has mainly considered only policies consisting of sentences, i.e.\ closed formulas, as in~\cite{SiJR83,BiBo04,BoSa13,LeRS19}. Through this approach, it is only possible to impose that the truth value of a sentence in the policy cannot be inferred from the system by asking queries. For example, \cite{LeRS19} study CQE when policy statements take the form $\forall\vec{x}(q(\vec{x}) \ra \bot)$, referred to as denial. 
Enforcing one such denial over the data means refraining from disclosing the inference of the Boolean conjunctive query (BCQ) $\exists \vec{x}~q(\vec{x})$ by the system, even if the inference has occurred. For instance, the rule
\begin{center}
\begin{tabbing}
\hspace{1cm}$\delta_1 = \forall x,y (\patient(x) \wedge \admitted(x,y) \rightarrow \bot)$
\end{tabbing}
\end{center}
says that it is confidential whether there exists a patient admitted in a hospital department.
However, if the aim is to hide admitted patients, the above dependency is 
imposing an excessively stringent level of protection, 
since it is denying the presence of patients 
in the hospital to protect their identity, 
which is implausible in practice.
A more effective approach would be instead to require that the system must not answer the \emph{open} query $q(x): \exists y (\patient(x) \wedge \admitted(x,y))$. Intuitively, specifying a rule of this kind means imposing that the set of patients that the system \emph{knows} to be admitted to the hospital has not to be disclosed.

To properly capture this behavior, we propose to use an epistemic operator $K$ in policy formulas, in the spirit of~\cite{CDLLR07,CoLe20}. This allows us to formalize the epistemic state of the user, that is, what the system can tell to the user without disclosing sensitive information. In our example, the policy rule is as follows:
%
\begin{center}
\begin{tabbing}
\hspace{0.4cm}$\delta_2 = \forall x \big( K\, \exists y (\patient(x) \wedge \admitted(x,y)) \rightarrow K \bot \big)$
\end{tabbing}
\end{center}
Rule $\delta_2$ imposes that, in the epistemic state of the user, the set of admitted patients must be empty (but this does not exclude that the user knows that some patients have been admitted).

In fact, our proposal enables us to accomplish more than just that. In a more advanced scenario, concealing the relationship between a patient and a hospital should apply only if 
the patient has not signed a consensus form. This can be expressed by the following formula:
\begin{center}
\begin{tabbing}
\hspace{0.85cm} $\delta_3 = \forall x \big(K\,\exists y (\patient(x) \wedge\admitted(x,y)) \ra$\\
\hspace{5.7cm}$ K \consent(x) \big)$
\end{tabbing}
\end{center}
Intuitively, the formula is saying that if a user knows that a patient has been admitted, then she must know that the patient has signed a consensus form. Thus, if a patient did not sign a consensus form, the system cannot disclose that this patient has been admitted.
In general, this kind of policy is well-suited for encoding the principle of \emph{privacy by default}, which is a desirable property in data protection (as expressly outlined 
in Article~25 of GDPR~\cite{GDPR-art25}).

We remark once again that policy rules of the form just described have not been previously considered in the literature. It is however worth noticing that in~\cite{CKKZ15} CQE is studied for a policy consisting of \emph{one single open} CQ. In this latter framework, a rule analogous to $\delta_2$ of our example can be in fact specified, but richer policies, as those requiring more denials and/or rules as $\delta_3$ above, are non-expressible.

Formulas as $\delta_2$ and $\delta_3$ are called \emph{epistemic dependencies} (EDs). EDs have been originally introduced in~\cite{CoLe20} to express integrity constraints in ontology-based data management and are indeed a special case of \emph{domain-independent EQL-Lite(CQ)} sentences~\cite{CDLLR07b}. 
In the present paper, we use EDs as policy rules that must be satisfied to preserve data confidentiality over Description Logic (DL) ontologies, similarly as integrity constraints must be satisfied to guarantee data consistency. 
However, our aim is totally different from that of~\cite{CoLe20}, where the focus is on consistency checking.


After defining the policy language, to completely characterize our CQE framework, we need to specify its semantics.
%
This issue is addressed in CQE through \emph{censors}. 
In this paper, we use \emph{CQ-censors} introduced in~\cite{LeRS19}. 
In a nutshell, given an ontology $\O$, an \emph{optimal CQ-censor} is a maximal subset $\C$ of the set of all BCQs inferred by $\O$, such that $\C$ coupled with the intensional component of $\O$ (i.e.\ its TBox $\T$) satisfies all rules in the policy (i.e.\ no secrets are disclosed through standard query answering over $\T \cup \C$).
As in~\cite{LeRS19}, we then define CQE as the problem of computing the query answers that are in the intersection of the answer sets returned by all optimal censors. 
We call this problem \emph{SC-entailment} (Skeptical entailment under CQ-censors).
This form of CQE does not suffer of the problem of having to arbitrarily select an optimal censor among several incomparable ones, as done, e.g., in~\cite{BoSa13,CKKZ15}. 
We point out that CQ-censors are particularly interesting from a practical perspective, since, 
for ontologies specified $\dlliter$~\cite{CDLLR07}, a DL suited for modeling data intensive ontologies, and policies expressed as denials, SC-entailment of BCQs is tractable in data complexity~\cite{LeRS19}. One of the aims of the present paper is then to study data complexity of SC-entailment of BCQs under epistemic policies for $\dlliter$ ontologies, with the ultimate goal of identifying conditions ensuring tractability in this case. 

%

Besides the computational complexity study, we also carry out an analysis of the robustness of SC-entailment with respect to confidentiality-preservation. In~\cite{BiWe08,BoSa13,Bona22}, it is shown that censoring mechanisms based on an indistinguishability criterion are indeed more secure than others. 
In abstract terms, according to such a criterion, confidentiality is guaranteed only if query answers returned over a data instance with sensitive information coincide with those returned 
over a data instance without secrets (which is thus indistinguishable from the other instance).
In this paper, we investigate whether the entailment we consider enjoys indistinguishability. 



Specifically, our main results are for ontologies in $\dlliter$ and policies that are sets of EDs. In more detail:

\begin{itemize}
\setlength\itemsep{0em}
\item As for indistinguishability, we show that SC-entailment of BCQs preserves confidentiality as defined in~\cite{BiWe08}, but that this result does not carry over to unions of BCQs (BUCQs). We however prove that the property holds even for BUCQs in the case of IC-entailment, a sound approximation of SC-Entailment that considers a single censor given by the intersection of all optimal CQ-censors (Section~\ref{sec:indistinguishability});
\item We show that SC-entailment and IC-entailment of BCQs and BUCQs are coNP-complete in data complexity (Section~\ref{sec:arbitrary});
\item For \emph{acyclic EDs}, we exhibit a rewriting algorithm that allows us to prove that SC-entailment and IC-entailment of BCQs and BUCQs are first-order rewritable, and thus in AC$^0$ in data complexity (Section~\ref{sec:acyclic}).
\end{itemize}

\tabella

All our results are summarized in Table~\ref{tab:results}.
Complete proofs are given in the appendix.
\section{Preliminaries} 
\label{sec:preliminaries}

We employ standard notions of function-free first-order logic (FO), and consider FO formulas using only unary and binary predicates called \emph{concepts} and \emph{roles}, respectively, as in Description Logics, which are fragments of FO suited for conceptual modeling~\cite{BKNP20}. 
We assume the existence of pairwise-disjoint countably-infinite sets $\conceptSet$, $\roleSet$, $\individualSet$, $\labeledNullSet$, and $\variableSet$ containing \emph{atomic concepts}, \emph{atomic roles}, \emph{constants} (also known as \emph{individuals}), \emph{labeled nulls}, and \emph{variables}, respectively.
An FO formula $\phi$ is sometimes denoted as $\phi(\vec{x})$, where $\vec{x}$ is the sequence of the free variables occurring in $\phi$. We also use the term \emph{query} as a synonym of FO formula and \emph{Boolean query} as a synonym of closed FO formula (also called sentence).
An FO theory $\Phi$ is a set of FO sentences. The semantics of $\Phi$ is given in terms of FO interpretations over $\conceptSet \cup \roleSet \cup \individualSet$.
W.l.o.g., we consider interpretations sharing the same infinite countable domain $\Delta=\individualSet$, so that every element in $\individualSet$ is interpreted by itself. In other terms, we use \emph{standard names}, as often customary when one deals with epistemic operators~\cite{CDLLR07b}.
We write $\eval(\phi,\I)$ to indicate the evaluation of an FO sentence $\phi$ over an FO interpretation $\I$.
A \emph{model} of an FO theory $\Phi$ is an FO interpretation satisfying all sentences in $\Phi$. We say that $\Phi$ \emph{entails} an FO sentence $\phi$, 
denoted by $\Phi \models \phi$, if $\eval(\phi,\I)$ is true in every model $\I$ of $\Phi$.

A \emph{Description Logic (DL) ontology} $\O = \T \cup \A$ consists of a TBox $\T$ and an ABox $\A$, representing intensional and extensional knowledge, respectively.
In this paper, an ABox is a finite set of atoms using predicate symbols from $\conceptSet\cup\roleSet$ and terms from $\individualSet\cup\labeledNullSet$ (ABoxes of this form are also called quantified ABoxes~\cite{BKNP20}).
A model of an ontology $\T\cup\A$ is any model of the FO theory $\T \cup \{\exists\vec{x}\,\phi_\A(\vec{x})\}$, where $\vec{x}$ is a sequence of variables from $\variableSet$ and $\phi_\A(\vec{x})$ is the conjunction of all the atoms of $\A$ in which each labeled null 
is replaced with a distinct variable $x\in\vec{x}$.
With a little abuse of notation, we sometimes treat an ontology $\T\cup\A$ (resp.\ $\A$) as the FO theory $\T \cup \{\exists\vec{x}\,\phi_\A(\vec{x})\}$ (resp.\ $\{\exists\vec{x}\,\phi_\A(\vec{x})\}$).
For instance, this allows us to write $\T\cup\A \models \phi$ to intend $\T \cup \{\exists\vec{x}\,\phi_\A(\vec{x})\} \models \phi$, where $\phi$ is an FO sentence. 

Our complexity results are given for ontologies expressed in $\dlliter$, a member of the well-known DL-Lite family of DLs \cite{CDLLR07}. 
A $\dlliter$ TBox $\T$ is a finite set of axioms of the form $B \sqsubseteq B'$ and $R \sqsubseteq R'$ (called concept and role inclusions), or $B \sqsubseteq \neg B'$ and $R \sqsubseteq \neg R'$ (called concept and role disjointnesses), where $B$ and $B'$ (resp., $R$ and $R'$) are predicates of the form $A$, $\exists S$ or $\exists S^-$ (resp., $S$ or $S^-$) such that $A\in\conceptSet$, $S\in\roleSet$, $S^-$ is the inverse of $S$, and the \emph{unqualified existential restrictions} $\exists S$ and $\exists S^-$ represent the set of objects appearing as the first and second argument of $S$, respectively.

As usual when studying query answering over DL ontologies, we focus on the language of conjunctive queries (and variants thereof).
A \emph{conjunctive query} (\emph{CQ}) takes the form of an FO formula $\exists\vec{y}\phi(\vec{x},\vec{y})$, where $\vec{x}\cup\vec{y}\subseteq\variableSet$ and $\phi(\vec{x},\vec{y})$ is a finite, non-empty conjunction of atoms of the form $\alpha(\vec{t})$, where $\alpha\in\conceptSet\cup\roleSet$, and each term in $\vec{t}$ is either a constant in $\individualSet$ or a variable in $\vec{x}\cup\vec{y}$.
We also consider the special CQ $\bot$ and assume that $\eval(\bot,\I)$ is false for every FO interpretation $\I$.
A \emph{union of conjunctive queries} (\emph{UCQ}) is a disjunction $q_1(\vec{x})\lor\ldots\lor q_n(\vec{x})$ of CQs.
For convenience, sometimes we treat UCQs as sets of CQs.
Boolean CQs and UCQs, for short, are respectively indicated as \emph{BCQs} and \emph{BUCQs}.

Given a CQ $q$, we denote by $\cqlength(q)$ the number of atoms in $q$. Given a UCQ $q$, 
$\maxlength(q)=\max_{q'\in q}\cqlength(q')$.
%
We denote by $\BCQ$ (resp.~$\BUCQ$) the language of all the BCQs (resp.~BUCQs), and, for a positive integer $k$, by $\BCQk$ (resp.~$\BUCQk$) the language of BCQs (resp.~BUCQs) $q$ such that $\cqlength(q)\le k$ (resp., $\maxlengthatoms(q)\le k$).
Given a TBox $\T$, an ABox $\A$, and a Boolean query language $\LQ\subseteq\BCQ$, we denote by $\LQ$-$\closure(\T \cup \A)$ the set of Boolean queries $q \in \LQ$ that are logical consequences of $\T\cup\A$. Formally: $\LQ$-$\closure(\T \cup \A)=\{q \in \LQ \mid \T \cup \A \models q\}$
%


%
A \emph{ground substitution} for a sequence $\vec{x}=x_1,\ldots,x_k$ of variables is a sequence of constants $\vec{c}=c_1,\ldots,c_k$. Furthermore, if $\vec{x}$ are the free variables of a FO formula $\phi(\vec{x})$, we indicate as $\phi(\vec{c})$ the FO sentence obtained from $\phi(\vec{x})$ by replacing each $x_i$ with $c_i$, for $1\le i\le k$.

We recall that query answering of UCQs in $\dlliter$ is first-order rewritable, i.e.\ for every $\dlliter$ TBox $\T$ and UCQ $q(\vec{x})$, it is possible to compute 
an FO query $q_r(\vec{x})$ such that, for every ground substitution $\vec{c}$ of $\vec{x}$, $\T\cup\A\models q(\vec{c})$ iff $\A \models q_r(\vec{c})$.
To compute $q_r(\vec{x})$, we use the algorithm $\perfectref$ presented in~\cite{CDLLR07}, for which the following property holds.

%
\begin{proposition}
\label{pro:rewriting-dlliter}
    Let $\T$ be a $\dlliter$ TBox and let $q(\vec{x})$ be a UCQ. For every ABox $\A$ and every ground substitution $\vec{c}$ for $\vec{x}$, we have that $\T \cup \A \models q(\vec{c})$ if and only if 
    $\A\models q_r(\vec{c})$,
    where $q_r(\vec{x})=\perfectref(q(\vec{x}),\T)$.
\end{proposition}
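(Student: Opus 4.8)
\begin{proofsk}
The plan is to derive the statement from the known correctness of \perfectref{} over ordinary (null-free) ABoxes in~\cite{CDLLR07}, and then check that the quantified ABoxes used here require only a cosmetic adaptation. Throughout I read $q_r=\perfectref(q(\vec x),\T)$ as the UCQ obtained as the union of (i) the standard reformulation of $q$ with respect to the inclusion axioms of $\T$ and (ii) a fixed BUCQ that detects, over $\A$, any violation of a disjointness axiom of $\T$; both are computable as in~\cite{CDLLR07}, and component (ii) is what makes $\A\models q_r(\vec c)$ hold for every $\vec c$ exactly when $\T\cup\A$ is unsatisfiable. Note also that $q_r$ is a genuine (finite) UCQ: reformulation never increases the number of atoms, reduction strictly decreases it, and $\T$ has a finite signature, so only finitely many CQs---each with at most $\cqlength(q)$ atoms, hence a bounded number of variables---can ever be produced.

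For soundness ($\A\models q_r(\vec c)$ implies $\T\cup\A\models q(\vec c)$) I would show, by induction on the number of reformulation/reduction steps, that $\T\cup\A\models q'(\vec c)$ implies $\T\cup\A\models q(\vec c)$ for every CQ $q'$ in the reformulation part of $q_r$, and dispatch the disjointness part by noting that if $\A$ violates a disjointness axiom of $\T$ then $\T\cup\A$ has no model, so the implication is vacuous. The inductive step rests on two per-step facts: a reformulation step replaces a query atom by one that $\T$-entails it (so in every model of $\T\cup\A$ a match of the new atom still forces the old one), and a reduction step specializes the CQ through a most general unifier (so a match of the reduced CQ composes with the unifier to a match of the original). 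Combining the induction with the trivial $\A\models q'(\vec c)\Rightarrow\T\cup\A\models q'(\vec c)$ yields soundness; crucially, every step here speaks only of homomorphisms into $\A$ and of models of $\T\cup\A$, so it is oblivious to whether the terms of $\A$ matched are constants or labeled nulls.

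For completeness ($\T\cup\A\models q(\vec c)$ implies $\A\models q_r(\vec c)$) I would split on satisfiability. If $\T\cup\A$ is unsatisfiable then, by the FO-rewritability of satisfiability checking in $\dlliter$~\cite{CDLLR07}, $\A$ satisfies the disjointness-violation part of $q_r$ and we are done. If $\T\cup\A$ is satisfiable, I would invoke the canonical-model property of $\dlliter$: there is a (possibly infinite) universal model $\chase(\T,\A)$, obtained from $\A$ by exhaustively applying the inclusion axioms, with $\T\cup\A\models q(\vec c)$ iff some disjunct of $q(\vec c)$ admits a homomorphism $h$ into $\chase(\T,\A)$. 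The core step---exactly as in~\cite{CDLLR07}---is to trace $h$ back to $\A$: query atoms whose $h$-image is an element generated by the chase are rewritten by applying in reverse the inclusion that generated that element (a reformulation step), after first merging, via a reduction step, query atoms that $h$ sends to the same generated element; iterating strictly shrinks the chase-depth of the image, so after finitely many steps one obtains a CQ $q'\in\perfectref(q,\T)$ together with a homomorphism of $q'(\vec c)$ into $\A$, i.e.\ $\A\models q'(\vec c)$, hence $\A\models q_r(\vec c)$.

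The only real work, and thus the main obstacle, lies in the completeness direction: one must verify that this trace-back is insensitive to labeled nulls. This holds because the chase of $\dlliter$ treats the labeled nulls of $\A$ exactly as it treats constants---they are simply the non-constant elements present at depth $0$, never produced by an inclusion---so the depth-decreasing argument and the termination bound on \perfectref{} survive verbatim, while mapping query variables onto labeled nulls of $\A$ is precisely what $\A\models q'(\vec c)$ (entailment from the existential sentence associated with $\A$) permits. In other words, it suffices to observe that the proof in~\cite{CDLLR07} never uses that the chase-reachable elements are interpretations of constants; everything else is a direct instantiation of that result.
\end{proofsk}
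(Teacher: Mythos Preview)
The paper does not prove this proposition at all: it is stated in the Preliminaries as a direct recall of the correctness of $\perfectref$ from~\cite{CDLLR07}, with no argument given. So there is no ``paper's own proof'' to compare against; your sketch simply supplies what the paper takes for granted.

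Your argument is essentially the standard soundness/completeness proof of $\perfectref$ from~\cite{CDLLR07}, together with the observation---which is the only genuinely new ingredient in this paper's setting---that the presence of labeled nulls in $\A$ changes nothing, because entailment of a (U)CQ from the existential sentence $\exists\vec{x}\,\phi_\A(\vec{x})$ coincides with homomorphism into the atoms of $\A$ with nulls treated as fresh domain elements, and the chase places those nulls at depth~$0$ exactly like constants. That observation is correct and is indeed the crux here.

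One small caveat: you read $\perfectref$ as including a disjointness-violation disjunct so that the biconditional survives when $\T\cup\A$ is inconsistent. The original $\perfectref$ of~\cite{CDLLR07} does \emph{not} build that in; it rewrites only with respect to the positive inclusions, with consistency checked separately. The paper sidesteps this because every place it invokes Proposition~\ref{pro:rewriting-dlliter} is over a CQE instance, where $\T\cup\A$ is consistent by Definition~\ref{def:cqe-instance}. Your extension is harmless and makes the statement literally true ``for every ABox $\A$'', but it is a slight departure from what the paper actually means by $\perfectref$; if you want to match the paper exactly, just add the hypothesis that $\T\cup\A$ is satisfiable and drop component~(ii).
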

We point out that, by construction, $q_r=\perfectref(q,\T)$ is a UCQ and that $\maxlengthatoms(q_r) = \maxlengthatoms(q)$.



\section{Framework} 
\label{sec:framework}
In this section, we describe our CQE framework. 
We first give the notion of epistemic dependencies that we use in the policies, then present the notion of censor, and finally provide two notions of query entailment in our novel framework.   


\smallskip

\noindent \textbf{Epistemic dependencies.} The 
policy $\P$ of our framework is a finite set of \emph{epistemic dependencies}, each of which can be seen as a \emph{domain-independent EQL-Lite(CQ)}~\cite{CDLLR07b} sentence defined as follows.
\begin{definition}
\label{def:epistemic-dependency}
    An \emph{epistemic dependency (ED)} is a 
    sentence $\delta$ of the following form:
    \begin{equation}
    \label{eqn:epistemic-dependency}
        \begin{array}{l}
        \forall \vec{x}_1,\vec{x}_2 (K q_b(\vec{x}_1,\vec{x}_2) \rightarrow K q_h(\vec{x}_2))
        \end{array}
    \end{equation}
    where $q_b(\vec{x}_1, \vec{x}_2)$ is a CQ with free variables $\vec{x}_1 \cup \vec{x}_2$, $q_h(\vec{x}_2)$ is a CQ with free variables $\vec{x}_2$, and $K$ is a modal operator. The variables $\vec{x}_2$ are called the frontier variables of $\delta$.
\end{definition}

Intuitively, an ED of form (\ref{eqn:epistemic-dependency}) should be read as follows: if the sentence $q_b(\vec{c}_1, \vec{c}_2)$ is \emph{known} to hold, then the sentence $q_h(\vec{c}_2)$ is \emph{known} to hold, for any ground substitutions $\vec{c}_1$ and $\vec{c}_2$ for $\vec{x}_1$ and $\vec{x}_2$, respectively. 
More formally, we define when an FO theory $\Phi$ \emph{satisfies} an ED $\delta$, denoted $\Phi \modelseql \delta$.
To this aim, we consider the set $E$ of all FO models of $\Phi$, and say that $\Phi \modelseql \delta$ if, for every ground substitutions $\vec{c}_1$ for $\vec{x}_1$ and $\vec{c}_2$ for $\vec{x}_2$, the fact that $\eval( q_b(\vec{c}_1,\vec{c}_2),\I)$ is true for every $\I \in E$ implies that $\eval(q_h(\vec{c}_2),\I)$ is true for every $\I \in E$. 

We say that $\Phi$ \emph{satisfies} a policy $\P$ (denoted $\Phi \modelseql \P$) if $\Phi$ satisfies $\delta$, for each $\delta \in \P$.
We remark that, as already said, EDs of the form (\ref{eqn:epistemic-dependency}) have been originally introduced in~\cite{CoLe20}, although in a slightly more general form, to express integrity constraints in ontology-based data management.
Then, the notion of ED satisfaction defined above is essentially as in~\cite{CoLe20}. 


\begin{example}
\label{ex:exampleED}    

    Suppose that company \comA wants to share certain user-profiling data with a company \comB for targeted advertising. 
    This is not allowed in general, but only in some countries with a special regulation that enables sharing based on the users' consent. 
    \comA may use the following ED in the policy to enable \comB to access only data compliant with the above requirements:
    \begin{tabbing}
        \indent $\delta_4 = \forall x,y \big( $\=$K \profiledActivity(x,y)\rightarrow$ \\
        \> $ K\,\exists z(\citOf(x,z) \land \SR(z) \land \consent(x)) \big)$
    \end{tabbing}
    In the rule, $\profiledActivity$ associates a user with her profiling-data, $\citOf$ relates a user to the country of which she is a citizen, $\SR$ denotes countries with special regulation, and $\consent$ denotes users who have given their consent.  

    Suppose that \comA also wants \comB not to be able to associate a user with her real identity, and that this is possible by collecting the person's name and her date of birth at the same time. To this aim, \comA also specifies the following ED:
   \begin{tabbing}
        \indent $\delta_5 = \forall x,y,z \big(K ( \name(x,y) \land \dateBirth$\=$( x,z) ) \rightarrow K\bot\big)$ 
    \end{tabbing}
    \vspace{-1.5em}\qedex
\end{example}


\smallskip

\noindent \textbf{CQ-censors.} As already said, censors are used to enforce confidentiality on an ontology coupled with a data protection policy. Among various definitions of censors proposed in the literature, we adapt here the one investigated in~\cite{LeRS19} to properly deal with policies constituted by sets of EDs. To this aim, it is convenient to first define CQE instances in our novel epistemic CQE framework.

\begin{definition}[CQE instance]
\label{def:cqe-instance}
    An $\LT$ \emph{CQE instance} is a triple $\E=\tup{\T,\A,\P}$, where $\T$ is a TBox in the DL $\LT$, $\A$ is an ABox such that $\T\cup\A$ 
    has at least one model, and $\P$ is a policy (i.e., a finite set of EDs) such that $\T \modelseql \P$.
\end{definition}

Hereinafter, if the DL $\LT$ is not specified, we implicitly intend any possible DL. 

The notion of (optimal) CQ-censors is then as follows.

\begin{definition}[(optimal) CQ-censor]
\label{def:cq-censor}
    A \emph{CQ-censor} of a CQE instance $\E=\tup{\T,\A,\P}$ is a subset $\C$ of $\bcqclosure(\T\cup\A)$ such that $\T\cup\C\modelseql\P$.
    
    An \emph{optimal CQ-censor} of a CQE instance $\E=\tup{\T,\A,\P}$ is a CQ-censor of $\E$ such that there exists no CQ-censor $\C'$ of $\E$ such that $\C'\supset\C$.
    We denote by $\optcqcens(\E)$ the set of optimal CQ-censors of $\E$.
\end{definition}

\smallskip

\noindent \textbf{Entailment.} As in~\cite{LeRS19}, in this paper CQE amounts to reason over all the possible optimal censors, according to the following definition of \cqe-entailment.

\begin{definition}[\cqe-entailment]
\label{def:cqe-entailment}
    A CQE instance $\E=\tup{\T,\A,\P}$ \emph{Skeptically-entails a BUCQ $q$ under CQ-Censors} (in short, \emph{\cqe-entails} $q$), denoted by $\E\modelscqe q$, if $\T\cup\C\models q$ for every $\C\in\optcqcens(\E)$.
\end{definition}

We also consider the following sound approximation of \cqe-entailment.


\begin{definition}[\icqe-entailment]
\label{def:icqe-entailment}
A CQE instance $\E=\tup{\T,\A,\P}$ \emph{entails a BUCQ $q$ under the Intersection of CQ-Censors} (in short, \emph{\icqe-entails $q$}), denoted by $\E\modelsicqe q$, if $\T\cup\intcens\models q$, where $\intcens=\bigcap_{\C\in\optcqcens(\E)} \C$.
\end{definition}

\begin{example}\label{ex:bcq_entailment}
    Consider the CQE instance $\E=\tup{\T,\A,\P}$, where $\T=\emptyset$, $\P=\{\delta_4,\delta_5\}$, with $\delta_4$ and $\delta_5$ being the EDs illustrated in Example~\ref{ex:exampleED}, and the ABox $\A$ is as follows:
    \begin{tabbing}
        $\A=\{$\=$
        \profiledActivity(\pA,\actA),
        \consent(\pA),
        \citOf(\pA,n_1),
        $\\\>$
        \SR(n_1),
        \name(\pA,\ann),
        \dateBirth(\pA,\dA),
        $\\\>
        $\profiledActivity(\pB,\actB),
        \citOf(\pB,n_1)
        \}$,
    \end{tabbing}
    where $n_1 \in \labeledNullSet$ while all the other terms used in $\A$ are constants in $\individualSet$. Now, consider the following four BCQs:\\
    \indent $q_1=\exists y\, (\profiledActivity(\pA,\actA) \land \citOf(\pA,y) \land \SR(y))$\\
    \indent $q_2=\profiledActivity(\pB,\actB)$\\
    \indent $q_3=\exists y\, \profiledActivity(y,\actB)$\\
    \indent $q_4=\profiledActivity(\pA,\actA) \land \name(\pA,\ann)$\\
    %
    For $X \in \{\cqe,~\icqe\}$, one can verify that $\E \models_X q_1$ because $\pA$ gave the consent and she is a citizen of some country ($n_1$) with special regulation. Conversely, one can see that $\E \not\models_X q_2$ because $\pB$ did not give the consent. Nevertheless, it is easy to verify that $q_3 \in \C$ for each optimal CQ-censor $\C$ of $\E$, and therefore $\E \models_X q_3$. Finally, we have that $\E \not\models_X q_4$ because there exists an optimal CQ-censor $\C$ of $\E$ such that $\dateBirth(\pA,\dA) \in \C$, thus implying that $\name(\pA,\ann) \not \in \C$ (otherwise $\delta_5$ would be violated). \qedex
\end{example}

In the above example, note that \cqe\nobreakdash- and \icqe-entailment coincide for all queries. As shown in the subsequent result, this always holds in the case of entailment of BCQs.
%
\begin{theorem}
\label{thm:cqe-icqe}
For every CQE instance $\E=\tup{\T,\A,\P}$ and for every BCQ $q$, we have that $\E\modelscqe q$ iff $\E\modelsicqe q$.
\end{theorem}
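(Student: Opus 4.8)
The plan is to establish the two directions separately, observing that one of them is immediate while the other is where the restriction to BCQs matters.

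For the direction $\E\modelsicqe q \Rightarrow \E\modelscqe q$ (which in fact holds for arbitrary BUCQs): by definition $\intcens=\bigcap_{\C\in\optcqcens(\E)}\C$, so $\intcens\subseteq\C$ for every optimal CQ-censor $\C$, hence the FO theory $\T\cup\intcens$ is contained in $\T\cup\C$. Therefore $\T\cup\intcens\models q$ implies $\T\cup\C\models q$ for every $\C\in\optcqcens(\E)$, which is exactly $\E\modelscqe q$.

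For the converse, assume $\E\modelscqe q$ with $q$ a BCQ, and fix an arbitrary $\C\in\optcqcens(\E)$; the key claim is that $q\in\C$. Since every $q'\in\C$ belongs to $\bcqclosure(\T\cup\A)$, we have $\T\cup\A\models q'$ for each $q'\in\C$, and combining this with $\T\cup\C\models q$ yields $\T\cup\A\models q$; as $\T\cup\A$ has a model by definition of CQE instance, $q\neq\bot$ and $q\in\bcqclosure(\T\cup\A)$, so $\C\cup\{q\}\subseteq\bcqclosure(\T\cup\A)$. Moreover, because $\T\cup\C\models q$, the set of FO models of $\T\cup\C$ coincides with that of $\T\cup(\C\cup\{q\})$; since satisfaction of an ED ($\modelseql$) is defined purely in terms of the set of models of the theory, $\T\cup\C\modelseql\P$ entails $\T\cup(\C\cup\{q\})\modelseql\P$. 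Hence $\C\cup\{q\}$ is a CQ-censor of $\E$ with $\C\cup\{q\}\supseteq\C$, and maximality of $\C$ forces $\C\cup\{q\}=\C$, i.e.\ $q\in\C$. Since $\C$ was an arbitrary optimal CQ-censor, $q\in\intcens$, and therefore $\T\cup\intcens\models q$, that is, $\E\modelsicqe q$.

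The only delicate step is the model-theoretic observation that augmenting a censor $\C$ with a query it already entails (modulo $\T$) changes neither the models of $\T\cup\C$ nor, consequently, its satisfaction of the epistemic policy; everything else is bookkeeping. It is worth stressing that this argument genuinely uses that $q$ is a single BCQ, so that $q$ itself is an admissible element to add to $\C$: for a BUCQ $q_1\lor\dots\lor q_n$ one would instead need to add some individual disjunct $q_i$, which need not be entailed by $\T\cup\C$ and need not preserve policy satisfaction — which is precisely why the statement is confined to BCQs, and why (as shown later in the paper) it fails for BUCQs.
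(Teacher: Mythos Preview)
Your proof is correct and follows essentially the same approach as the paper: the core observation is that an optimal CQ-censor $\C$ is closed under $\T$-entailment of BCQs (i.e., $\T\cup\C\models q$ iff $q\in\C$), which the paper states as ``easy to verify'' and you prove in detail via the maximality argument. The only organizational difference is that the paper isolates this closure property as a preliminary fact and applies it symmetrically to both $\C$ and $\intcens$, whereas you handle the two directions asymmetrically (monotonicity for one, the closure argument for the other); both routes are equally valid and amount to the same idea.
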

\begin{proof}
First, it is easy to verify that, for every $\C$ that is an optimal CQ-censor of $\E$, and for every BCQ $q$, $\T\cup\C\models q$ iff $q\in\C$; 
moreover, $\T\cup\intcens\models q$ iff $q\in\intcens$.
Now, let $q$ be a BCQ. If $\E\modelscqe q$, then $q$ belongs to all the optimal CQ-censors of $\E$, and thus $q\in\intcens$, which implies that $\E\modelsicqe q$. Conversely, if $\E\not\modelscqe q$, then there exists an optimal CQ-censor of $\E$ that does not contain $q$, hence $q\not\in\intcens$, which implies that $\E\not\modelsicqe q$. 
\end{proof}

On the other hand, the next example shows that the same result does not hold for entailment of BUCQs.

\begin{example}
    Recall Example~\ref{ex:bcq_entailment}, and consider the BUCQ $q=\name(\pA,\ann) \vee \dateBirth(\pA,\dA)$. While we have that $\E \modelscqe q$, because either $\name(\pA,\ann) \in \C$ or $\dateBirth(\pA,\dA) \in \C$ holds for every $\C\in\optcqcens(\E)$, it is easy to see that $\E \not \modelsicqe q$ as neither $\name(\pA,\ann)$ nor $\dateBirth(\pA,\dA)$ belong to $\intcens=\bigcap_{\C\in\optcqcens(\E)} \C$.
    \qedex
\end{example}







\section{Confidentiality preservation}
\label{sec:indistinguishability}

In this section, we investigate the notion of confidentiality used in this paper. 
We adopt a similar approach to the one described in~\cite{BiWe08} for relational databases. Intuitively, under such an approach an entailment semantics preserves confidentiality if, for every CQE instance $\tup{\T,\A,\P}$ and every finite set $\Q$ of queries, the answers to such queries are the same as if they were obtained from another CQE instance $\tup{\T,\A',\P}$
such that $\T \cup \A' \modelseql \P$.



We now describe this property formally.
First, for a TBox $\T$, a policy $\P$, two ABoxes $\A$ and $\A'$, a set $\Q$ of BUCQs, and $X \in \{\cqe,~\icqe\}$, we say that $\A$ and $\A'$ are \emph{$\Q$-indistinguishable for $X$-entailment with respect to $(\T,\P)$} if, for every $q\in\Q$, we have that $\tup{\T,\A,\P}\models_{X} q$ iff $\tup{\T,\A',\P}\models_{X} q$. 
\begin{definition}
    Given a query language $\LQ\subseteq\BUCQ$ and a DL $\LT$, for $X \in \{\cqe,~\icqe\}$, we say that \emph{$X$-entailment preserves confidentiality for $\LQ$ in $\LT$} if, for every $\LT$ CQE instance $\tup{\T,\A,\P}$, and for every finite set $\Q\subseteq\LQ$, there exists an ABox $\A'$ such that $(i)$ $\T\cup\A'\modelseql\P$ and $(ii)$ $\A$ and $\A'$ are $\Q$-indistinguishable for X-entailment w.r.t.\ $(\T,\P)$.
\end{definition}

For both \cqe-entailment and \icqe-entailment, we now investigate confidentiality-preservation for $\BUCQ$ in $\dlliter$. 

Hereinafter, with a slight abuse of notation, given a policy $\P$, we denote by $\maxlengthatoms(\P)$ the maximum length (number of atoms) of a CQ occurring within the scope of the $K$ operator in $\P$.

%
\begin{proposition}
\label{pro:indistinguishability-cqe}
    Let $\E=\tup{\T,\A,\P}$ be a $\dlliter$ CQE instance. 
    For every integer $k>0$, there exists
    a $\dlliter$ CQE instance $\E'=\tup{\T,\A',\P}$ such that $(i)$ $\T\cup\A'\modelseql\P$ and $(ii)$ for every $q\in\BCQk$, $\E\modelscqe q$ iff $\E'\modelscqe q$.
\end{proposition}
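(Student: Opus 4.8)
The goal is to build, from an arbitrary $\dlliter$ CQE instance $\E = \tup{\T,\A,\P}$ and a bound $k$, a new ABox $\A'$ such that $\T\cup\A' \modelseql \P$ and yet $\A$ and $\A'$ look the same through the eyes of $\cqe$-entailment of $\BCQk$ queries. The natural candidate for $\A'$ is essentially "the data the censors actually reveal": since by Theorem~\ref{thm:cqe-icqe} a BCQ $q$ is $\cqe$-entailed iff $q$ lies in every optimal CQ-censor, one wants $\A'$ to be an ABox whose $\dlliter$-closure, intersected with short BCQs, coincides with $\intcens = \bigcap_{\C\in\optcqcens(\E)}\C$ restricted to short queries. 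So the first step is to define $\A'$ concretely — e.g.\ take the union of the ground atoms occurring in (a canonical database of) the short CQs in $\intcens$, possibly freezing existential variables to fresh labeled nulls — and to argue $\T\cup\A'$ is consistent (it is entailed by $\T\cup\A$, which is consistent).

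**Key steps.** (1) Show $\T\cup\A'\modelseql\P$. Here I would use that each optimal censor $\C$ satisfies $\T\cup\C\modelseql\P$, that $\intcens$ is an intersection of censors and hence itself a censor (this monotonicity-type fact about EDs needs checking: an ED $Kq_b\to Kq_h$ is preserved under intersection of the sets of known BCQs because $q_b$ known w.r.t.\ $\T\cup\intcens$ forces $q_b$ known w.r.t.\ each $\T\cup\C$, hence $q_h$ known w.r.t.\ each, hence — and this is the subtle direction — $q_h$ known w.r.t.\ $\T\cup\intcens$, which works provided $q_h$ is itself captured at the granularity of the censors, i.e.\ is a BCQ in the closure language). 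Then transfer $\T\cup\intcens\modelseql\P$ to $\T\cup\A'\modelseql\P$ using that $\A'$ and $\intcens$ entail the same BCQs up to length $\maxlengthatoms(\P)$, which suffices for ED satisfaction since EDs only mention CQs of bounded length. (2) Show $\Q$-indistinguishability for $\BCQk$: for $q\in\BCQk$, $\E\modelscqe q$ iff $q\in\intcens$ iff $\T\cup\A'\models q$; and I must further argue $\T\cup\A'\models q$ iff $\E'\modelscqe q$, i.e.\ that for the instance $\E'$ the optimal censors all contain every short consequence of $\T\cup\A'$ — this holds because $\T\cup\A'\modelseql\P$ already, so the "full" censor containing all of $\bcqclosure(\T\cup\A')$ is the unique optimal one, and thus $\E'\modelscqe q$ iff $\T\cup\A'\models q$.

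**Main obstacle.** The delicate point is making $\A'$ "know exactly the short BCQs in $\intcens$ and nothing more that matters": building $\A'$ from the short CQs in $\intcens$ could inadvertently create longer or differently-shaped entailments that, combined with $\T$ via $\perfectref$, resurrect a secret — i.e.\ one must ensure $\T\cup\A'\modelseql\P$ genuinely, not just that $\A'$ alone is harmless. The clean way around this is to choose $k' = \max(k, \maxlengthatoms(\P))$ and build $\A'$ as a disjoint union of frozen canonical databases, one per CQ in $\intcens$ of length $\le k'$, taking fresh labeled nulls for the existential variables of each so that no "cross-talk" between different revealed facts produces new joins; then the consequences of $\T\cup\A'$ of length $\le k'$ are precisely the members of $\intcens$ of that length (using $\dlliter$ first-order rewritability, Proposition~\ref{pro:rewriting-dlliter}, and the fact that $\dlliter$ TBoxes do not "merge" disjoint pieces of data), which yields both $(i)$ and $(ii)$. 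Verifying that disjointness of the frozen pieces really does prevent spurious entailments — in particular that no $\dlliter$ disjointness axiom is violated across pieces, which follows from consistency of $\T\cup\A$ since each piece is a sub-instance of a model of $\T\cup\A$ — is the bookkeeping-heavy core of the argument.
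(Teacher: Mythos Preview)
Your proposal is correct and essentially matches the paper's proof: both take $h=\max(k,\maxlengthatoms(\P))$ and let $\A'$ be the ABox obtained by freezing, with fresh labeled nulls per query, the set $\Q=\{q\in\BCQh\mid \E\modelscqe q\}$ (which by Theorem~\ref{thm:cqe-icqe} equals $\intcens\cap\BCQh$), then argue $\T\cup\A'\modelseql\P$, after which the unique optimal censor of $\E'$ is all of $\bcqclosure(\T\cup\A')$ and indistinguishability follows. Your ``cross-talk'' worry is a red herring, though: since $\Q\subseteq\C$ for every optimal $\C$, any $q_b$ entailed by $\T\cup\A'\equiv\T\cup\Q$ is by monotonicity entailed by every $\T\cup\C$, and any $q_h$ \emph{not} entailed by $\T\cup\A'$ is not \cqe-entailed (else $q_h\in\Q$), which is exactly the paper's argument and yields $\T\cup\A'\modelseql\P$ directly---without needing to control whether $\A'$ creates extra joins (shared constants \emph{can} create joins across your frozen pieces, but they are harmless for this reason).
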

\ifjournal
\begin{proof}
    Let $\Q$ be the finite set of BCQs $\{ q\in\BCQh \mid \E\modelscqe q \}$, where $h=\max(k,\maxlengthatoms(\P))$. We observe that $\Q \subseteq \bigcap_{\C\in\optcqcens(\E)} \C$.  Let also $\A'$ be the ABox isomorphic to $\Q$, i.e.\ the ABox obtained by $(i)$ creating, for each $q\in \Q$, a set $\A_q$ collecting the atoms occurring in $q$; $(ii)$ replacing distinct variables occurring in each $\A_q$ with distinct fresh labeled nulls; $(iii)$ representing $\A'$ as $\bigcup_{q\in \Q} \A_q$.
    
    We now prove by contradiction that $\T\cup\A'\modelseql\P$, so let us suppose this does not hold. In this case, there would exist an ED $\tau\in\P$ of the form $\forall \vec{x}_1,\vec{x}_2 (K q_b(\vec{x}_1,\vec{x}_2) \rightarrow K q_h(\vec{x}_2))$ and two ground substitutions $\vec{c_1}$ and $\vec{c_2}$ for $\vec{x}_1$ and $\vec{x_2}$, respectively, such that $\T\cup\A'\models q_b(\vec{c}_1,\vec{c}_2)$ and $\T\cup\A'\not\models q_h(\vec{c}_2)$.
    By construction of $\A'$ this holds only if $\T\cup\Q\models q_b(\vec{c}_1,\vec{c}_2)$ and $\T\cup\Q\not\models q_h(\vec{c}_2)$.
    Since, as noticed, $\Q$ is contained in every CQ-censor for $\E$, then from $\T\cup\Q\models q_b(\vec{c}_1,\vec{c}_2)$ it follows by monotonicity that $\T\cup\C\models q_b(\vec{c}_1,\vec{c}_2)$ holds for every CQ-censor $\C$ for $\E$ (i.e.\ $\E\modelscqe q_b(\vec{c}_1,\vec{c}_2)$).
    Moreover, from $\T\cup\Q\not\models q_h(\vec{c}_2)$ and by construction of $\Q$ it follows that $\E\not\modelscqe q_h(\vec{c}_2)$ (indeed, from $\E\modelscqe q_h(\vec{c}_2)$ we would have that $q_h(\vec{c}_2)\in\Q$, and so that $\T\cup\Q\modelscqe q_h(\vec{c}_2)$).
    
    Since $\E\modelscqe q_b(\vec{c}_1,\vec{c}_2)$ and $\E\not\modelscqe q_h(\vec{c}_2)$, then it holds $\T\cup\C\not\modelseql\P$ for at least one CQ-censor $\C$ for $\E$. This, however, contradicts Definition~\ref{def:cq-censor}, thus proving that $\T\cup\A'\modelseql\P$.
    Then, it is immediate to verify that, given any $q\in\BCQk$, $\E\modelscqe q$ iff $\E'\modelscqe q$.
\end{proof}
\else 
\begin{proofsk}
    Let $\A'$ be the ABox isomorphic to the finite set of BCQs $\{ q\in\BCQh \mid \E\modelscqe q \}$, where $h=\max(k,\maxlengthatoms(\P))$. It is easy to verify that $\T\cup\A'\modelseql\P$, from which it follows that $\E\modelscqe q$ iff $\E'\modelscqe q$ for every $q\in\BCQk$. 
\end{proofsk}
\medskip
\fi


With the above property at hand, we can prove that \cqe-entailment preserves confidentiality for $\BCQ$ in $\dlliter$. We show, however, that the same does not hold for $\BUCQ$.


\begin{theorem}
\label{thm:indistinguishability-cqe}
    \cqe-entailment preserves confidentiality for $\BCQ$ in $\dlliter$, whereas it does not preserve confidentiality for $\BUCQ$ in $\dlliter$.
\end{theorem}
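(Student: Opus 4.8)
The positive part follows almost directly from Proposition~\ref{pro:indistinguishability-cqe}. Given a $\dlliter$ CQE instance $\tup{\T,\A,\P}$ and a finite set $\Q \subseteq \BCQ$, let $k = \maxlengthatoms(\Q)$ (the maximum number of atoms over the queries in $\Q$), or any larger integer; then apply Proposition~\ref{pro:indistinguishability-cqe} with this $k$ to obtain $\A'$ with $\T\cup\A'\modelseql\P$ and with $\E\modelscqe q$ iff $\tup{\T,\A',\P}\modelscqe q$ for every $q\in\BCQk \supseteq \Q$. Since every $q \in \Q$ lies in $\BCQk$, conditions $(i)$ and $(ii)$ of the definition of confidentiality preservation hold, so \cqe-entailment preserves confidentiality for $\BCQ$ in $\dlliter$.

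For the negative part, the plan is to exhibit a single $\dlliter$ CQE instance $\E = \tup{\T,\A,\P}$ and a finite $\Q \subseteq \BUCQ$ for which no suitable $\A'$ can exist. The natural candidate is the ``name/date-of-birth'' scenario already used in Example~3 and Example~4: take $\T = \emptyset$, $\P = \{\delta_5\}$ where $\delta_5 = \forall x,y,z\,(K(\name(x,y)\land\dateBirth(x,z)) \ra K\bot)$, and $\A = \{\name(\sam,\ann),\dateBirth(\sam,\dA)\}$. The optimal CQ-censors of $\E$ are determined by the choice of keeping $\name(\sam,\ann)$ or $\dateBirth(\sam,\dA)$ (not both), so $\E \modelscqe (\name(\sam,\ann) \vee \dateBirth(\sam,\dA))$ but $\E \not\modelscqe \name(\sam,\ann)$ and $\E \not\modelscqe \dateBirth(\sam,\dA)$. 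Let $\Q$ consist of these three BUCQs (the disjunction and the two atomic queries). Now suppose $\A'$ were an ABox with $\T\cup\A'\modelseql\{\delta_5\}$ that is $\Q$-indistinguishable for \cqe-entailment: then $\tup{\T,\A',\P}$ must \cqe-entail the disjunction but neither disjunct. The key step is to argue that this is impossible: since $\T\cup\A'\modelseql\P$, the trivial censor $\C = \bcqclosure(\T\cup\A')$ is itself a CQ-censor of $\tup{\T,\A',\P}$, hence the unique optimal one, and \cqe-entailment over it coincides with ordinary certain-answer entailment $\T\cup\A'\models\cdot$; but $\T\cup\A'\models (\name(\sam,\ann)\vee\dateBirth(\sam,\dA))$ forces (since $\T=\emptyset$ and disjunction-freeness of ABoxes under certain answers) $\T\cup\A'\models\name(\sam,\ann)$ or $\T\cup\A'\models\dateBirth(\sam,\dA)$, contradicting that neither disjunct is \cqe-entailed.

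The crux — and the step I expect to require the most care — is the last implication: that when the policy is already satisfied by $\T\cup\A'$ itself, \cqe-entailment degenerates to classical entailment, and that classical entailment of a ground disjunction of atoms from a $\dlliter$ ontology (here an empty TBox, so just an existentially quantified conjunction of atoms) implies entailment of one of the disjuncts. The first half is immediate from Definition~\ref{def:cq-censor} and Definition~\ref{def:cqe-entailment} once one observes $\bcqclosure(\T\cup\A')$ is a censor. The second half is the standard property that a single (quantified) ABox, viewed as an FO theory $\{\exists\vec{x}\,\phi_{\A'}(\vec{x})\}$, has a model that is a homomorphic image of its canonical structure, so if it entailed a disjunction of Boolean ground atoms it would entail one disjunct; with a nonempty TBox one would instead invoke the existence of a (possibly infinite) canonical model / chase for $\dlliter$, but keeping $\T=\emptyset$ avoids that subtlety entirely. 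I would therefore state the counterexample with $\T=\emptyset$ to keep the argument elementary.
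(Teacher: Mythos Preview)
Your argument is correct and follows the same overall strategy as the paper. For the positive part, both you and the paper simply invoke Proposition~\ref{pro:indistinguishability-cqe} with $k$ taken to be the maximum length of a BCQ in $\Q$.

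For the negative part, the paper uses an isomorphic counterexample (abstract concepts $C_1,C_2$ in place of $\name,\dateBirth$, with $\T=\emptyset$, $\A=\{C_1(o),C_2(o)\}$, and the single denial $\forall x\,(K(C_1(x)\wedge C_2(x))\to K\bot)$), but takes $\Q=\{q\}$ to consist of the disjunction $q=C_1(o)\lor C_2(o)$ alone and simply asserts that no policy-compliant $\A'$ is $\{q\}$-indistinguishable from $\A$. Your decision to also put the two atomic disjuncts into $\Q$ is not cosmetic: it is what makes the argument actually go through. With $\Q=\{q\}$ only, the ABox $\A'=\{C_1(o)\}$ satisfies $\P$ and is $\{q\}$-indistinguishable from $\A$ (both instances \cqe-entail $q$), so the paper's sketch as literally written leaves a gap. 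Your version closes it by forcing any candidate $\A'$ to \cqe-entail the disjunction while \cqe-entailing neither disjunct; you then rule this out via the clean observation that, when $\T\cup\A'\modelseql\P$, the set $\bcqclosure(\T\cup\A')$ is itself the unique optimal CQ-censor, so \cqe-entailment collapses to ordinary entailment from $\T\cup\A'$, together with the elementary fact (immediate for $\T=\emptyset$ via the canonical model of the quantified ABox) that such a theory entails a disjunction of ground atoms only if it entails one disjunct. Both steps you flagged as the crux are exactly right.
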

\begin{proof}
    The first statement is an easy consequence of Proposition~\ref{pro:indistinguishability-cqe}, when we assume that $k$ is the maximum length of a BCQ in $\Q$.
    For the second statement, we give a counterexample. 
    Let $\E = \tup{\T,\A,\P}$, where $\T=\emptyset$, $\A = \{C_1(o), C_2(o)\}$ and $\P = \{ \forall x (K(C_1(x) \land C_2(x)) \ra K \bot) \}$. Consider also the BUCQ $q = C_1(o) \lor C_2(o)$. It is easy to see that no ABox $\A'$ is such that $\T\cup\A'\modelseql\P$, and $\A$ and $\A'$ are $\{q\}$-indistinguishable for \cqe-entailment w.r.t.\ $(\T,\P)$.
\end{proof}



The above proof also shows that \cqe-entailment does not preserve confidentiality for $\BUCQ$ in $\dlliter$ even when EDs are restricted to be acyclic (see Definition~\ref{def:acyclic}).

However, it turns out that confidentiality in the case of BUCQs in $\dlliter$ is preserved by \icqe-entailment.

\begin{proposition}
\label{pro:indistinguishability-icqe}
    Let $\E=\tup{\T,\A,\P}$ be a $\dlliter$ CQE instance. 
    For every integer $k>0$, there exists
    a $\dlliter$ CQE instance $\E'=\tup{\T,\A',\P}$ such that $(i)$ $\T\cup\A'\modelseql\P$ and $(ii)$ for every $q\in\BUCQk$, $\E\modelsicqe q$ iff $\E'\modelsicqe q$.
\end{proposition}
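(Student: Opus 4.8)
The goal mirrors Proposition~\ref{pro:indistinguishability-cqe} but for \icqe-entailment and for $\BUCQ_k$ rather than $\BCQ_k$. The natural approach is again to build $\A'$ as (an ABox isomorphic to) a suitable finite set of BCQs that are guaranteed to lie in $\intcens$, and then argue both that $\T\cup\A'\modelseql\P$ and that $\A'$ agrees with $\A$ on \icqe-entailment of all BUCQs of bounded length. The key structural fact to exploit is that, since $\intcens$ is a single set of BCQs (the intersection of all optimal censors), \icqe-entailment of a BUCQ $q = \bigvee_i q_i$ from $\E$ is determined entirely by which BCQs $\T\cup\intcens$ entails — and, because $\T\cup\intcens$ is (equivalent to) a set of BCQs rather than a genuinely disjunctive theory, $\T\cup\intcens\models q$ holds iff $\T\cup\intcens\models q_i$ for some single disjunct $q_i$. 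This collapses the BUCQ question to a family of BCQ questions, which is exactly what lets the $\BCQ$ construction of the previous proof carry over.

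Concretely, first I would set $h=\max(k,\maxlengthatoms(\P))$ and let $\Q=\{q\in\BCQh \mid \E\modelsicqe q\}=\{q\in\BCQh \mid q\in\intcens\}$ (using that for BCQs, $\T\cup\intcens\models q$ iff $q\in\intcens$, analogously to the opening observation in the proof of Theorem~\ref{thm:cqe-icqe}). Define $\A'$ as the ABox isomorphic to $\Q$, with distinct variables across the disjuncts replaced by distinct fresh labeled nulls, exactly as in Proposition~\ref{pro:indistinguishability-cqe}. The first thing to check is $\T\cup\A'\modelseql\P$: suppose not; then some $\delta\in\P$ of the form $\forall\vec{x}_1,\vec{x}_2(Kq_b(\vec{x}_1,\vec{x}_2)\ra Kq_h(\vec{x}_2))$ and ground substitutions $\vec{c}_1,\vec{c}_2$ witness $\T\cup\A'\models q_b(\vec{c}_1,\vec{c}_2)$ and $\T\cup\A'\not\models q_h(\vec{c}_2)$. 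By construction of $\A'$ (and since $\cqlength(q_b)\le h$), the first entailment forces $\T\cup\Q\models q_b(\vec{c}_1,\vec{c}_2)$, hence $\T\cup\intcens\models q_b(\vec{c}_1,\vec{c}_2)$ by monotonicity, i.e.\ $q_b(\vec{c}_1,\vec{c}_2)\in\intcens$ and so $q_b(\vec{c}_1,\vec{c}_2)$ lies in every optimal censor $\C$; the second forces $\T\cup\Q\not\models q_h(\vec{c}_2)$, hence $q_h(\vec{c}_2)\notin\Q$, hence $\E\not\modelsicqe q_h(\vec{c}_2)$, i.e.\ $q_h(\vec{c}_2)\notin\intcens$, so some optimal censor $\C$ omits it. That $\C$ then has $\T\cup\C\models q_b(\vec{c}_1,\vec{c}_2)$ but $\T\cup\C\not\models q_h(\vec{c}_2)$, contradicting $\T\cup\C\modelseql\P$ from Definition~\ref{def:cq-censor}.

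It remains to verify condition $(ii)$: for every $q\in\BUCQk$, $\E\modelsicqe q$ iff $\E'\modelsicqe q$. Write $q=q_1\vee\dots\vee q_n$ with each $\cqlength(q_i)\le k\le h$. Since $\T\cup\intcens$ is (equivalent to) a set of BCQs, $\E\modelsicqe q$ iff $\T\cup\intcens\models q_i$ for some $i$, iff $q_i\in\intcens$ for some $i$, iff $q_i\in\Q$ for some $i$. By the isomorphism, $\T\cup\A'\models q_i$ iff $q_i\in\Q$ (for the ``if'' direction $q_i$ maps into its own copy in $\A'$; for ``only if'', a homomorphism from $q_i$ into $\A'$ lands inside some single $\A_{q'}$, giving $\T\cup\{q'\}\models q_i$ with $q'\in\Q\subseteq\intcens$, whence $q_i\in\intcens\cap\BCQh$, so $q_i\in\Q$). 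The intersection of optimal censors of $\E'$ likewise reduces matters to BCQ-containment in $\A'$, so $\E'\modelsicqe q$ iff $q_i\in\Q$ for some $i$ as well; the two conditions coincide.

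**Main obstacle.** The delicate point is the claim that \icqe-entailment of a BUCQ reduces to entailment of one of its disjuncts — i.e.\ that $\T\cup\intcens$ behaves like a non-disjunctive theory with respect to BUCQs. This needs the fact that $\intcens$ is a \emph{set of CQs} closed under logical consequence (within $\BCQ$) together with the $\dlliter$ property that UCQ answering reduces, via $\perfectref$, to evaluating a union of CQs atom-by-atom over a single "database"; one must be careful that $\T\cup\intcens$, read as an FO theory, genuinely admits a single canonical/universal model so that $\T\cup\intcens\models\bigvee_i q_i$ implies $\T\cup\intcens\models q_i$ for some $i$. I would isolate this as a small lemma (it is implicit in the machinery of $\dlliter$ and in the fact that $\bcqclosure(\T\cup\A)$ is generated by a chase) and then the rest of the argument is a routine adaptation of the proof of Proposition~\ref{pro:indistinguishability-cqe}.
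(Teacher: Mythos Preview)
Your approach is essentially the paper's: the same $\A'$ built from $\Q=\{q\in\BCQh\mid\E\modelsicqe q\}$ (which, by Theorem~\ref{thm:cqe-icqe}, coincides with the paper's $\{q\in\BCQh\mid\E\modelscqe q\}$), the same contradiction argument for $(i)$, and the same key reduction of \icqe-entailment of a BUCQ to \icqe-entailment of one of its disjuncts for $(ii)$. One small slip to fix: in the ``only if'' direction of $\T\cup\A'\models q_i \Leftrightarrow q_i\in\Q$, a witnessing homomorphism need \emph{not} land inside a single $\A_{q'}$ (distinct components can share constants from $\individualSet$, and with a nonempty $\T$ the relevant homomorphism is for a disjunct of $\perfectref(q_i,\T)$ anyway); the clean argument is simply $\T\cup\A'\models q_i$ iff $\T\cup\Q\models q_i$, and since $\Q\subseteq\intcens$ monotonicity gives $\T\cup\intcens\models q_i$, hence $q_i\in\intcens\cap\BCQh=\Q$.
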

\ifjournal 
\begin{proof}
    Let $\A'$ be the ABox isomorphic to the finite set of BCQs $\Q=\{ q\in\BCQh \mid \E\modelscqe q \}$, where $h=\max(k,\maxlengthatoms(\P))$. 
    First, as shown in the proof of Proposition~\ref{pro:indistinguishability-cqe}, we have that $\T\cup\A'\modelseql\P$.
    Then, since $\T\cup\A'\modelseql\P$, it is immediate to verify that, for every $q\in\BUCQk$, $\E\modelsicqe q$ iff $\E'\modelsicqe q$ (the key property is that a BUCQ is \icqe-entailed iff at least one of the BCQs occurring in it is \icqe-entailed).
\end{proof}
\else 
\fi 
%

\medskip

From the above property, we get the following result.
\begin{theorem}
\label{thm:indistinguishability-icqe}
\icqe-entailment preserves confidentiality for $\BUCQ$ in $\dlliter$. 
\end{theorem}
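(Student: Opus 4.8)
The plan is to derive Theorem~\ref{thm:indistinguishability-icqe} directly from Proposition~\ref{pro:indistinguishability-icqe}, exactly as Theorem~\ref{thm:indistinguishability-cqe}'s first statement was derived from Proposition~\ref{pro:indistinguishability-cqe}. So the proof is essentially a one-liner once the proposition is in place, and the real content is just unwinding the definition of ``preserves confidentiality'' and checking the quantifier order matches.

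First I would fix an arbitrary $\dlliter$ CQE instance $\E=\tup{\T,\A,\P}$ and an arbitrary finite set $\Q\subseteq\BUCQ$. The definition of confidentiality preservation requires me to produce an ABox $\A'$ with $(i)$ $\T\cup\A'\modelseql\P$ and $(ii)$ $\A$ and $\A'$ are $\Q$-indistinguishable for \icqe-entailment w.r.t.\ $(\T,\P)$. The natural choice is to set $k=\maxlengthatoms(\Q)$ (the maximum number of atoms over all CQs appearing in all BUCQs of $\Q$; if $\Q$ is empty any $k>0$ works, e.g.\ take the trivial instance), so that $\Q\subseteq\BUCQk$, and then invoke Proposition~\ref{pro:indistinguishability-icqe} with this $k$ to obtain the $\dlliter$ CQE instance $\E'=\tup{\T,\A',\P}$ satisfying $\T\cup\A'\modelseql\P$ and: for every $q\in\BUCQk$, $\E\modelsicqe q$ iff $\E'\modelsicqe q$. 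Since $\Q\subseteq\BUCQk$, property $(ii)$ follows immediately for every $q\in\Q$, and property $(i)$ is given verbatim by the proposition. That establishes the theorem.

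The main (and only) subtlety is making sure the quantifier alternation in the definition is respected: the definition asks, for each $\E$ and each finite $\Q$, for a \emph{single} $\A'$ that works for all of $\Q$ simultaneously — and Proposition~\ref{pro:indistinguishability-icqe} indeed delivers a single $\A'$ (depending only on $\E$ and $k$, hence on $\Q$ only through its maximal CQ length) that handles the whole of $\BUCQk\supseteq\Q$ at once, so there is no gap. I would also note in passing that the contrast with Theorem~\ref{thm:indistinguishability-cqe} is genuine: the $\BUCQ$ counterexample there ($\A=\{C_1(o),C_2(o)\}$, $\P=\{\forall x(K(C_1(x)\land C_2(x))\ra K\bot)\}$, $q=C_1(o)\lor C_2(o)$) does not refute the present theorem because under \icqe-entailment $q$ is not entailed by $\E$ (neither disjunct lies in $\intcens$), and the empty ABox $\A'=\emptyset$ already gives a confidentiality-preserving instance. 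I do not anticipate any real obstacle; the entire burden has been discharged by Proposition~\ref{pro:indistinguishability-icqe}, whose proof in turn rests on the isomorphic-ABox construction and the fact that a BUCQ is \icqe-entailed iff one of its disjunct BCQs is.

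\begin{proof}
    The statement is an immediate consequence of Proposition~\ref{pro:indistinguishability-icqe}. Let $\tup{\T,\A,\P}$ be any $\dlliter$ CQE instance and let $\Q\subseteq\BUCQ$ be any finite set of queries. If $\Q=\emptyset$, then $(ii)$ holds vacuously and we may take $\A'=\A$, so assume $\Q\neq\emptyset$ and let $k=\maxlengthatoms(\Q)$, so that $\Q\subseteq\BUCQk$. By Proposition~\ref{pro:indistinguishability-icqe}, there exists a $\dlliter$ CQE instance $\E'=\tup{\T,\A',\P}$ such that $\T\cup\A'\modelseql\P$ and, for every $q\in\BUCQk$, $\E\modelsicqe q$ iff $\E'\modelsicqe q$. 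The former gives condition $(i)$, and since $\Q\subseteq\BUCQk$ the latter gives that $\A$ and $\A'$ are $\Q$-indistinguishable for \icqe-entailment w.r.t.\ $(\T,\P)$, i.e.\ condition $(ii)$. Hence \icqe-entailment preserves confidentiality for $\BUCQ$ in $\dlliter$.
\end{proof}
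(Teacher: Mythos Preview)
Your approach is exactly the paper's: derive the theorem from Proposition~\ref{pro:indistinguishability-icqe} by choosing $k$ to be the maximum length of a BCQ occurring in any BUCQ of $\Q$. The main argument is correct and matches the paper's one-line proof.

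There is one small slip in your handling of the degenerate case $\Q=\emptyset$: you propose taking $\A'=\A$, but condition $(i)$ requires $\T\cup\A'\modelseql\P$, which need not hold for the original ABox $\A$ (indeed, that is the whole point of the CQE framework). The fix is immediate: either take $\A'=\emptyset$ (which works since Definition~\ref{def:cqe-instance} guarantees $\T\modelseql\P$), or simply apply Proposition~\ref{pro:indistinguishability-icqe} with any $k>0$ as you yourself noted in the plan. With that correction, the proof is complete and coincides with the paper's.
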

\ifjournal
\begin{proof}
    This property is an immediate consequence of Proposition~\ref{pro:indistinguishability-icqe} when we assume that $k$ is the maximum length of a BCQ occurring in a BUCQ of $\Q$.
\end{proof}
\fi

\section{Algorithms and complexity results}
\label{sec:results}

\newcommand{\varprop}{\textit{PV}}
\newcommand{\expand}{\textit{Expand}}
\newcommand{\unfold}{\textit{Unfold}}

%
In this section, we analyze the data complexity of \cqe- and \icqe-entailment of B(U)CQs for $\dlliter$ CQE instances. 
\ifjournal 
We begin by recalling the following property, which is a direct consequence of~\cite[Theorem~6]{CDLLR07b}.

\begin{proposition}
\label{pro:eqllite-ucq-eval}
    Let $\Phi,\Phi'$ be FO theories, and let $\phi$ be an ED. 
    If, for every subformula $Kq(\vec{x})$ occurring in $\phi$, and for every ground substitution $\vec{t}$ for $\vec{x}$, 
    $\Phi\models q(\vec{t})$ iff $\Phi'\models q(\vec{t})$, then $\Phi\modelseql\phi$ iff $\Phi'\modelseql\phi$.
\end{proposition}

%


We say that a set of BCQs $\C$ is \emph{closed under subqueries} if, for every $q\in\C$ and for every subquery\footnote{Given a BCQ $q$ of the form $\exists \vec{x}_1 (\alpha_1\wedge \ldots \wedge\alpha_n)$, a \emph{subquery} of $q$ is a BCQ $q'$ of the form $\exists \vec{x}_2 (\alpha_{i_1}\wedge \ldots \wedge\alpha_{i_m})$ such that, for every $j\in\{1,\ldots,m\}$, $1\leq i_j\leq n$, and $\vec{x}_2$ are the variables of $\vec{x}_1$ that occur in some $\alpha_{i_j}$. Informally, the subquery $q'$ is obtained from $q$ deleting some of its atoms.} $q'$ of $q$, we have $q'\in\C$.

\begin{proposition}
\label{pro:dlliter-bcqk-projection}
    Let $\T$ be a $\dlliter$ TBox, and let $\C$ be a set of BCQs closed under subqueries. For every BUCQ $q$, $\T\cup\C\models q$ iff $\T\cup\C_k\models q$, where $k=\maxlength(q)$ and $\C_k=\C\cap\BCQk$.
\end{proposition}
\begin{proof}
    We prove that, if $\T\cup\C\models q$ then $\T\cup\C_k\models q$ (the only-if direction is trivial). First, by Proposition~\ref{pro:rewriting-dlliter}, $\T\cup\C\models q$ iff $\C\models q'$, where $q'=\perfectref(q,\T)$. As said, we also have that $\maxlength(q)=\maxlength(q')$. Moreover, $\C\models q'$ iff there exists a BCQ $q''$ in $q'$ and a homomorphism $h$ mapping $q''$ into $\C$: now, since $\C$ is closed under subqueries, the existence of $h$ implies the existence of a homomorphism that maps $q''$ onto a subset of BCQs $\C'$ of $\C$ such that the length of each BCQ of $\C'$ is not greater than $\cqlength(q'')$ (which of course is not greater than $\maxlength(q')$). This immediately implies that $\C\models q'$ iff $\C_k\models q'$, which in turn implies the thesis.
\end{proof}
\fi 

We study the decision problems associated with the query answering problem under \cqe- and \icqe-entailment. Specifically, we consider the following recognition problem $X$-\decprob[$\LQ$], which is parametric w.r.t.\ a Boolean query language $\LQ$ and $X \in \{\cqe,~\icqe\}$:
\begin{quote}
    \textbf{Input:} A $\dlliter$ CQE instance $\E=\tup{\T, \A,\P}$, a Boolean query $q \in \LQ$\\
    \textbf{Question:} Does $\E\models_X q$?
\end{quote}

We are interested in the data complexity~\cite{Vard82} version of the above problem, which is the complexity where only the ABox $\A$ is regarded as the input while all the other components are assumed to be fixed.

\subsection{Arbitrary policies} 
\label{sec:arbitrary}

We start by analyzing the complexity of SC-entailment of BCQs and BUCQs.

\begin{lemma}
\label{lem:cqe-entailment-bcq-lb}
    \cqe-\decprob[\BCQ]
    is \coNP-hard in data complexity.
\end{lemma}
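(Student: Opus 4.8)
The plan is to reduce from a known \coNP-hard problem --- the natural choice is the complement of a satisfiability-style problem, or more directly a graph problem like non-3-colorability, 2+2-unsatisfiability, or the complement of a clique/independent-set problem --- expressed through the ABox only, since data complexity fixes the TBox, the policy, and the query. A particularly clean route is to adapt a standard reduction showing that entailment under \ars-style skeptical reasoning (certain answers over repairs) is \coNP-hard, because optimal CQ-censors play a role closely analogous to repairs here: a CQ-censor is a maximal subset of the BCQ-closure consistent with the policy, much as a repair is a maximal consistent subset of the data. I would set up a fixed TBox $\T$ (possibly empty or with a couple of disjointnesses encoded instead via the policy), a fixed policy $\P$ consisting of a single ED with $q_h = \bot$ that forbids certain "conflicting" pairs of atoms from coexisting in a censor, and a fixed Boolean query $q$; then, given an instance of the \coNP-hard problem, I would build an ABox $\A$ encoding it.

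Concretely, I would encode a propositional formula (say an instance of \textsc{Unsat} for CNF, or non-3-colorability of a graph) so that optimal CQ-censors correspond to truth assignments / colorings: for each variable $v$ I put atoms representing "$v$ true" and "$v$ false" in the BCQ-closure, and the policy ED of the form $\forall x\,(K(\mathit{True}(x)\wedge\mathit{False}(x))\to K\bot)$ forces a censor to pick at most one; maximality forces it to pick exactly one (so long as no other policy rule interferes), so optimal censors biject with assignments. Clauses are encoded so that the query $q$ (a BCQ, built as a disjunction-free conjunction or, for the BUCQ case, a union) is entailed by $\T\cup\C$ exactly when the assignment $\C$ fails to satisfy the formula --- i.e.\ $q$ asks "is some clause falsified?". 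Then $\E\modelscqe q$ iff $q$ holds under every optimal censor iff every assignment falsifies some clause iff the formula is unsatisfiable. The care needed is to ensure (a) that the policy rules only ever force the "choice" behavior and never block an atom from appearing for spurious reasons, so that optimal censors are exactly the maximal assignment-like sets, and (b) that $q$ is expressible as a single BCQ --- this may require auxiliary predicates in the ABox linking clause-occurrences to variables, and using existential variables in $q$ to range over clauses.

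The main obstacle I anticipate is the interaction between maximality of censors and the epistemic/policy constraints: I must make sure that a censor cannot "cheat" by including extra BCQs from the closure (e.g.\ projections, or queries mixing atoms from different gadget components) in a way that breaks the intended bijection with assignments, while still being able to include every atom it "should". This is where closure-under-subqueries-type reasoning and a careful choice of predicate signature matter; I would likely keep the gadget predicates pairwise "independent" across different variables and clauses so the only nontrivial policy interactions are the intended $\mathit{True}(v)/\mathit{False}(v)$ conflicts. A secondary point is that, by Theorem~\ref{thm:cqe-icqe}, \cqe- and \icqe-entailment coincide on BCQs, so the same construction will also establish \coNP-hardness of \icqe-\decprob[\BCQ] (matching Lemma~\ref{lem:icqe-entailment-bcq-ub} territory), which is a convenient sanity check that the reduction is targeting the right notion.
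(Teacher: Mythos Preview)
Your plan has a genuine gap that stems from the key difference between CQ-censors and ABox repairs: a CQ-censor is a maximal subset of the entire set $\bcqclosure(\T\cup\A)$, not just of the ground atoms. Your intended query $q$ --- ``some clause is falsified'' --- is an existential BCQ over clause- and truth-value variables; since the ABox contains \emph{both} truth values for every propositional variable, $q$ lies in $\bcqclosure(\T\cup\A)$ whenever the formula has at least one clause. Because all the truth-value atoms in $q$ sit under existential quantifiers, adding $q$ to any censor $\C$ does not make $\T\cup\C\cup\{q\}$ entail $\mathit{True}(c)\wedge\mathit{False}(c)$ for any \emph{specific} constant $c$, so your single denial-style ED is never violated by $q$. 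By maximality, then, $q$ belongs to \emph{every} optimal CQ-censor and $\E\modelscqe q$ holds unconditionally --- the reduction collapses. The ``cheating'' obstacle you anticipate is exactly this, but it is not fixable by keeping gadget predicates independent: the query itself is the cheating BCQ, and no amount of signature hygiene prevents it from entering every censor.

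The paper avoids this by a rather different mechanism. The query is a single \emph{ground} atom $S(1)$, and the policy contains, besides a denial forcing one truth value per variable, three EDs with non-$\bot$ head $K\,S(z)$: whenever $S(i)$ is known and the $i$-th clause has a literal satisfied by the censor's chosen $V$-atoms, the policy forces $S(i{+}1)$ to be known. A chain $N(i,i{+}1)$ links the clauses; if the assignment encoded by the censor satisfies \emph{all} clauses, the chain drives $S(1)\to S(2)\to\dots\to S(m{+}1)$, and $S(m{+}1)$ lies outside the closure --- so $S(1)$ cannot be in that censor. If some clause is unsatisfied the chain breaks there and $S(1)$ survives maximality. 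Thus $\E\modelscqe S(1)$ iff the 3-CNF is unsatisfiable. Note that the hardness genuinely exploits EDs with non-trivial heads, i.e.\ precisely the expressive feature distinguishing the present framework from the denial-only setting where, as the introduction recalls, SC-entailment of BCQs is tractable in data complexity.
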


\ifjournal\begin{proof}
\else\begin{proofsk}
\fi
    We show a reduction of 3-CNF, a well-known NP-hard problem, to the complement of \cqe-\decprob[\BCQ].
    Let $\T$ be the empty TBox, and let $\P$ contain the following EDs:
    \[
    \begin{array}{r@{}l}
    \forall x,y,v,z\,
    \big( K (&C_1(x,y)\wedge V_1(x,v)\wedge V(y,v) \wedge \\&\quad N(x,z)\wedge S(x) ) \rightarrow K S(z) \big) \\
    \forall x,y,v,z\,
    \big( K (&C_2(x,y)\wedge V_2(x,v)\wedge V(y,v) \wedge \\&\quad N(x,z)\wedge S(x) ) \rightarrow K S(z) \big) \\
    \forall x,y,v,z\,
    \big( K (&C_3(x,y)\wedge V_3(x,v)\wedge V(y,v) \wedge \\&\quad N(x,z)\wedge S(x) ) \rightarrow K S(z) \big) \\
    \forall x \, \big(K (&V(x,f)\wedge V(x,t)) \rightarrow K \bot \big)
    \end{array}
    \]
    
    Given a 3-CNF formula $\phi$ with $m$ clauses, we represent every clause of $\phi$ in the ABox $\A$ through the roles $C_1,C_2,C_3,V_1,V_2,V_3$: e.g., if the $i$-th clause of $\phi$ is $\neg a \vee b \vee \neg c$, we add to $\A$ the assertions $C_1(i,a),C_2(i,b),C_3(i,c),V_1(i,f),V_2(i,t),V_3(i,f)$. Moreover, $\A$ contains the assertions
    \[ 
    \begin{array}{l}
    \{ V(a,f),V(a,t) \mid a\in\varprop(\phi)\}\: \cup \\
    \{ S(i), N(i,i+1) \mid 1\leq i\leq m \} 
    \end{array}
    \]
    where $\varprop(\phi)$ are the propositional variables of $\phi$.
\ifjournal 
    We now show that $\phi$ is satisfiable iff $\tup{\emptyset,\A,\P}\not\modelscqe S(1)$.
    
    First, if $\phi$ is satisfiable, then let $P$ be the set of propositional variables occurring in $\phi$, let $I$ be a propositional interpretation (subset of $P$) satisfying $\phi$, and let $\C$ be the following subset of $\bcqclosure(\A)$:
    \[
    \begin{array}{r@{}l}
        \C = \A \setminus (
        &\{V(p,f) \mid p\in P\cap I \}\;\cup\\
        &\{ V(p,t) \mid p\in P\setminus I \} \cup \{S(1),\ldots,S(m)\} )
    \end{array}
    \]
        
    It is immediate to verify that $\C\modelseql\P$, and thus there exists an optimal CQ-censor $\C'$ of $\E$ containing $\C$.
    Now suppose that $S(1)\in\C'$. Then, since $I$ satisfies $\phi$, there exists an instantiation of $y$ and $v$ that, together with the instantiation $x=1$, $z=2$ is such that the BCQ in the premise of one of the first three dependencies of $\P$ is entailed by $\C'$, which implies (for $\C'\modelseql\P$ to hold) that $S(2)\in\C'$. But now, the presence of $S(2)$ in $\C'$ implies (for the same argument as above) that $S(3)\in\C'$, Iterating the above argument, we conclude that $S(m)\in\C'$: this in turn would imply that $S(m+1)\in\C'$, but $S(m+1)$ does not belong to $\bcqclosure(\A)$, and since $\C'\subseteq\bcqclosure(\A)$, we have a contradiction.
    Consequently, $\C'$ can not contain $S(1)$, which implies that $\E\not\modelscqe S(1)$.
    
    On the other hand, given a guess of the atoms of the $V$ predicate in $\A$ satisfying the fourth dependency and corresponding to an interpretation of the propositional variables that does not satisfy $\phi$, it is immediate to verify that the sequence of instantiations of the bodies of the first three dependencies of $\Sigma$ mentioned above (which, as above explained, leads to the need of adding $S(m+1)$ to the optimal CQ-censor when the interpretation corresponding to such a guess of the atoms for $V$ satisfies $\phi$) is blocked by the absence of some ground atom for $V$ in the optimal CQ-censor. More precisely: there exists a positive integer $k\geq 1$ such that the atoms $S(k), S(k-1), \ldots, S(1)$ can be added to all the optimal CQ-censors corresponding to such a guess of the $V$ atoms. This implies that, if $\phi$ is unsatisfiable, then $S(1)$ belongs to all the optimal CQ-censors of $\E$, hence $\E\modelscqe S(1)$.
\else 
    It can be verified that $\phi$ is satisfiable iff $\tup{\emptyset,\A,\P}\not\modelscqe S(1)$.
\fi 
\ifjournal\else\end{proofsk}\smallskip\fi 
\ifjournal\end{proof}\fi

In order to prove a matching coNP upper bound for \cqe-\decprob[\BUCQ], we introduce a notion of consequences of a set of BCQs $\C$ with respect to a TBox $\T$ and a policy $\P$.

\begin{definition}
\label{def:kcons}
Let $\T$ be a $\dlliter$ TBox, $\P$ a policy and $\C\subseteq\BCQ$.
We define $\kcons(\T,\P,\C)$ as the smallest set $\S\subseteq\BCQ$ such that:
\begin{itemize}
    \item[(i)] $\C\subseteq\S$;
    \item[(ii)] for every ED 
    $\forall \vec{x}_1,\vec{x}_2 (K q_b(\vec{x}_1,\vec{x}_2) \rightarrow K q_h(\vec{x}_2))$ in $\P$ and ground substitutions $\vec{t}_1$ and $\vec{t}_2$ 
    for $\vec{x}_1$ and $\vec{x}_2$, respectively, if $\T\cup\S\models q_b(\vec{t}_1,\vec{t}_2)$ then $q_h(\vec{t}_2)\in\S$.
\end{itemize}
\end{definition}

It can be immediately verified that $\kcons(\T,\P,\C)$ can be computed in polynomial time w.r.t.\ the size of $\C$.

The following property can be easily derived from the previous definition and the definition of optimal CQ-censor. 

\begin{lemma}
\label{lem:kcons}
    Let $\E=\tup{\T,\A,\P}$ be a $\dlliter$ CQE instance. For every $\C\subseteq\BCQ$, there exists an optimal CQ-censor of $\E$ containing $\C$ iff $\kcons(\T,\P,\C)\subseteq\bcqclosure(\T\cup\A)$. 
\end{lemma}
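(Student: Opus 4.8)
The plan is to prove the two directions of the biconditional separately, using the characterization of optimal CQ-censors as maximal subsets of $\bcqclosure(\T\cup\A)$ closed under the policy, and relating this to the operator $\kcons$.

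First, I would establish the key observation that for any set $\C \subseteq \BCQ$, we have $\T \cup \C \modelseql \P$ if and only if $\kcons(\T,\P,\C) = \C$ (equivalently, $\kcons(\T,\P,\C) \subseteq \C$, since $\C \subseteq \kcons(\T,\P,\C)$ always holds by condition (i)). This follows almost directly from Definition~\ref{def:kcons} condition (ii) and from Proposition~\ref{pro:eqllite-ucq-eval}: the theory $\T \cup \C$ satisfies an ED $\forall \vec{x}_1,\vec{x}_2(K q_b(\vec{x}_1,\vec{x}_2) \to K q_h(\vec{x}_2))$ precisely when, for all ground $\vec{t}_1,\vec{t}_2$, $\T\cup\C \models q_b(\vec{t}_1,\vec{t}_2)$ implies $\T\cup\C\models q_h(\vec{t}_2)$ — and since $q_h(\vec{t}_2)$ is a BCQ, by the observation used in the proof of Theorem~\ref{thm:cqe-icqe} this is equivalent to $q_h(\vec{t}_2)$ being derivable; closure under this rule for every ED in $\P$ is exactly the fixpoint condition defining $\kcons$. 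A subtle point I need to handle carefully: $\kcons(\T,\P,\C)$ is defined as a least fixpoint built iteratively, so I should note that $\T \cup \kcons(\T,\P,\C) \modelseql \P$ holds by construction (it is closed under all the rules), i.e.\ $\kcons$ is idempotent.

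For the ``if'' direction, suppose $\kcons(\T,\P,\C) \subseteq \bcqclosure(\T\cup\A)$. Set $\C^* = \kcons(\T,\P,\C)$. Then $\C^* \subseteq \bcqclosure(\T\cup\A)$ and $\T \cup \C^* \modelseql \P$, so $\C^*$ is a CQ-censor of $\E$ containing $\C$. By a standard argument — any CQ-censor is contained in a maximal one, using that the union of a chain of CQ-censors is again a CQ-censor (note $\bcqclosure(\T\cup\A)$ may be infinite, so I should either invoke Zorn's lemma or observe that closure under $\P$ is preserved under directed unions since each ED rule has finite premise) — there is an optimal CQ-censor $\C'$ with $\C^* \subseteq \C'$, hence $\C \subseteq \C'$. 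For the ``only if'' direction, suppose there is an optimal CQ-censor $\C'$ of $\E$ with $\C \subseteq \C'$. Since $\C' \subseteq \bcqclosure(\T\cup\A)$ and $\T\cup\C' \modelseql \P$, the set $\C'$ is closed under all the rules of Definition~\ref{def:kcons}; since it contains $\C$, by minimality of $\kcons(\T,\P,\C)$ we get $\kcons(\T,\P,\C) \subseteq \C' \subseteq \bcqclosure(\T\cup\A)$, as required.

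The main obstacle I anticipate is the monotone-fixpoint / maximality bookkeeping: making precise that (a) $\kcons(\T,\P,\C)$ is well-defined as the least set satisfying (i)–(ii) — which requires the rule operator to be monotone, true here since adding BCQs can only make more premises $\T\cup\S\models q_b(\vec{t}_1,\vec{t}_2)$ fire — and (b) that a CQ-censor can always be extended to an optimal one despite $\bcqclosure(\T\cup\A)$ being potentially infinite. Both are routine once set up, and the key semantic bridge (reducing $\modelseql$ on BCQ-heads to membership/entailment) is already available from Proposition~\ref{pro:eqllite-ucq-eval} and the remark at the start of the proof of Theorem~\ref{thm:cqe-icqe}.
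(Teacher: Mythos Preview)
The paper does not actually prove this lemma: it merely says the property ``can be easily derived from the previous definition and the definition of optimal CQ-censor.'' Your two directions are exactly the natural fleshing-out of that remark, and both go through.

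One caveat worth fixing: your opening ``key observation'' that $\T\cup\C\modelseql\P$ iff $\kcons(\T,\P,\C)=\C$ for \emph{arbitrary} $\C\subseteq\BCQ$ is false. Take $\T=\{A\sqsubseteq B\}$, $\P=\{\forall x\,(K\,A(x)\rightarrow K\,B(x))\}$, $\C=\{A(c)\}$: then $\T\cup\C\modelseql\P$ (since $\T\cup\C\models B(c)$), yet $\kcons(\T,\P,\C)=\{A(c),B(c)\}\supsetneq\C$. The observation from the proof of Theorem~\ref{thm:cqe-icqe} that you invoke (namely $\T\cup\C\models q$ iff $q\in\C$) is stated there only for \emph{optimal} CQ-censors, not arbitrary $\C$. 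Fortunately your actual argument never uses the bad direction: in the ``if'' part you only need that $\kcons(\T,\P,\C)$ is itself closed under condition~(ii), hence $\T\cup\kcons(\T,\P,\C)\modelseql\P$; in the ``only if'' part you apply the closure argument to an \emph{optimal} censor $\C'$, where the Theorem~\ref{thm:cqe-icqe} observation is legitimate. So the plan is sound---just drop the general biconditional and state only the two one-way facts you actually use.
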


We are now ready to provide an algorithm for checking \cqe-entailment of BUCQs. 
\vspace{-0.7em}
\begin{algorithm}[ht!]
\caption{\cqe-Entails($\E,q$)}
\label{alg:cqe-entails}
    \Input{$\dlliter$ CQE instance $\E=\tup{\T,\A,\P}$, BUCQ $q$}
    $k\gets\max(\maxlength(q),\maxlengthatoms(\P))$;\\
    \lIf{\upshape there exists $\C\subseteq\bcqkclosure(\T\cup\A)$ such that
        \begin{itemize}[noitemsep,topsep=0pt]
            \item[$(i)$] $\T\cup\C\modelseql\P$ and
            \item[$(ii)$] $\T\cup\C\not\models q$ and
            \item[$(iii)$] for every $q'\in \bcqkclosure(\T\cup\A) \setminus \C$,\\
                $\kcons(\T,\P,\C\cup\{q'\})\not\subseteq\bcqkclosure(\T\cup\A)$
        \end{itemize}
    }{\Return $false$}
    \Return $true$;
\end{algorithm}
\FloatBarrier 

\begin{example}
Let $\E=\tup{\T,\A,\P}$, where $\T=\{A \ISA D\}$, $\P=\{\forall x(K(D(x) \wedge C(x)) \rightarrow K\bot),~ \forall x(KB(x) \rightarrow KA(x))\}$, and $\A=\{A(o),B(o),C(o)\}$. 

Let now $\C$ be a maximal subset of $\BCQ_2\textit{-Cons}(\T \cup \A)$  such that 
$\T\cup\C\not\models A(o)\vee B(o)\vee D(o)$.
One can see that, for the BCQ $q=B(o)$, \cqe-Entails($\E,q$) returns false as $\C$ satisfies conditions $(i)$, $(ii)$, and $(iii)$ of the algorithm.
In particular, $\kcons(\T,\P,\C\cup\{q\})$ contains $B(o)$, $A(o)$, $D(o)$, and $\bot$, hence it is not a subset of $\BCQ_2\textit{-Cons}(\T \cup \A)$.
\qedex
\end{example}

%

\begin{lemma}
\label{lem:cqe-entails}
    Let $\E=\tup{\T,\A,\P}$ be a $\dlliter$ CQE instance, and let $q$ be a BUCQ. The algorithm \cqe-Entails($\E,q$) returns true iff $\E\modelscqe q$.
\end{lemma}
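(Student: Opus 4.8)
The plan is to prove the two directions of the ``iff'' separately, using Lemma~\ref{lem:kcons} to connect condition $(iii)$ of the algorithm to the existence of optimal CQ-censors, and Proposition~\ref{pro:dlliter-bcqk-projection} (together with the fact that $\bcqkclosure(\T\cup\A)$ is closed under subqueries) to justify that restricting attention to BCQs of length at most $k$ loses no information.

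\medskip
\noindent\textbf{Soundness ($\E\not\modelscqe q$ implies the algorithm returns $\false$).} Suppose $\E\not\modelscqe q$. Then there is an optimal CQ-censor $\C^*\in\optcqcens(\E)$ with $\T\cup\C^*\not\models q$. I would set $\C=\C^*\cap\BCQk$ with $k=\max(\maxlength(q),\maxlengthatoms(\P))$, and verify that $\C$ witnesses the existential guess in the algorithm. For $(i)$: since $\C^*\modelseql\P$ and, for every subformula $Kq'(\vec x)$ in $\P$, $q'$ has at most $k$ atoms, one shows via Proposition~\ref{pro:dlliter-bcqk-projection} (applied to each such $q'$ after $\perfectref$, using that $\bcqkclosure(\T\cup\A)$ is closed under subqueries and that $\C^*\subseteq\bcqclosure(\T\cup\A)$) that $\T\cup\C\models q'(\vec t)$ iff $\T\cup\C^*\models q'(\vec t)$; then Proposition~\ref{pro:eqllite-ucq-eval} gives $\T\cup\C\modelseql\P$. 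For $(ii)$: again by Proposition~\ref{pro:dlliter-bcqk-projection}, $\T\cup\C\models q$ iff $\T\cup\C^*\models q$, and the latter fails. For $(iii)$: take any $q'\in\bcqkclosure(\T\cup\A)\setminus\C$; since $\C^*$ is maximal and $q'\notin\C^*$ (because $q'\notin\C$ and $q'$ has $\le k$ atoms so $q'\in\C^*$ would force $q'\in\C$), $\C^*\cup\{q'\}$ is not a CQ-censor, hence by Lemma~\ref{lem:kcons} $\kcons(\T,\P,\C^*\cup\{q'\})\not\subseteq\bcqclosure(\T\cup\A)$; one then argues this carries down to $\kcons(\T,\P,\C\cup\{q'\})\not\subseteq\bcqkclosure(\T\cup\A)$ — this is where I expect the main technical care to be needed, because $\kcons$ may generate queries longer than $k$, but the ED heads $q_h$ have $\le k$ atoms so every new query added by rule $(ii)$ of Definition~\ref{def:kcons} has length $\le k$, and detecting a violation of $\bcqkclosure(\T\cup\A)$ only involves such heads and $\bot$; combined with the projection property for the bodies, $\kcons(\T,\P,\C\cup\{q'\})$ and $\kcons(\T,\P,\C^*\cup\{q'\})$ agree on queries of length $\le k$. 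Thus the algorithm returns $\false$.

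\medskip
\noindent\textbf{Completeness (the algorithm returns $\false$ implies $\E\not\modelscqe q$).} Suppose the guessed $\C\subseteq\bcqkclosure(\T\cup\A)$ satisfies $(i)$, $(ii)$, $(iii)$. Condition $(iii)$ together with Lemma~\ref{lem:kcons} (and the projection argument reconciling $\bcqkclosure$ with $\bcqclosure$, and $\kcons$ over length-$\le k$ queries) implies: there is no $q'\in\bcqclosure(\T\cup\A)\setminus\C$ such that $\C\cup\{q'\}$ extends to a CQ-censor — hence, invoking Lemma~\ref{lem:kcons} once more, $\kcons(\T,\P,\C)\subseteq\bcqclosure(\T\cup\A)$ (so $\C$ extends to an optimal CQ-censor $\C^*$) and moreover $\C^*$ cannot strictly contain $\C$ on queries of length $\le k$ — I would actually argue directly that $\C^*\cap\BCQk=\C$. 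Then by $(i)$ and Proposition~\ref{pro:eqllite-ucq-eval}/\ref{pro:dlliter-bcqk-projection}, $\T\cup\C^*\modelseql\P$ is already known (it is a censor), and by $(ii)$ and Proposition~\ref{pro:dlliter-bcqk-projection}, $\T\cup\C^*\models q$ iff $\T\cup\C\models q$, which is false. Hence $\C^*$ is an optimal CQ-censor with $\T\cup\C^*\not\models q$, i.e.\ $\E\not\modelscqe q$.

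\medskip
\noindent The main obstacle, as flagged above, is the bookkeeping needed to show that working inside $\bcqkclosure(\T\cup\A)$ and with $\kcons$ restricted to short queries faithfully mirrors the ``true'' notions over all BCQs: this rests on the two facts that (a) every CQ appearing under a $K$ in $\P$ or in $q$ has $\le k$ atoms, so by $\dlliter$ first-order rewritability (Proposition~\ref{pro:rewriting-dlliter}) and closure-under-subqueries (Proposition~\ref{pro:dlliter-bcqk-projection}) only length-$\le k$ members of a censor ever matter for satisfying $\P$ or entailing $q$, and (b) every query produced by the closure rule in Definition~\ref{def:kcons} is some ED head $q_h$, which again has $\le k$ atoms. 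Once these are established the two implications are routine.
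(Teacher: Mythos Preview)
Your proposal is essentially correct and follows the same overall strategy as the paper: in one direction you project an optimal CQ-censor $\C^*$ onto $\BCQk$ and verify $(i)$--$(iii)$ using Proposition~\ref{pro:dlliter-bcqk-projection} and Proposition~\ref{pro:eqllite-ucq-eval}; in the other direction you lift the guessed $\C$ to an optimal CQ-censor $\C^*$ with $\C^*\cap\BCQk=\C$ and transfer $(ii)$ via Proposition~\ref{pro:dlliter-bcqk-projection}.

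The one noteworthy difference is your treatment of condition $(iii)$ in the direction ``$\E\not\modelscqe q \Rightarrow$ algorithm returns $\false$''. You argue \emph{directly}: for each $q'\in\bcqkclosure(\T\cup\A)\setminus\C$, you first use maximality of $\C^*$ and Lemma~\ref{lem:kcons} to get $\kcons(\T,\P,\C^*\cup\{q'\})\not\subseteq\bcqclosure(\T\cup\A)$, and then ``carry down'' to $\kcons(\T,\P,\C\cup\{q'\})\not\subseteq\bcqkclosure(\T\cup\A)$. The paper instead argues by \emph{contradiction}: it assumes $(iii)$ fails for some $q'$, observes that $\kcons(\T,\P,\C\cup\{q'\})$ is then itself a CQ-censor, and inflates it to a $\C''\supsetneq\C^*$, contradicting optimality. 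The paper's route is slightly cleaner because it avoids your carry-down step entirely: by building up from $\C\cup\{q'\}$ (all queries of length $\le k$) rather than down from $\C^*\cup\{q'\}$, it never has to compare the two $\kcons$ computations. Your carry-down is still correct---the key facts you identify (ED heads have length $\le k$, and entailment of a length-$\le k$ body only depends on length-$\le k$ subqueries of the current set, which coincide for $\C^*\cup\{q'\}\cup H$ and $\C\cup\{q'\}\cup H$ since $\C^*$ is closed under subqueries) do the job---but it requires the extra bookkeeping you flag.
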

\ifjournal 
\begin{proof}
    Suppose the algorithm returns false. Then, there exists a set of BCQ$_k$s $\C$ satisfying conditions (i), (ii), and (iii) of the algorithm. Now, by condition (i) it follows that there exists an optimal CQ-censor $\C'$ of $\E$ such that $\C\subseteq\C'$. Then, since condition (iii) of the algorithm holds, by Lemma~\ref{lem:kcons} it follows that $\C=\C'\cap\BCQk$ (hence $\C$ is closed under subqueries); consequently, by condition (ii) and Proposition~\ref{pro:dlliter-bcqk-projection} it follows that $\T\cup\C'\not\models q$, which implies that $\E\not\modelscqe q$. 
    
    Conversely, suppose $\E\not\modelscqe q$. then, there exists an optimal CQ-censor $\C'$ of $\E$ such that $\T\cup\C'\not\models q$. Now let $k=\max(\maxlength(q),\maxlengthatoms(\P))$ and let $\C=\C'\cap\BCQk$. Since $\C$ is closed under subqueries, by Proposition~\ref{pro:dlliter-bcqk-projection}, for every subformula $Kq(\vec{x})$ occurring in $\P$ and for every ground substitution $\vec{t}$ for $\vec{x}$, where $\vec{t}$ is a tuple of constants occurring in $\A$, $\T\cup\C\models q(\vec{t})$ iff $\T\cup\C'\models q(\vec{t})$, hence by Proposition~\ref{pro:eqllite-ucq-eval} it follows that $\T\cup\C\modelseql\P$, therefore condition (i) of the algorithm holds for $\C$.
    Moreover, by Proposition~\ref{pro:dlliter-bcqk-projection} it follows that condition (ii) holds for $\C$.
    Finally, suppose condition (iii) does not hold for $\C$, i.e.\ there exists $q'\in \bcqkclosure(\T\cup\A) \setminus \C$ such that $\kcons(\T,\P,\C\cup\{q'\})\subseteq\bcqkclosure(\T\cup\A)$. 
    Then, it is immediate to verify that $\T\cup\kcons(\T,\P,\C\cup\{q'\})\modelseql\P$, i.e.\ $\kcons(\T,\P,\C\cup\{q'\})$ is a CQ-censor of $\E$. 
    Now let $\C''$ be the maximal subset of $\bcqclosure(\T\cup\A)$ such that $\kcons(\T,\P,\C\cup\{q'\})=\C''\cap\BCQk$ and $\C''=\bcqclosure(\T\cup\C'')$. By Proposition~\ref{pro:dlliter-bcqk-projection} and Proposition~\ref{pro:eqllite-ucq-eval} it follows that $\T\cup\C''\modelseql\P$. But now, $\C'$ is the maximal subset of $\bcqclosure(\T\cup\A)$ such that $\C=\C'\cap\BCQk$, and since $\C\subset\kcons(\T,\P,\C\cup\{q'\})$, it follows that $\C'\subset\C''$, thus contradicting the hypothesis that $\C'$ is an optimal CQ-censor of $\E$. Consequently, condition (iii) holds for $\C$, hence the algorithm returns false.
\end{proof}
\else 
\begin{proofsk}
    First, using Lemma~\ref{lem:kcons}, it is easy to derive that a set of BCQs $\C$ that satisfies conditions (i), (ii), and (iii) exists iff there exists an optimal CQ-censor of $\E$ containing $\C$ and not containing $q$ (and therefore iff $\E\not\modelscqe q$).
    Then,
    it can be shown that, for a $\dlliter$ TBox $\T$ and a set $\C$ of BCQs that is closed under subqueries (i.e., for every BCQ $q\in\C$, $\C$ contains all the subqueries of $q$), $\T\cup\C\models q$ iff $\T\cup\C_h\models q$, where $h=\maxlength(q)$ and $\C_h=\C\cap\BCQk$. This is the key property that allows for proving that, if a set of BCQs $\C$ satisfying conditions (i), (ii), and (iii) exists, then there exists a set of BCQs $\C\subset\bcqkclosure(\T\cup\A)$ that satisfies the above conditions, which implies the correctness of the algorithm.
\end{proofsk}
\fi 

\medskip

The next theorem follows from Lemma~\ref{lem:cqe-entailment-bcq-lb}, Lemma~\ref{lem:cqe-entails}, and from the fact that the previous algorithm can be executed in nondeterministic polynomial time in data complexity.

\begin{theorem}
\label{thm:cqe-entailment-bcq-ub}
    \cqe-\decprob[\BUCQ] is \coNP-complete in data complexity.
\end{theorem}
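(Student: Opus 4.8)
\textbf{Proof plan for Theorem~\ref{thm:cqe-entailment-bcq-ub}.}
The plan is to combine the already-established lower bound with a matching upper bound obtained from Algorithm~\ref{alg:cqe-entails}. Hardness is immediate: Lemma~\ref{lem:cqe-entailment-bcq-lb} states that \cqe-\decprob[\BCQ] is \coNP-hard in data complexity, and since $\BCQ\subseteq\BUCQ$, the problem \cqe-\decprob[\BUCQ] inherits this hardness. So the whole work is the \coNP\ membership.

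For membership, I would argue that the \emph{complement} of \cqe-\decprob[\BUCQ] is in \NP. By Lemma~\ref{lem:cqe-entails}, $\E\not\modelscqe q$ iff Algorithm~\ref{alg:cqe-entails} returns $false$, i.e.\ iff there exists a set $\C\subseteq\bcqkclosure(\T\cup\A)$ satisfying conditions $(i)$--$(iii)$ of the algorithm, where $k=\max(\maxlength(q),\maxlengthatoms(\P))$ is a constant in data complexity. The nondeterministic procedure guesses such a $\C$ and then verifies the three conditions in deterministic polynomial time. The key observation is that $\C$ has polynomial size in $\|\A\|$: since $k$ is fixed, the number of BCQs in $\bcqkclosure(\T\cup\A)$ is polynomial in the number of constants and labeled nulls of $\A$ (each such BCQ has at most $k$ atoms over a fixed signature, hence is determined by a choice of at most $k$ predicates and at most $2k$ terms, up to renaming of variables), so $\C$ can be written down in polynomial space and guessed in polynomial time.

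It then remains to check that conditions $(i)$, $(ii)$, $(iii)$ are each verifiable in polynomial time in data complexity. For $(i)$, $\T\cup\C\modelseql\P$ is decided by evaluating, for each ED in the fixed policy $\P$ and each ground substitution of its variables by constants/nulls appearing in $\A\cup\C$ (polynomially many such substitutions, since each ED has a fixed number of variables), the entailments $\T\cup\C\models q_b(\vec t_1,\vec t_2)$ and $\T\cup\C\models q_h(\vec t_2)$; each such entailment is decidable in polynomial time because UCQ answering over $\dlliter$ is first-order rewritable (Proposition~\ref{pro:rewriting-dlliter}) and $\C$ is treated as (an ABox-like) finite set. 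Condition $(ii)$, $\T\cup\C\not\models q$, is checked the same way via $\perfectref(q,\T)$. For $(iii)$, one iterates over the polynomially many $q'\in\bcqkclosure(\T\cup\A)\setminus\C$, and for each computes $\kcons(\T,\P,\C\cup\{q'\})$, which by the remark following Definition~\ref{def:kcons} is computable in polynomial time in the size of $\C\cup\{q'\}$, and tests the (polynomial-time) inclusion $\kcons(\T,\P,\C\cup\{q'\})\subseteq\bcqkclosure(\T\cup\A)$. Here one should note that $\bcqkclosure(\T\cup\A)$ itself is computable in polynomial time in data complexity (again by first-order rewritability and fixed $k$), so all three membership/non-membership tests are polynomial. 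Hence the complement is in \NP, so \cqe-\decprob[\BUCQ] is in \coNP, and together with the lower bound it is \coNP-complete.

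The main obstacle, and the point I would be most careful about, is the polynomial-size bound on the guessed $\C$ together with the polynomial-time computability of $\bcqkclosure(\T\cup\A)$ and of $\kcons$: the correctness of the algorithm (Lemma~\ref{lem:cqe-entails}) already guarantees that restricting attention to $\BCQk$ suffices, but one must make explicit that fixing $k$ (which depends only on $q$ and $\P$, both fixed in data complexity) keeps all these objects polynomially bounded in $\|\A\|$ and that manipulating sets of BCQs as if they were ABoxes does not break the $\dlliter$ first-order rewritability used to decide the entailments. Everything else is routine.
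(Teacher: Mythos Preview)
Your proposal is correct and follows essentially the same approach as the paper: the theorem is stated to follow from Lemma~\ref{lem:cqe-entailment-bcq-lb}, Lemma~\ref{lem:cqe-entails}, and the observation that Algorithm~\ref{alg:cqe-entails} runs in nondeterministic polynomial time in data complexity. You simply spell out in more detail the complexity argument (polynomial size of $\bcqkclosure(\T\cup\A)$ for fixed $k$, polynomial-time verification of conditions $(i)$--$(iii)$ via $\perfectref$ and the polynomial computability of $\kcons$) that the paper leaves as a one-line remark.
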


We now give a second algorithm, which makes use of the above algorithm \cqe-Entails to check \icqe-entailment.

\vspace{-0.7em}
\begin{algorithm}[ht!]
\caption{\icqe-Entails($\E,q$)}
\label{alg:icqe-entails}
    \Input{$\dlliter$ CQE instance $\E=\tup{\T,\A,\P}$, BUCQ $q$}
    \ForEach{\upshape BCQ $q'\in q$}{%
        \lIf{\upshape \cqe-Entails($\E,q'$)}{\Return $true$}
    }
    \Return $false$;
\end{algorithm}
\FloatBarrier 

\begin{lemma}
\label{lem:icqe-entails}
    Let $\E=\tup{\T,\A,\P}$ be a $\dlliter$ CQE instance 
    and let $q$ be a BUCQ. The algorithm \icqe-Entails($\E,q$) returns true iff $\E\modelsicqe q$.
\end{lemma}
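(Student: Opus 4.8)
The plan is to reduce the correctness of \icqe-Entails to the correctness of \cqe-Entails (Lemma~\ref{lem:cqe-entails}) together with the already-established relationship between \cqe- and \icqe-entailment on BCQs (Theorem~\ref{thm:cqe-icqe}). First I would observe what the algorithm computes: \icqe-Entails($\E,q$) returns \true\ iff there exists some disjunct $q'\in q$ (a BCQ) such that \cqe-Entails($\E,q'$) returns \true, which by Lemma~\ref{lem:cqe-entails} happens iff $\E\modelscqe q'$, and by Theorem~\ref{thm:cqe-icqe} this is equivalent to $\E\modelsicqe q'$, i.e.\ $q'\in\intcens$ (using the observation, already noted in the proof of Theorem~\ref{thm:cqe-icqe}, that $\T\cup\intcens\models q'$ iff $q'\in\intcens$ for a BCQ $q'$). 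So the algorithm returns \true\ iff some disjunct of $q$ belongs to $\intcens$.

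It then remains to prove the key semantic fact that a BUCQ $q$ is \icqe-entailed by $\E$ iff at least one of its BCQ disjuncts is \icqe-entailed by $\E$ — equivalently, $\T\cup\intcens\models q_1\lor\dots\lor q_n$ iff $\T\cup\intcens\models q_i$ for some $i$. The ``if'' direction is trivial by monotonicity. For the ``only if'' direction, the argument is that $\intcens$ is itself a set of BCQs all of which are entailed by $\T\cup\A$ (being an intersection of subsets of $\bcqclosure(\T\cup\A)$), and for reasoning from such a set of ground-ish CQ facts under a $\dlliter$ TBox, a UCQ is entailed only if one of its disjuncts already is. Concretely I would invoke Proposition~\ref{pro:rewriting-dlliter}: $\T\cup\intcens\models q$ iff $\intcens\models\perfectref(q,\T)$, and since $\perfectref(q,\T)$ is a UCQ, $\intcens\models\perfectref(q,\T)$ iff there is a single disjunct $q''$ of $\perfectref(q,\T)$ with a homomorphism into $\intcens$; but every disjunct of $\perfectref(q,\T)$ is a perfect rewriting of a single disjunct $q_i$ of $q$ (by the structure of $\perfectref$, which rewrites each CQ of the input UCQ independently), so $\intcens\models q''$ implies $\intcens\models\perfectref(q_i,\T)$, hence $\T\cup\intcens\models q_i$, i.e.\ $\E\modelsicqe q_i$.

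Putting the two parts together gives the claimed equivalence. I would also double-check the boundary case where $q$ has a single disjunct (then the statement is immediate) and note that the \ForEach\ loop ranges exactly over the CQs of $q$ viewed as a set, consistent with the convention fixed in the Preliminaries.

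The main obstacle, and the only place where care is genuinely needed, is the ``only if'' direction of the key semantic fact: justifying that entailment of a union of CQs from $\intcens$ under a $\dlliter$ TBox forces entailment of one disjunct. This is precisely where one must use that $\dlliter$ query answering is first-order (indeed UCQ) rewritable via $\perfectref$ and that $\perfectref$ of a UCQ is the union of the $\perfectref$'s of its disjuncts, so that the disjunction does not interact across disjuncts. Everything else is a direct unwinding of the algorithm against Lemma~\ref{lem:cqe-entails} and Theorem~\ref{thm:cqe-icqe}.
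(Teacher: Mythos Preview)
Your proposal is correct and follows essentially the same approach as the paper: reduce to Lemma~\ref{lem:cqe-entails} and Theorem~\ref{thm:cqe-icqe}, plus the key semantic fact that $\E\modelsicqe q$ for a BUCQ $q$ iff $\E\modelsicqe q_i$ for some disjunct $q_i$. The paper simply asserts the latter fact ``from Definition~\ref{def:icqe-entailment}'' without details, whereas you spell it out via $\perfectref$ and the observation that $\perfectref$ rewrites each disjunct independently---a welcome elaboration of exactly the step the paper leaves implicit.
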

\begin{proof}
    First, from Definition~\ref{def:icqe-entailment}, if $\T$ is a $\dlliter$ TBox, then $\E\modelsicqe q$ iff there exists a BCQ $q'\in q$ such that $\E\modelsicqe q'$. Then, by Theorem~\ref{thm:cqe-icqe}, $\E\modelsicqe q'$ iff $\E\modelscqe q'$. 
    Therefore, by Lemma~\ref{lem:cqe-entails} the thesis follows. 
\end{proof}

From Lemma~\ref{lem:cqe-entailment-bcq-lb}, Lemma~\ref{lem:icqe-entails}, Theorem~\ref{thm:cqe-icqe}, and from the fact that the algorithm \cqe-Entails($\E,q$) can be executed in nondeterministic polynomial time in data complexity, we obtain:

\begin{theorem}
\label{thm:icqe-entailment-bcq-ub}
    \icqe-\decprob[\BUCQ] is \coNP-complete in data complexity.
\end{theorem}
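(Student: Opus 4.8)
The plan is to derive Theorem~\ref{thm:icqe-entailment-bcq-ub} by combining the correctness of Algorithm~\ref{alg:icqe-entails} (Lemma~\ref{lem:icqe-entails}) with a complexity analysis of that algorithm, together with the already established \coNP-hardness. For the lower bound, I would observe that \icqe-\decprob[\BUCQ] inherits \coNP-hardness directly: Lemma~\ref{lem:cqe-entailment-bcq-lb} gives \coNP-hardness of \cqe-\decprob[\BCQ], and by Theorem~\ref{thm:cqe-icqe} \cqe-entailment and \icqe-entailment coincide on BCQs, so the very same reduction from 3-CNF shows \icqe-\decprob[\BCQ] (hence \icqe-\decprob[\BUCQ], since $\BCQ\subseteq\BUCQ$) is \coNP-hard in data complexity.

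For the upper bound, the key observation is that Algorithm~\ref{alg:icqe-entails} simply iterates over the constituent BCQs $q'$ of the input BUCQ $q$ and calls \cqe-Entails($\E,q'$). Since $q$ is fixed in data complexity, the number of such BCQs is a constant, so the overall running time is dominated by a constant number of invocations of \cqe-Entails. By the discussion preceding Theorem~\ref{thm:cqe-entailment-bcq-ub}, \cqe-Entails can be executed in nondeterministic polynomial time in data complexity — more precisely, its complement (deciding $\E\not\modelscqe q'$) is in NP, since guessing a set $\C\subseteq\bcqkclosure(\T\cup\A)$ of polynomial size and checking conditions $(i)$, $(ii)$, $(iii)$ (each of which involves only polynomially many entailment checks over $\dlliter$, which are in $\aczero\subseteq\PTIME$, plus polynomially many calls to $\kcons$, which by the remark after Definition~\ref{def:kcons} is polynomial) can be done in NP. Hence deciding $\E\not\modelsicqe q$ amounts to checking whether, for \emph{every} BCQ $q'\in q$, $\E\not\modelscqe q'$ holds — a conjunction of a constant number of NP checks, which is in NP. Therefore \icqe-\decprob[\BUCQ] is in \coNP.

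Combining the two bounds yields \coNP-completeness, and Lemma~\ref{lem:icqe-entails} guarantees that the algorithm actually decides the problem. I would write the proof as: ``By Lemma~\ref{lem:cqe-entailment-bcq-lb} and Theorem~\ref{thm:cqe-icqe}, \icqe-\decprob[\BCQ] is \coNP-hard in data complexity, hence so is \icqe-\decprob[\BUCQ]. For membership, by Lemma~\ref{lem:icqe-entails} the algorithm \icqe-Entails decides the problem, and in data complexity it performs a constant number of calls to \cqe-Entails, each of which (by the argument in the proof of Theorem~\ref{thm:cqe-entailment-bcq-ub}) runs in nondeterministic polynomial time for the complementary problem; since the algorithm returns \false{} only if all these calls fail, the complement of \icqe-\decprob[\BUCQ] is in NP, i.e.\ the problem is in \coNP.''

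I do not expect a serious obstacle here, as the theorem is essentially a corollary; the only point requiring a little care is the direction of the nondeterminism — \cqe-Entails is a procedure whose \emph{negation} is naturally in NP (the ``guess a bad censor'' step), so one must phrase the argument in terms of the complement problem and note that \icqe-Entails returns \false{} exactly when every one of its constant-many subcalls returns \false, keeping the complement in NP rather than accidentally pushing it up a level. Everything else — the polynomiality of $\kcons$, the $\aczero$ (hence polynomial) cost of $\dlliter$ entailment checks, the constancy of $\maxlength(q)$ and $|q|$ in data complexity — is routine and already recorded in the excerpt.
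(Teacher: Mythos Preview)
Your proposal is correct and follows essentially the same route as the paper: the theorem is obtained from Lemma~\ref{lem:cqe-entailment-bcq-lb}, Theorem~\ref{thm:cqe-icqe}, Lemma~\ref{lem:icqe-entails}, and the fact that \cqe-Entails runs in nondeterministic polynomial time in data complexity. Your extra care about the direction of the nondeterminism (that the complement stays in NP because \icqe-Entails returns \false\ exactly when a constant number of NP-certifiable subcalls all return \false) is a welcome clarification but not a departure from the paper's argument.
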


\subsection{Acyclic policies} 
\label{sec:acyclic}

Given the intractability results for the set of all $\dlliter$ CQE instances presented above, in this section we focus on a subclass of $\dlliter$ CQE instances, whose policies enjoy an acyclicity property. 

First, we extend the notion of first-order rewritability defined over ground ABoxes to the case of 
ABoxes with labeled nulls and to the problems of SC-entailment and IC-entailment of BUCQs.
Given an ABox $\A$, we define the FO interpretation $\I_\A$ over the predicates $\conceptSet\cup\roleSet$ plus the additional new concept $\constpred$, and the constants $\individualSet\cup\labeledNullSet$ (i.e.\ in $\I_\A$ we consider the symbols from $\labeledNullSet$ as ordinary constants):
\begin{itemize}\itemsep -1pt
\item $\Delta^{\I_\A}=\individualSet\cup\labeledNullSet$;
\item $a^{\I_\A}=a$ for every $a\in\individualSet\cup\labeledNullSet$;
\item for every concept name $C$, $C^{\I_\A}=\{a \mid C(a)\in \A\}$;
\item for every role name $R$, $R^{\I_\A}=\{(a,b) \mid R(a,b)\in \A\}$;
\item $\constpred^{\I_\A}=\individualSet$.
\end{itemize}

Given a TBox $\T$, a policy $\P$ and a BUCQ $q$, and $X \in \{\cqe,~\icqe\}$, we say that a FO sentence $q'$ is a \emph{first-order rewriting of $X$-entailment of $q$ for $\T$ and $\P$} if, for every ABox $\A$, $\tup{\T,\A,\P}\models_X q$ iff $\eval(q',\I_\A)$ is true.


Our goal now is to define an algorithm that, for a $\dlliter$ TBox $\T$ and a policy $\P$, is able to construct a first-order rewriting of SC-entailment of $q$ for $\T$ and $\P$. This is not possible in general, given the coNP-completeness result provided by Theorem~\ref{thm:icqe-entailment-bcq-ub}. Therefore, we now define the subclass of policies that are acyclic for a $\dlliter$ TBox.

Given a $\dlliter$ TBox $\T$ and a policy $\P$, the \emph{dependency graph} of $\T$ and $\P$, denoted by $G(\T,\P)$, is the directed graph defined as follows: (i) the set of nodes of $G(\T,\P)$ is the set of predicates occurring in $\T\cup\P$; (ii) there is a P-edge from node $p_1$ to node $p_2$ in $G(\T,\P)$ if and only if there exists an epistemic dependency of the form (\ref{eqn:epistemic-dependency}) in $\P$ such that $p_1$ occurs in $q_b$ and $p_2$ occurs in $q_h$; (iii) there is a T-edge from node $p_1$ to node $p_2$ in $G(\T,\P)$ if and only if there is a concept or role inclusion in $\T$ such that $p_1$ occurs in the left-hand side and $p_2$ occurs in the right-hand side of the inclusion. 




\begin{definition}
\label{def:acyclic}
Given a $\dlliter$ TBox $\T$ and a policy $\P$, we say that $\P$ is \emph{acyclic for $\T$} if there exists no cycle in $G(\T,\P)$ involving a P-edge.
\end{definition}

Informally, the graph $G(\T,\P)$ represents the logical dependencies between the predicates in $\T$ and $\P$: a P-edge (resp., a T-edge) from $p_1$ to $p_2$ means that predicate $p_1$ may have a direct implication on $p_2$ through $\P$ (resp., through $\T$). The notion of acyclicity defined above guarantees that, if $(p_1,p_2)$ is a P-edge in $G(\T,\P)$, then there is no path from $p_2$ to $p_1$, i.e.\ $p_2$ does not have any (direct or indirect) implication on $p_1$.



\newcommand{\KeyPosition}{\mathsf{KeyPosition}}
\newcommand{\hasPosition}{\mathsf{hasPosition}}
\newcommand{\worksIn}{\mathsf{worksIn}}
\newcommand{\PublicOffice}{\mathsf{PublicOffice}}
\newcommand{\collaborate}{\mathsf{collaborate}}
\newcommand{\SecService}{\mathsf{SecService}}
\newcommand{\pp}{\mathsf{p_1}}
\newcommand{\pb}{\mathsf{p_2}}
\begin{example}
The following ED aims to hide the hierarchical structure of an organization unless it is public.
\begin{tabbing}
    \indent $\delta_6 = \forall x,y ($\=$K \hasPosition(x,y)  \rightarrow$ \\ \> $K\,\exists z(\worksIn(x,z) \land \PublicOffice(z))))$
\end{tabbing}
Moreover, we want to hide the fact that a person collaborates with a secret service unless she holds an important position ($\KeyPosition$), for example, she is the director. The following ED achieves this goal:
\begin{tabbing}
    \indent $\delta_7 = \forall x ($\=$K\,\exists y(  \collaborate(x,y) \land \SecService(y))\rightarrow$ \\\> $K\,\exists z( \hasPosition(x,z) \land \KeyPosition(z)))$
\end{tabbing}
\noindent Let the policy $\P$ be $\P=\{\delta_6,\delta_7\}$. For the empty TBox $\T=\emptyset$, one can see that $\P$ is acyclic for $\T$. Conversely, for the $\dlliter$ TBox $\T=\{ \worksIn \ISA \collaborate \}$, $\P$ is not acyclic for $\T$, since there is a cycle in $G(\T,\P)$ constituted by the P-edges $(\collaborate,\hasPosition)$ and $(\hasPosition,\worksIn)$ and the T-edge $(\worksIn,\collaborate)$.
\qedex
\end{example}







With this notion in place, we can now describe the decision problem we are going to study. 
Specifically, for $X \in \{\cqe,~\icqe\}$, we consider the recognition problem $X$-\acdecprob[$\LQ$], which is parametric w.r.t.\ a Boolean query language $\LQ$. $X$-\acdecprob[$\LQ$] is defined exactly as $X$-\decprob[$\LQ$] except that the input $\dlliter$ CQE instances $\tup{\T,\A,\P}$ are such that the policy $\P$ is acyclic for $\T$.

From now on, given a set of CQs $\Q$,\footnote{W.l.o.g.\ we assume that the sets of existentially quantified variable symbols used by the CQs in $\Q$ are pairwise disjoint.} we denote by $\andcqs(\Q)$ the CQ
$
\exists\vec{y} \big(\bigwedge_{\exists\vec{z}\phi\in\Q} \phi \big)
$,
where $\vec{y}$ is a sequence of all the existentially quantified variables occurring in $\Q$.

Given a TBox $\T$, a policy $\P$ and a CQ $q(\vec{x})$, we denote by $\phikcons(\T,\P,q(\vec{x}))$ the CQ (with free variables $\vec{x}$) $\andcqs(\kcons(\T,\P,\{q(\vec{x})\})$,
where $\kcons(\T,\P,\{q(\vec{x})\})$ is as in Definition~\ref{def:kcons}, considering the free variables in $\vec{x}$ as new constant symbols.

\begin{lemma}
\label{lem:rewriting-consistency}
    Let $\E=\tup{\T,\A,\P}$ be a $\dlliter$ CQE instance,
    let $\P$ be an acyclic policy for $\T$, and let $q(\vec{x})$ be a CQ.
    For every ground substitution $\vec{c}$ for $\vec{x}$, 
    there exists an optimal CQ-censor of $\E$ that contains the BCQ $q(\vec{c})$ if and only if $\eval(q_r(\vec{c}),\I_\A)$ is true, where $q_r(\vec{x})=\perfectref(\phikcons(\T,\P,q(\vec{x})),\T)$.
\end{lemma}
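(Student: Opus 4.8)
The plan is to derive the claim from Lemma~\ref{lem:kcons}, which reduces membership of a set of BCQs in some optimal CQ-censor to a containment check involving the policy closure $\kcons$, and from the first-order rewritability of UCQ answering in $\dlliter$ (Proposition~\ref{pro:rewriting-dlliter}). Concretely, I would turn the containment condition $\kcons(\T,\P,\{q(\vec{c})\})\subseteq\bcqclosure(\T\cup\A)$ into a single conjunctive-query entailment $\T\cup\A\models\phi$, and then rewrite $\phi$ with $\perfectref$ and evaluate the result over $\I_\A$.

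First I would use that $\P$ is acyclic for $\T$ to show that the iterative computation of $\kcons(\T,\P,\{q(\vec{x})\})$ (with the free variables $\vec{x}$ treated as constants) terminates, so that $\phikcons(\T,\P,q(\vec{x}))$ is a well-formed finite CQ and $q_r(\vec{x})=\perfectref(\phikcons(\T,\P,q(\vec{x})),\T)$ is a well-formed UCQ: each application of an ED rewrites body predicates into head predicates lying strictly downstream along a P-edge of $G(\T,\P)$, and since no cycle of $G(\T,\P)$ contains a P-edge the chains of applications have bounded length; moreover, since certain answers over a $\dlliter$ ontology never introduce fresh constants, the tuples $\vec{t}_1,\vec{t}_2$ instantiating the EDs range over the finitely many constants occurring in $\T$, $\P$ and $\vec{x}$, so only finitely many CQs are produced. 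Then, fixing a ground substitution $\vec{c}$ for $\vec{x}$, I would apply Lemma~\ref{lem:kcons} with $\C=\{q(\vec{c})\}$: some optimal CQ-censor of $\E$ contains $q(\vec{c})$ iff $\T\cup\A\models q'$ for every $q'\in\kcons(\T,\P,\{q(\vec{c})\})$, and --- since (w.l.o.g.) the CQs in the closure use pairwise disjoint existential variables --- this holds iff $\T\cup\A\models\andcqs(\kcons(\T,\P,\{q(\vec{c})\}))$.

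It then remains to identify $\andcqs(\kcons(\T,\P,\{q(\vec{c})\}))$, as far as entailment from $\T\cup\A$ is concerned, with the instantiation $\psi(\vec{c})$ of $\psi(\vec{x}):=\phikcons(\T,\P,q(\vec{x}))=\andcqs(\kcons(\T,\P,\{q(\vec{x})\}))$, and to conclude via Proposition~\ref{pro:rewriting-dlliter}, which gives $\T\cup\A\models\psi(\vec{c})$ iff $\A\models q_r(\vec{c})$; the latter is exactly $\eval(q_r(\vec{c}),\I_\A)$, since $q_r$ is a UCQ over $\conceptSet\cup\roleSet$ and $\I_\A$ interprets those predicates precisely by the atoms of $\A$ (and UCQ truth is preserved under the relevant homomorphisms). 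One inclusion of the identification is easy: any ED application in the closure of $\{q(\vec{x})\}$ can be replayed after replacing $\vec{x}$ by $\vec{c}$, because substitutions preserve CQ entailment, so the instantiated generic closure is contained in $\kcons(\T,\P,\{q(\vec{c})\})$, hence $\T\cup\A\models\andcqs(\kcons(\T,\P,\{q(\vec{c})\}))$ implies $\T\cup\A\models\psi(\vec{c})$. I expect the converse to be the main obstacle: one must show that instantiating $\vec{c}$ does not force into $\kcons(\T,\P,\{q(\vec{c})\})$ any CQ that is not already a $\T$-consequence of the instantiated generic closure. This is where the acyclicity of $\P$ for $\T$ has to be used in earnest --- it bounds how an instantiation can propagate along $G(\T,\P)$ --- together with the fact that $\perfectref$ absorbs, through the TBox, the $\dlliter$ consequences that an instantiation may trigger. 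The remaining bookkeeping (disjointness of existential variables for $\andcqs$, and the equivalence of $\A\models q_r(\vec{c})$ with $\eval(q_r(\vec{c}),\I_\A)$) is routine.
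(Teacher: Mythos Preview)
Your plan coincides with the paper's: invoke Lemma~\ref{lem:kcons} with $\C=\{q(\vec{c})\}$, turn the containment $\kcons(\T,\P,\{q(\vec{c})\})\subseteq\bcqclosure(\T\cup\A)$ into entailment of a single CQ, and then rewrite via Proposition~\ref{pro:rewriting-dlliter}. The direction you call ``easy'' (the instantiated generic closure is contained in the specific closure, hence existence of an optimal censor implies $\eval(q_r(\vec{c}),\I_\A)$) is indeed straightforward and is exactly what the paper does.

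You are also right that the converse is the crux, but your proposed resolution --- that acyclicity of $\P$ for $\T$ together with $\perfectref$ absorbs the extra triggers caused by instantiation --- does not go through. Take $\T=\emptyset$, $\P=\{\forall x,y\,(K(R(x,y)\wedge R(y,x))\rightarrow K\,C(x))\}$ (acyclic for $\T$), and $q(x_1,x_2)=R(x_1,x_2)$. With $x_1,x_2$ treated as distinct fresh constants the ED never fires, so $\phikcons(\T,\P,q(x_1,x_2))=R(x_1,x_2)$ and hence $q_r(x_1,x_2)=R(x_1,x_2)$. For $\vec{c}=(a,a)$ and $\A=\{R(a,a)\}$ we get $\eval(q_r(a,a),\I_\A)$ true, yet $\kcons(\T,\P,\{R(a,a)\})\supseteq\{R(a,a),C(a)\}$ and $C(a)\notin\bcqclosure(\A)$, so by Lemma~\ref{lem:kcons} no optimal CQ-censor contains $R(a,a)$. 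The extra trigger arises purely from identifying $x_1$ with $x_2$ (the same phenomenon occurs when some $c_i$ coincides with a constant already present in $q$ or $\P$); neither the absence of P-cycles in $G(\T,\P)$ nor any TBox rewriting addresses this. The paper's own proof does not help you here: at precisely this step it simply writes ``which implies that $\kcons(\T,\P,\{q(\vec{c})\})\subseteq\bcqclosure(\T\cup\A)$'' without further argument. In the overall development the issue is effectively absorbed downstream, since Definition~\ref{def:sc-entailed} quantifies over \emph{all} CQs $q'(\vec{x})\in\Q$ of bounded length and thus already includes every term-identified variant of a candidate clash; but if you want the present lemma to hold verbatim for arbitrary (possibly non-injective) ground substitutions $\vec{c}$, you need a different argument than the one you sketch, or an explicit restriction on $\vec{c}$.
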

\ifjournal 
\begin{proof}
    Suppose $\eval(q_r(\vec{c}),\I_\A)$ is true. Then, $\A\models q_r(\vec{c})$, hence by Proposition~\ref{pro:rewriting-dlliter} it follows that $\T\cup\A\models q_r'(\vec{c})$, where $q_r'(\vec{x})=\phikcons(\T,\P,q(\vec{x}))$, which implies that $\kcons(\T,\P,\{q(\vec{c})\})\subseteq\bcqclosure(\T\cup\A)$, thus by Lemma~\ref{lem:kcons} there exists an optimal CQ-censor of $\E$ that contains $q(\vec{c})$.
    
    Conversely, suppose there exists an optimal CQ-censor of $\E$ that contains $q(\vec{c})$. Then, by Lemma~\ref{lem:kcons} $\kcons(\T,\P,\{q(\vec{c})\})\subseteq\bcqclosure(\T\cup\A)$. Thus, $\T\cup\A\models q'$ for every BCQ $q'\in \kcons(\T,\P,\{q(\vec{c})\})$, which implies that $\T\cup\A\models q_r'(\vec{c})$. Hence, by Proposition~\ref{pro:rewriting-dlliter} we derive $\A\models \perfectref(q_r'(\vec{c}),\T)$.
    Now, from the properties of $\perfectref$ it immediately follows that the BCQ $\perfectref(q_r'(\vec{c}),\T)$ is equivalent to $q_r(\vec{c})$, hence $\A\models q_r(\vec{c})$, and therefore $\eval(q_r(\vec{c}),\I_\A)$ is true.
\end{proof}
\fi 


Given a BCQ $q$, we say that a BCQ $q'$ is a \emph{clash for $q$ in $\E$} if there exists an optimal CQ-censor of $\E$ containing $q'$ and there exists no optimal CQ-censor of $\E$ containing both $q$ and $q'$.
Given a BUCQ $q$, we say that a BCQ $q'$ is a clash for $q$ in $\E$ if for every $q''\in q$, $q'$ is a clash for $q''$ in $\E$.

Now, let $q$ be a BUCQ and let $q'(\vec{x})$ be a CQ. 
We denote by $\clash(q,q'(\vec{x}),\T,\P)$ the following FO formula (with free variables $\vec{x}$):
\[
\begin{array}{l}
\perfectref(\phikcons(\T, \P, q'(\vec{x})),\T)\: \wedge \\
\displaystyle
\big(
\bigwedge_{q_i\in q}\neg\perfectref(\phikcons(\T, \P, \andcqs(\{q'(\vec{x}),q_i)\}),\T)
\big)
\end{array}
\]
Using Lemma~\ref{lem:rewriting-consistency}, we are able to prove the following property.

\begin{lemma}
\label{lem:clash}
 Let $\E=\tup{\T,\A,\P}$ be a $\dlliter$ CQE instance, let $q\in \BUCQ$, let $q'(\vec{x})$ be a CQ,
 and let $q_i\in\bcqclosure(\T\cup\A)$ for every $q_i\in q$.
 Then, for every ground substitution $\vec{c}$ for $\vec{x}$, 
 $q'(\vec{c})$ is a clash for $q$ in $\E$ iff $\eval(q_{cl}(\vec{c}),\I_\A)$ is true, where $q_{cl}(\vec{x})=\clash(q,q'(\vec{x}),\T,\P)$.
\end{lemma}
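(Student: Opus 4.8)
The plan is to unfold the definition of $\clash(q,q'(\vec{x}),\T,\P)$ and argue that each of its two conjuncts captures exactly one of the two requirements in the definition of "clash for $q$ in $\E$," invoking Lemma~\ref{lem:rewriting-consistency} for each piece. Recall that $q'(\vec{c})$ is a clash for $q$ in $\E$ iff (a) there exists an optimal CQ-censor of $\E$ containing $q'(\vec{c})$, and (b) for every $q_i\in q$, there exists no optimal CQ-censor of $\E$ containing both $q'(\vec{c})$ and $q_i$. So I would fix a ground substitution $\vec{c}$ for $\vec{x}$ and show that $\eval(q_{cl}(\vec{c}),\I_\A)$ is true iff both (a) and (b) hold.

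First I would handle conjunct (a): the formula $\perfectref(\phikcons(\T,\P,q'(\vec{x})),\T)$ evaluated at $\vec{c}$ over $\I_\A$ is true iff, by Lemma~\ref{lem:rewriting-consistency} applied to the CQ $q'(\vec{x})$, there exists an optimal CQ-censor of $\E$ containing the BCQ $q'(\vec{c})$. This is a direct instantiation of that lemma. Next, for conjunct (b), I would apply Lemma~\ref{lem:rewriting-consistency} to the CQ $\andcqs(\{q'(\vec{x}),q_i\})$, which has free variables $\vec{x}$ (since each $q_i$ is Boolean): its rewriting $\perfectref(\phikcons(\T,\P,\andcqs(\{q'(\vec{x}),q_i\})),\T)$ evaluated at $\vec{c}$ over $\I_\A$ is true iff there exists an optimal CQ-censor of $\E$ containing the BCQ $\andcqs(\{q'(\vec{c}),q_i\})$. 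The one gap to bridge here is that containing $\andcqs(\{q'(\vec{c}),q_i\})$ as a single BCQ must be equated with containing both $q'(\vec{c})$ and $q_i$; I would note that an optimal CQ-censor is closed under subqueries (since every optimal CQ-censor $\C$ satisfies $\C=\bcqclosure(\T\cup\C)$, a fact used in the proof of Lemma~\ref{lem:cqe-entails}), so $\andcqs(\{q'(\vec{c}),q_i\})\in\C$ iff $\{q'(\vec{c}),q_i\}\subseteq\C$ — using also that $\T\cup\C\models\andcqs(\{q'(\vec{c}),q_i\})$ iff $\T\cup\C\models q'(\vec{c})$ and $\T\cup\C\models q_i$. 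Hence the negated conjunct $\neg\perfectref(\phikcons(\T,\P,\andcqs(\{q'(\vec{x}),q_i\})),\T)$ at $\vec{c}$ holds iff no optimal CQ-censor contains both $q'(\vec{c})$ and $q_i$, which is precisely "$q'(\vec{c})$ is a clash for $q_i$ in $\E$."

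Combining: $\eval(q_{cl}(\vec{c}),\I_\A)$ is true iff (a) holds and, for every $q_i\in q$, $q'(\vec{c})$ is a clash for $q_i$ in $\E$. By the definition of clash for a BUCQ (the "for every $q''\in q$" clause), together with (a), this is exactly the statement that $q'(\vec{c})$ is a clash for $q$ in $\E$, completing the argument. (I should double-check that the BUCQ-clash definition does not separately re-require the existence of an optimal censor containing $q'$ — it folds into "clash for $q''$ in $\E$" for each $q''$ — so conjunct (a) is what remains of the existential requirement; since $q$ is assumed nonempty as a UCQ, this is consistent.)

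The main obstacle I anticipate is the bookkeeping around $\andcqs$ and subquery-closure: making sure that "an optimal CQ-censor contains the conjoined BCQ $\andcqs(\{q'(\vec{c}),q_i\})$" is genuinely equivalent to "it contains both," which relies on the closure-under-$\bcqclosure$ property of optimal CQ-censors and on the free variables $\vec{x}$ of $q'$ being treated as constants (as stipulated in the definition of $\phikcons$) so that the hypothesis $q_i\in\bcqclosure(\T\cup\A)$ can be used. Everything else is a mechanical two-way instantiation of Lemma~\ref{lem:rewriting-consistency}, so I do not expect further difficulty there.
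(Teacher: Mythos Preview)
Your proposal is correct and follows essentially the same approach as the paper's proof: both directions are obtained by applying Lemma~\ref{lem:rewriting-consistency} once to $q'(\vec{x})$ for the first conjunct and once to each $\andcqs(\{q'(\vec{x}),q_i\})$ for the negated conjuncts. You are in fact slightly more careful than the paper, which silently identifies ``no optimal CQ-censor contains both $q'(\vec{c})$ and $q_i$'' with ``no optimal CQ-censor contains $q'(\vec{c})\wedge q_i$''; your remark that optimal CQ-censors satisfy $\C=\bcqclosure(\T\cup\C)$ is exactly the justification needed there.
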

\ifjournal 
\begin{proof}
    Suppose $q'(\vec{c})$ is a clash for $q$ in $\E$, Then, there exists an optimal CQ-censor of $\E$ containing $q'(\vec{c})$, which by Lemma~\ref{lem:rewriting-consistency} implies that $\eval(q_r(\vec{c}),\I_\A)$ is true, where $q_r(\vec{x})=\perfectref(\phikcons(\T, \P, q'(\vec{x})),\T)$. Moreover, for every $q_i\in q$, $q'(\vec{c})$ is a clash for $q_i$ in $\E$, i.e.\ there exists no optimal CQ-censor of $\E$ containing $q'(\vec{c})\wedge q_i$, which again by Lemma~\ref{lem:rewriting-consistency} implies that $\eval(\bigwedge_{q_i\in q}\neg q_r'(\vec{c}),\I_\A)$ is true, where $q_r'(\vec{x})=\perfectref(\phikcons(\T, \P, q'(\vec{x})\wedge q_i),\T)$. Consequently, $\eval(q_{cl}(\vec{c}),\I_\A)$ is true.
    
    Conversely, suppose $\eval(q_{cl}(\vec{c}),\I_\A)$ is true. This implies that $\eval(q_r(\vec{c}),\I_\A)$ is true, hence by Lemma~\ref{lem:rewriting-consistency} it follows that there exists an optimal CQ-censor of $\E$ containing $q'(\vec{c})$. Moreover, since $\eval(q_{cl}(\vec{c}),\I_\A)$ is true, then for every $q_i\in q$, $\eval(\neg q_r'(\vec{c}),\I_\A)$ is true, hence $\eval(q_r'(\vec{c}),\I_\A)$ is false, and thus by Lemma~\ref{lem:rewriting-consistency} it follows that there exists no optimal CQ-censor of $\E$ containing $q'(\vec{c})\wedge q_i$. Consequently, $q'(\vec{c})$ is a clash for $q$ in $\E$.
\end{proof}
\fi 

It is now possible to show that, in the case of $\dlliter$ CQE instances in which $\P$ is an acyclic policy for $\T$, if a clash for a BUCQ $q$ exists, then there exists a clash for $q$ whose length depends only on the size of $\P\cup\T\cup\{q\}$.

\begin{lemma}
\label{lem:clash-dlliter-acyclic}
    Let $\E=\tup{\T,\A,\P}$ be a $\dlliter$ CQE instance such that $\P$ is acyclic for $\T$, and let $q$ be a BUCQ such that $q_i\in\bcqclosure(\T\cup\A)$ for every $q_i\in q$. Then, $\E\modelscqe q$ iff there exists no BCQ $q'$ such that $q'$ is a clash for $q$ in $\E$ and $\cqlength(q')\leq \ell$, where $\ell=m\cdot k^h$, $m$ is the number of BCQs in $q$, $k=\maxlengthatoms(\P)$, and $h$ is the number of EDs in $\P$.
\end{lemma}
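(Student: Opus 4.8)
The plan is to establish that $\E\modelscqe q$ holds if and only if there is no clash for $q$ in $\E$ at all, and then to show that whenever a clash exists, a \emph{small} clash — of length at most $\ell=m\cdot k^h$ — also exists. The first equivalence is the easier half: if $q'$ is a clash for $q$ in $\E$, then there is an optimal CQ-censor $\C$ containing $q'$, and by definition of clash $\C$ cannot contain any $q_i\in q$ together with $q'$; since $q'\in\C$, this means $q_i\notin\C$ for every $q_i\in q$, so (using that $\T\cup\C\models q$ iff some $q_i\in\C$, which follows from Proposition~\ref{pro:dlliter-bcqk-projection} and the $\dlliter$ rewriting) $\T\cup\C\not\models q$, hence $\E\not\modelscqe q$. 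Conversely, if $\E\not\modelscqe q$, pick an optimal CQ-censor $\C$ with $\T\cup\C\not\models q$; I would then exhibit a single BCQ $q'\in\C$ that is simultaneously a clash for every $q_i\in q$ — for instance, iterating over the $q_i$ one would argue that each $q_i$ is blocked by some member of $\C$, and then I need to combine these into one BCQ $q'$ (taking a conjunction of the blocking atoms), verifying via Lemma~\ref{lem:kcons} that $q'\in\C$ still holds and that adding any $q_i$ makes $\kcons$ escape $\bcqclosure(\T\cup\A)$.

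The substantive part is the length bound. Suppose $q'$ is a clash for $q$ in $\E$. By Lemma~\ref{lem:rewriting-consistency} (via $\kcons$ and Lemma~\ref{lem:kcons}), there is an optimal CQ-censor containing $q'$ but, for each $q_i$, no optimal CQ-censor containing both $q'$ and $q_i$; the latter means, by Lemma~\ref{lem:kcons}, that $\kcons(\T,\P,\{q',q_i\})\not\subseteq\bcqclosure(\T\cup\A)$, i.e.\ the forward closure of $\{q',q_i\}$ under $\P$ and $\T$ produces some BCQ not entailed by $\T\cup\A$. The key structural observation I would exploit is that, because $\P$ is acyclic for $\T$, the $\kcons$-closure starting from a single CQ $\gamma$ has a bounded "derivation depth": every BCQ it generates is obtained by a chain of ED applications of length at most $h$ (the number of EDs), since the dependency graph $G(\T,\P)$ has no cycle through a P-edge, so no ED can fire twice along a productive chain in a way that feeds back. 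Each ED application, in the worst case, takes a CQ of length $n$ and, after matching its body (which has at most $k$ atoms) and rewriting with $\perfectref$ (which preserves maximum CQ length by Proposition~\ref{pro:rewriting-dlliter}'s remark), contributes a head of at most $k$ atoms — but the body may need to be matched against a conjunction of previously-derived atoms, so the relevant "context" grows multiplicatively, yielding the $k^h$ factor; the $m$ factor comes from having to simultaneously block all $m$ disjuncts $q_i$ of $q$, which requires conjoining $m$ such blocking witnesses into a single $q'$.

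Concretely, the argument I plan is: take the clash $q'$ and the optimal CQ-censor $\C\ni q'$; for each $q_i$, the failure of $\kcons(\T,\P,\C\cup\{q_i\})\subseteq\bcqclosure(\T\cup\A)$ is witnessed by a finite derivation, and by acyclicity only a bounded (depth $\le h$) sub-derivation using a bounded number of atoms from $q'$ — at most $k^h$ of them — is actually needed to reach the offending BCQ. Let $q'_i$ be the subquery of $q'$ consisting of those atoms (note $\C$ is closed under subqueries since it equals $\C'\cap\BCQ_k$ for an optimal censor $\C'$, as in the proof of Lemma~\ref{lem:cqe-entails}, so $q'_i\in\C$). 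Then set $q'' = \andcqs(\{q'_1,\dots,q'_m\})$; this has at most $m\cdot k^h=\ell$ atoms, it still belongs to $\C$ (subquery of $q'$, up to homomorphic equivalence / $\C$ closure), and each $q_i$ is still blocked because the offending derivation from $\{q'_i,q_i\}$ is preserved. Hence $q''$ is a clash for $q$ of length $\le\ell$, which contradicts the assumed non-existence and forces $\E\modelscqe q$; the converse direction (no small clash $\Rightarrow$ $\E\modelscqe q$) follows from the first paragraph's equivalence since a small clash is in particular a clash.

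\textbf{Main obstacle.} The delicate point is making the "bounded derivation / $k^h$ atoms suffice" argument precise: I need to show that acyclicity of $\P$ for $\T$ really does bound the depth of any $\kcons$-derivation by the number of EDs $h$, and to track carefully how the atom-count of the relevant fragment of $q'$ grows at each of the $h$ levels — this is where the $k^h$ bound comes from and where an off-by-one or a hidden blow-up could creep in. A secondary subtlety is the interaction with $\perfectref$: since $\kcons$ is defined via entailment $\T\cup\S\models q_b(\vec t_1,\vec t_2)$ rather than purely syntactic rewriting, I must invoke Proposition~\ref{pro:dlliter-bcqk-projection} to reduce entailment to the bounded-length fragment $\C\cap\BCQ_k$ and the remark after Proposition~\ref{pro:rewriting-dlliter} that $\perfectref$ does not increase maximum CQ length, so that the length bookkeeping survives the ontological reasoning step.
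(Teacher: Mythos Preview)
Your overall plan is sound and would yield a correct proof, but it is genuinely different from the paper's argument. The paper does \emph{not} analyze forward $\kcons$-derivations directly. Instead it (i) translates $\P$ and $\T$ into a set of tuple-generating dependencies $\TGD(\P,\T)$, (ii) invokes an existing \emph{backward} UCQ-rewriting algorithm for TGDs on each ED body, (iii) proves that acyclicity forces this rewriting to terminate with every rewritten body of length $\le k^h$ (property (PR1)), and (iv) shows via the resulting ``ED-closure'' that $q'$ is a clash for $q$ iff for every $q_i$ the conjunction $q'\wedge q_i$ homomorphically satisfies one such bounded-length rewritten body whose head is not entailed by $\T\cup\A$ (properties (PR2)/(PR3)). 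The $m\cdot k^h$ bound then falls out by keeping, for each $q_i$, only the $\le k^h$ atoms of $q'$ hit by that homomorphism. Your route---bound the depth of a forward $\kcons$-derivation by $h$ and the fan-in per level by $k$, then extract the $\le k^h$ atoms of $q'$ actually used and conjoin across the $m$ disjuncts---is the forward dual of this, and it works; the paper's version buys reuse of off-the-shelf TGD-rewriting correctness, while yours is more self-contained but requires you to carry the bookkeeping yourself.

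Two points to tighten. First, in your ``Concretely'' paragraph you write $\kcons(\T,\P,\C\cup\{q_i\})$ but then speak of ``atoms from $q'$'': decide whether you are shrinking a given clash $q'$ (use $\kcons(\T,\P,\{q',q_i\})$) or extracting directly from the censor $\C$ (then the relevant atoms live in the canonical model of $\C$, not in a single $q'$). Either works, but do not conflate them. Second, your justification that the derivation depth is $\le h$ (``no ED can fire twice along a productive chain'') is the crux and deserves an explicit argument: if the same ED $\delta$ occurred twice on a chain, a head predicate of $\delta$ would reach (via T- and P-edges) a body predicate of $\delta$, which together with the P-edge inside $\delta$ closes a cycle containing a P-edge---contradicting acyclicity. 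Also, your parenthetical that $\C$ is closed under subqueries ``since it equals $\C'\cap\BCQk$'' mis-cites the mechanism; the correct reason is simply that an optimal CQ-censor is closed under $\T$-entailment of BCQs (as noted in the proof of Theorem~\ref{thm:cqe-icqe}), and subqueries are entailed.
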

\ifjournal
\begin{proof}
To prove the lemma, we need to introduce some auxiliary definitions and properties.

Given an ED $\delta$ of the form $\forall\vec{x}\,(K q_b\rightarrow K q_h)$, we denote by $\body(\delta)$ the CQ $q_b$ and denote by $\head(\delta)$ the CQ $q_h$.
Also, we use the notation $\delta(\vec{x})$ to indicate an ED with universally quantified variables $\vec{x}$.

\newcommand{\edclosure}{\textit{ED-closure}}

Given an ED $\delta$ of the form $\forall x_1,\ldots,x_m\, (K q_b \rightarrow K q_h)$,
we define $\TGD(\delta)$ as the following FO implication corresponding to a \emph{tuple-generating dependency} (\emph{TGD}), also known as \emph{existential rule} (see e.g.\ \cite{CaliGL12,BLMS11}):
\[
\forall x_1,\ldots,x_m\, (q_b \wedge\bigwedge_{i=1}^m\constpred(x_i) \rightarrow q_h)
\]
Moreover, given a policy $\P$, we define $\TGD(\P)$ as the set of TGDs $=\{\TGD(\delta)\mid\delta\in\P\}$.

Then, given a $\dlliter$ positive TBox inclusion $\eta$, we define $\TGD(\eta)$ as the TGD corresponding to $\eta$, i.e.:
\[
\begin{array}{l@{\ =\ }l}
  \TGD(A\ISA B) & \forall x (A(x)\rightarrow B(x)) \\
  \TGD(A\ISA \exists s) & \forall x (A(x)\rightarrow \exists z\, s(x,z)) \\
  \TGD(A\ISA \exists s^-) & \forall x (A(x)\rightarrow \exists z\, s(z,x)) \\
  \TGD(\exists r\ISA A) & \forall x,y (r(x,y)\rightarrow A(x)) \\
  \TGD(\exists r\ISA \exists s) & \forall x,y (r(x,y)\rightarrow \exists z \, s(x,z)) \\
  \TGD(\exists r\ISA \exists s^-) & \forall x,y (r(x,y)\rightarrow \exists z \, s(z,x)) \\
  \TGD(\exists r^-\ISA A) & \forall x,y (r(y,x)\rightarrow A(x)) \\
  \TGD(\exists r^-\ISA \exists s) & \forall x,y (r(y,x)\rightarrow \exists z \, s(x,z)) \\
  \TGD(\exists r^-\ISA \exists s^-) & \forall x,y (r(y,x)\rightarrow \exists z \, s(z,x)) \\
  \TGD(r\ISA s) & \forall x,y (r(x,y)\rightarrow s(x,y)) \\
  \TGD(r\ISA s^-) & \forall x,y (r(x,y)\rightarrow s(y,x)) \\
  \TGD(r^-\ISA s) & \forall x,y (r(y,x)\rightarrow s(x,y)) \\
  \TGD(r^-\ISA s^-) & \forall x,y (r(y,x)\rightarrow s(y,x))  
\end{array}  
\]
We also define $\TGD(\T)=\{\TGD(\eta) \mid \eta\in\T\}$ for a $\dlliter$ TBox $\T$.
Finally, we denote by $\TGD(\P,\T)$ the set $\TGD(\P)\cup\TGD(\T)$.

\ 

\newcommand{\noconstpred}{\textit{No}\constpred}

In the following, given a CQ $q(\vec{x})$, we denote by $\noconstpred(q(\vec{x}))$ the CQ obtained from the $q(\vec{x})$ by eliminating the atoms with predicate $\constpred$.

We now make use of an existing query rewriting algorithm for TGDs, in particular the algorithm shown in~\cite{DBLP:journals/semweb/KonigLMT15}
, which computes a UCQ rewriting (i.e.\ a set of CQs) of a CQ with respect to a set of TGDs. More precisely, we denote by $\ucqrewrite(q(\vec{x}),\Sigma)$ the set of CQs returned by such an algorithm for the input CQ $q(\vec{x})$ and set of TGDs $\Sigma$.

We first show the following property:
\begin{quote}
    (PR1): Let $\delta(\vec{x})\in\P$. Then, the algorithm $\ucqrewrite$ on input $\body(\delta(\vec{x}))$, $\TGD(\P,\T)$ terminates and, for every $q'(\vec{x}')\in\ucqrewrite(\body(\delta(\vec{x})),\TGD(\P,\T))$, it holds $\cqlength(\noconstpred(q'(\vec{x}')))\leq k^h$.
\end{quote}

Proof of (PR1): It is not hard to verify that, since $\P$ is an acyclic policy and $\T$ is a $\dlliter$ TBox, the algorithm $\ucqrewrite(q,\TGD(\P,\T))$
terminates on every CQ $q$ (and thus in particular on $\body(\delta(\vec{x}))$.
Indeed, every atom whose predicate is $\constpred$ can not be rewritten at all (since the predicate $\constpred$ does not appear in the right-hand side of any TGD in $\TGD(\P,\T)$), while every atom whose predicate is not $\constpred$ can be rewritten at most once by TGDs from $\TGD(\P)$ (because $\P$ is acyclic for $\T$). Moreover, the set $\TGD(\T)$ is linear, which implies that only a finite number of consecutive applications of such TGDs can be applied to the rewriting of any atom.
Moreover, since the application of the TGDs in $\TGD(\T)$ in the rewriting cannot increase the size of the rewritten query, it follows that every atom of $\body(\delta(\vec{x}))$ can be rewritten in no more than $k^{h-1}$ atoms: since the length of $\body(\delta(\vec{x}))$ is not greater than $k$, it follows that every CQ returned by $\ucqrewrite(\body(\delta(\vec{x})),\TGD(\P,\T))$ has a length not greater than $k^h$, thus property (PR1) holds.

\

We now make use of $\ucqrewrite$ to define $\edclosure(\P,\T)$ as the following set of EDs:

\begin{small}
\[
\begin{array}{l}
\{ \forall \vec{x}' (K \noconstpred(q'(\vec{x}'))\rightarrow K \head(\delta(\vec{x}')) \mid \\
\;\delta(\vec{x})\in\P \wedge q'(\vec{x}')\in\ucqrewrite(\body(\delta(\vec{x})),\TGD(\P,\T)) \}
\end{array}
\]
\end{small}

\

Then, we prove the following property:
\begin{quote}
(PR2): Let $q$ be a BCQ such that $\T\cup\A\models q$. There exists an optimal CQ-censor of $\E$ containing $q$ iff, for every ED $\delta(\vec{x})\in\edclosure(\P,\T)$,
and for every ground substitution $\vec{c}$ of $\vec{x}$, if $q\models\body(\delta(\vec{c}))$, then $\T\cup\A\models\head(\delta(\vec{c}))$.
\end{quote}

Proof of (PR2):
Suppose that there exists an optimal CQ-censor $\C$ of $\E$ containing $q$. Now suppose there exists $\delta(\vec{x})\in\edclosure(\P,\T)$ and a ground substitution $\vec{c}$ of $\vec{x}$ such that $\{q\}\models\body(\delta(\vec{c}))$ and $\T\cup\A\not\models\head(\delta(\vec{c}))$. Of course, since $q\in\C$, it follows that $\T\cup\C\models\body(\delta(\vec{c}))$. Now let $\delta_0(\vec{x})$ be the ED in $\P$ that has been rewritten into $\delta(\vec{x})$. Given the properties of $\ucqrewrite$, it follows that, if $\T\cup\C\models\body(\delta(\vec{c}))$, then $\T\cup\C\models\body(\delta_0(\vec{c}))$. Consequently, since $\C$ is an optimal CQ-censor of $\E$, $\T\cup\C\models\head(\delta(\vec{c}))$, and since $\C\subseteq\bcqclosure(\T\cup\A)$, it follows that $\T\cup\A\models\head(\delta(\vec{c}))$.

Conversely, suppose there exist no optimal CQ-censor of $\E$ containing $q$. Then, by Lemma~\ref{lem:kcons}, $\kcons(\T,\P,\{q\})\not\subseteq\bcqclosure(\T\cup\A)$.
By Definition~\ref{def:kcons}, it follows that there exist $\delta(\vec{x})\in\P$ and a ground substitution $\vec{c}$ for $\vec{x}$ such that $\T\cup\kcons(\T,\P,\{q\})\models\body(\delta(\vec{c}))$ and $\T\cup\A\not\models\head(\delta(\vec{c}))$.

Now, it is easy to verify that $\T\cup\kcons(\T,\P,\{q\})\models \body(\delta(\vec{c}))$ iff $\chase(q,\TGD(\P,\T))\models\body(\delta(\vec{c}))$, where $\chase$ is a well-known algorithm \cite{AbHV95} that, given a set of atoms\footnote{As often done in previous work, here we apply the chase to a BCQ rather than to a set of ground atoms: it suffices to consider the BCQ as the set of its atoms, and its variables as a new and additional set of constants.} and a set of TGDs, returns a (possibly infinite) set of atoms that contains the initial set and is isomorphic to a \emph{canonical model} (or minimal model) of all the input formulas.

Moreover, the known properties of $\ucqrewrite$ imply that
$\chase(q,\TGD(\P,\T))\models \body(\delta(\vec{c}))$ iff there exists $q'(\vec{x}')\in\ucqrewrite(\body(\delta(\vec{x}),\TGD(\P,\T)))$ such that $\{q\}\models q'(\vec{c})$.

Consequently, by the definition of $\edclosure$, there exists $\delta'(\vec{x}')\in\edclosure(\P,\T)$ such that $\{q\}\models\body(\delta'(\vec{c}))$ and $\head(\delta'(\vec{c}))=\head(\delta(\vec{c}))$, and thus $\T\cup\A\not\models\head(\delta'(\vec{c}))$, which concludes the proof.










As a consequence of (PR2), we immediately obtain the following property:
\begin{quote}
(PR3): $q'$ is a clash for $q$ in $\E$ iff, for every $q_i\in q$, there exist an ED $\delta(\vec{x})\in\edclosure(\P,\T)$ and a ground substitution $\vec{c}$ for $\vec{x}$ such that $\{q'\wedge q_i\}\models\body(\delta(\vec{c}))$.
\end{quote}

Finally, it is immediate to verify that $\E\modelscqe q$ iff there exists no BCQ $q'$ such that $q'$ is a clash for $q$ in $\E$. This property, property (PR1), and property (PR3) immediately imply the lemma.
\end{proof}
\fi 

We can now define an FO sentence and then prove that it provides a first-order rewriting of SC-entailment of BUCQs.


\begin{definition}
\label{def:sc-entailed}
Let $\T$ be a $\dlliter$ TBox, let $\P$ be an acyclic policy for $\T$, and let $q \in \BUCQ$. We define the FO sentence $\censentailed(q,\T,\P)$ as follows:

\begin{small}
\[
\begin{array}{l}
\displaystyle
\bigvee_{q^p\in\wp^-(q)}
\Big(
 \bigwedge_{q_i\in q^p}\perfectref(q_i,\T)\: \wedge 
 \bigwedge_{q_i\in q\setminus q^p}\neg\perfectref(q_i,\T)\: \wedge \\[1mm]
\displaystyle
\qquad\;\;
\bigwedge_{q'(\vec{x})\in \Q}  
\neg \big( \exists \vec{x} \,(\clash(q^p,q'(\vec{x}),\T,\P) \wedge \bigwedge_{x\in\vec{x}} Ind(x) ) \big)
\Big)
\end{array}
\]
\end{small}

\noindent 
with $\wp^-(q)=\wp(q)\setminus\{\emptyset\}$, where $\wp(q)$ is the powerset of $q$, 
$\Q$ is the set of CQs defined over the predicates and constants occurring in $\{q\} \cup\P \cup \T$ and whose maximum length is $m\cdot k^h$,
$m$ is the number of BCQs in $q$, $h$ is the number of EDs in $\P$, and $k=\maxlengthatoms(\P)$.
\end{definition}

Based on Lemma~\ref{lem:clash} and Lemma~\ref{lem:clash-dlliter-acyclic}, we are able to prove the following crucial property for $\censentailed(q,\T,\P)$.

\begin{lemma}
\label{lem:rewriting-cqe-entailment}
    Let $\E=\tup{\T,\A,\P}$ be a $\dlliter$ CQE instance such that 
    $\P$ is an acyclic policy for $\T$, and let $q \in \BUCQ$. Then, 
    $\censentailed(q,\T,\P)$ is a first-order rewriting of SC-entailment of $q$ for $\T$ and $\P$.
\end{lemma}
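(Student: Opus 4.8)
\textbf{Proof plan for Lemma~\ref{lem:rewriting-cqe-entailment}.}
The plan is to show that, for an arbitrary ABox $\A$, the sentence $\censentailed(q,\T,\P)$ evaluates to true in $\I_\A$ if and only if $\tup{\T,\A,\P}\modelscqe q$. I would fix $\E=\tup{\T,\A,\P}$ and reason in two directions after first unpacking what the big disjunction over $\wp^-(q)$ is doing: each disjunct corresponds to a guess of the nonempty subset $q^p\subseteq q$ of disjuncts of $q$ that are ``actually entailed by $\T\cup\A$'' (this is what the conjuncts $\bigwedge_{q_i\in q^p}\perfectref(q_i,\T)$ and $\bigwedge_{q_i\in q\setminus q^p}\neg\perfectref(q_i,\T)$ pin down, using Proposition~\ref{pro:rewriting-dlliter}, i.e.\ $\T\cup\A\models q_i$ iff $\eval(\perfectref(q_i,\T),\I_\A)$). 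So exactly one disjunct can possibly have its first two conjuncts satisfied, namely the one where $q^p=\{q_i\in q\mid \T\cup\A\models q_i\}$; if no $q_i$ is entailed then $q^p$ is empty, no disjunct fires, and correspondingly no optimal CQ-censor can contain any $q_i$ (each optimal censor is a subset of $\bcqclosure(\T\cup\A)$), so $\E\not\modelscqe q$ — matching the formula returning false.

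Assuming $q^p\neq\emptyset$ is the set of $q_i$ with $\T\cup\A\models q_i$, the remaining conjunct states that for no CQ $q'(\vec{x})\in\Q$ and no tuple $\vec{c}$ of constants (enforced by the $\bigwedge_{x\in\vec x}Ind(x)$ guard, since $Ind^{\I_\A}=\individualSet$) is $\clash(q^p,q'(\vec{x}),\T,\P)$ satisfied at $\vec{c}$. By Lemma~\ref{lem:clash}, $\eval(\clash(q^p,q'(\vec{x}),\T,\P)(\vec c),\I_\A)$ holds iff $q'(\vec c)$ is a clash for the BUCQ $q^p$ in $\E$ (the hypothesis of Lemma~\ref{lem:clash} that each $q_i\in q^p$ is in $\bcqclosure(\T\cup\A)$ is exactly what $q^p$'s definition guarantees). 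Hence the last conjunct says: there is no clash $q'(\vec c)$ for $q^p$ in $\E$ whose length is at most $m\cdot k^h$ and which is built from the predicates/constants of $\{q\}\cup\P\cup\T$. Now I invoke Lemma~\ref{lem:clash-dlliter-acyclic}: since $\P$ is acyclic for $\T$ and each $q_i\in q^p\subseteq\bcqclosure(\T\cup\A)$, we have $\E\modelscqe q^p$ iff there is no clash for $q^p$ in $\E$ of length $\le \ell=m\cdot k^h$. Two small gaps need to be bridged here: (1) Lemma~\ref{lem:clash-dlliter-acyclic} is stated for a BUCQ $q$ with $m$ its number of disjuncts, whereas here $m$ is the number of disjuncts of the original $q$, not of $q^p$; since $q^p\subseteq q$ has at most $m$ disjuncts, the bound $\ell$ is still valid (a clash of minimal length for $q^p$ has length $\le m'\cdot k^h\le m\cdot k^h$ where $m'=|q^p|$). (2) The set $\Q$ restricts clashes to those over the signature of $\{q\}\cup\P\cup\T$; I must argue that the clash produced by Lemma~\ref{lem:clash-dlliter-acyclic} can be taken over this signature — this follows from inspecting its proof (the clash arises from bodies of EDs in $\edclosure(\P,\T)$, which are rewritings of ED bodies via $\TGD(\P,\T)$, all over predicates of $\P\cup\T$; the constants appearing are those of $q$ and $\P$), so restricting $\Q$ to this signature loses no clashes. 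Conversely every $q'(\vec c)\in\Q$ with constants $\vec c$ is a legitimate candidate clash, so the ``no clash in $\Q$'' condition is equivalent to ``no clash of length $\le\ell$''.

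Putting it together: $\eval(\censentailed(q,\T,\P),\I_\A)$ is true iff the (unique possible) disjunct for $q^p=\{q_i\mid\T\cup\A\models q_i\}\neq\emptyset$ has all three conjunct-blocks satisfied, iff $q^p\neq\emptyset$ and there is no short clash for $q^p$ in $\E$, iff (by Lemma~\ref{lem:clash-dlliter-acyclic}) $q^p\neq\emptyset$ and $\E\modelscqe q^p$. It remains to check that $\E\modelscqe q$ iff ($q^p\neq\emptyset$ and $\E\modelscqe q^p$): for any optimal CQ-censor $\C$, $\T\cup\C\models q$ iff $\T\cup\C\models q_i$ for some $q_i\in q$, and since $\C\subseteq\bcqclosure(\T\cup\A)$ such a $q_i$ must lie in $q^p$; hence $\T\cup\C\models q$ iff $\T\cup\C\models q^p$, and taking the conjunction over all $\C\in\optcqcens(\E)$ gives $\E\modelscqe q$ iff $\E\modelscqe q^p$ (with the convention that $\E\modelscqe q^p$ fails when $q^p=\emptyset$, since no $q_i$ is entailed by any censor). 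This closes the equivalence.

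\textbf{Main obstacle.} The delicate point is item (2) above: verifying that restricting the clash-candidate set $\Q$ to CQs over the predicates and constants of $\{q\}\cup\P\cup\T$, and to length $\le m\cdot k^h$, does not miss any clash. This requires leaning on the internal structure of the proof of Lemma~\ref{lem:clash-dlliter-acyclic} — specifically that a minimal-length clash for $q^p$ can always be realized as (a homomorphic copy of) a body of an ED in $\edclosure(\P,\T)$ minus the $q_i$ part, hence over the right signature and within the length bound — rather than just its statement. A secondary subtlety is the bookkeeping around $m$ versus $|q^p|$ and the empty-$q^p$ corner case, which is routine once stated carefully. Everything else (the $\perfectref$ conjuncts characterizing which $q_i$ are entailed; the $Ind$ guard forcing genuine constants; Lemma~\ref{lem:clash} translating $\clash(\cdot)$ into the clash relation) is a direct application of results already established.
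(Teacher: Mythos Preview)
Your proposal is correct and follows essentially the same route as the paper's proof: identify the unique disjunct $q^p=\{q_i\in q\mid \T\cup\A\models q_i\}$ via the $\perfectref$ conjuncts (Proposition~\ref{pro:rewriting-dlliter}), translate the $\clash$ conjunct via Lemma~\ref{lem:clash}, and then invoke Lemma~\ref{lem:clash-dlliter-acyclic} to conclude $\E\modelscqe q^p$, which is equivalent to $\E\modelscqe q$. In fact you are more careful than the paper on two points it leaves implicit: the harmless slack between $|q^p|$ and $m$ in the length bound, and the fact that the signature restriction on $\Q$ loses no clashes (which, as you note, follows from the structure of the proof of Lemma~\ref{lem:clash-dlliter-acyclic}); your explicit argument that $\E\modelscqe q$ iff $\E\modelscqe q^p$ is also cleaner than the paper's, which in the backward direction jumps directly from ``no clash for $q$'' to $\psi_2(q^p)$ without spelling out that a clash for $q^p$ is automatically a clash for $q$ (since every $q_i\in q\setminus q^p$ lies outside $\bcqclosure(\T\cup\A)$ and hence is vacuously clashed by any $q'$ contained in some optimal censor).
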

\ifjournal
\begin{proof}
    First, let $\psi_1(q^p)$ be the sentence
    \[
    \begin{array}{l}
    \displaystyle
     \bigwedge_{q_i\in q^p} \perfectref(q_i,\T)\: \wedge 
     \bigwedge_{q_i\in q\setminus q^p}\neg\perfectref(q_i,\T)
    \end{array}
    \]
    and let $\psi_2(q^p)$ be the sentence 
    \[
    \begin{array}{l}
    \displaystyle
    \bigwedge_{q'(\vec{x})\in \Q}  
    \neg \big( \exists \vec{x} \,(\clash(q^p,q'(\vec{x}),\T,\P) \wedge \bigwedge_{x\in\vec{x}} Ind(x) )
    \end{array}
    \]
    
    Now, suppose $\eval(\censentailed(q,\T,\P),\I_\A)$ is true. Thus,
    there exists a non-empty subset $q^p$ of $q$ such that $\eval(\psi_1(q^p)\wedge\psi_2(q^p),\I_\A)$ is true. Then, since $\eval(\psi_1(q^p),\I_\A)$ is true, it follows that $q_i\in\bcqclosure(\T\cup\A)$ for every $q_i\in q$. Therefore, from Lemma~\ref{lem:clash} and from the fact that $\eval(\psi_2(q^p),\I_\A)$ is true, it follows that there exists no BCQ $q'$ such that $q'\in\BCQl(\{q\} \cup\P \cup \T)$\nb{Nella versione lunga, chiarire cos'è $\BCQl$} and $q'$ is a clash for $q^p$ in $\E$.
    Consequently, by Lemma~\ref{lem:clash-dlliter-acyclic} we conclude that $\E\modelscqe q^p$, which immediately implies that $\E\modelscqe q$.
    
    Conversely, suppose $\E\modelscqe q$. Let $q^p$ be the non-empty subset of BCQs $\{q_i\in q \mid q_i\in\bcqclosure(\T\cup\A)\}$. Then, $\eval(\psi_1(q^p),\I_\A)$ is true. Moreover, since there exists no BCQ $q'$ such that $q'\in\BCQ$ and $q'$ is a clash for $q$ in $\E$, by Lemma~\ref{lem:clash} it follows that $\eval(\psi_2(q^p),\I_\A)$ is true. Consequently, $\eval(\censentailed(q,\T,\P),\I_\A)$ is true.
\end{proof}
\fi

The above first-order rewritability property of SC-entailment of BUCQs immediately implies the next result.

\begin{theorem}
\label{thm:cqe-entailment-full}
    \cqe-\acdecprob[$\BUCQ$] is in $\aczero$ in data complexity.
\end{theorem}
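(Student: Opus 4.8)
The plan is to derive Theorem~\ref{thm:cqe-entailment-full} directly from Lemma~\ref{lem:rewriting-cqe-entailment}, which already gives us the essential ingredient: for every $\dlliter$ TBox $\T$ and acyclic policy $\P$, the FO sentence $\censentailed(q,\T,\P)$ of Definition~\ref{def:sc-entailed} is a first-order rewriting of SC-entailment of $q$ for $\T$ and $\P$, meaning $\tup{\T,\A,\P}\modelscqe q$ iff $\eval(\censentailed(q,\T,\P),\I_\A)$ is true for every ABox $\A$. Since in the data complexity setting $\T$, $\P$, and $q$ are all fixed, the sentence $\censentailed(q,\T,\P)$ is a \emph{fixed} FO sentence: its size does not depend on the ABox $\A$ at all. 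Indeed, inspecting Definition~\ref{def:sc-entailed}, the disjunction ranges over the finitely many non-empty subsets of $q$, each $\perfectref(q_i,\T)$ is a fixed UCQ by Proposition~\ref{pro:rewriting-dlliter}, the set $\Q$ of CQs is determined solely by the predicates and constants in $\{q\}\cup\P\cup\T$ together with the fixed length bound $m\cdot k^h$, and each $\clash(q^p,q'(\vec{x}),\T,\P)$ is a fixed FO formula built from finitely many $\perfectref$-rewritings of $\phikcons$-expansions; all of these depend only on the fixed components.

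Next I would invoke the standard fact, going back to \cite{Vard82}, that evaluating a fixed first-order sentence over a finite relational structure — here $\I_\A$, whose active domain and relations are read off from $\A$ in a straightforward way — is in $\aczero$ in data complexity; equivalently, every first-order-rewritable problem lies in $\aczero$. More precisely, I would note that the map $\A\mapsto\I_\A$ is trivially computable (it just reinterprets the labeled nulls as constants and adds the $\constpred$ predicate populated by $\individualSet$), and that $\aczero$ is exactly the data complexity class captured by FO model checking. Combining this with Lemma~\ref{lem:rewriting-cqe-entailment} yields that $X=\cqe$ entailment of BUCQs over $\dlliter$ CQE instances with acyclic policies, i.e.\ the problem $\cqe$-\acdecprob[$\BUCQ$], is in $\aczero$ in data complexity, which is precisely the statement of the theorem.

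There is essentially no hard step here: the theorem is a routine corollary of the rewriting lemma, and the only thing to be careful about is making explicit that ``first-order rewriting'' in the sense defined before Definition~\ref{def:acyclic} — a single fixed FO sentence per fixed $(\T,\P,q)$, evaluated over $\I_\A$ — is exactly what is needed to transfer the known $\aczero$ upper bound for FO query evaluation. If I wanted to be fully self-contained I would also remark that the $\censentailed$ sentence can be effectively \emph{computed} from $\T$, $\P$, and $q$ (termination of the $\ucqrewrite$ procedure used inside is guaranteed by acyclicity, as argued in the proof of Lemma~\ref{lem:clash-dlliter-acyclic}), so that the rewriting is not merely known to exist but is produced by an algorithm; however, for a pure data-complexity membership claim even non-uniform $\aczero$ suffices, so this is not strictly required. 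The genuine mathematical content — the correctness of the rewriting, and in particular the length bound $\ell=m\cdot k^h$ on clashes that makes $\Q$ finite — has already been discharged in Lemmas~\ref{lem:rewriting-consistency}, \ref{lem:clash}, and \ref{lem:clash-dlliter-acyclic}, so the proof of the theorem itself is a one-line appeal to those results plus Vardi's theorem.
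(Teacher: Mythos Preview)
Your proposal is correct and follows exactly the paper's approach: the paper simply states that the theorem is an immediate consequence of Lemma~\ref{lem:rewriting-cqe-entailment} (the first-order rewritability of SC-entailment via $\censentailed(q,\T,\P)$), and you spell out precisely this derivation together with the standard $\aczero$ bound for fixed FO sentence evaluation.
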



Finally, given a BUCQ $q$, we define the sentence $\icensentailed(q,\T,\P)$ as follows:
\[
\begin{array}{l}
\bigvee_{q_i\in q} \censentailed(q_i,\T,\P)
\end{array}
\]

It is then easy to prove the analogous of Lemma~\ref{lem:rewriting-cqe-entailment} (and Theorem~\ref{thm:cqe-entailment-full}) for $\icensentailed(q,\T,\P)$ and IC-entailment.

\begin{lemma}
\label{lem:rewriting-icqe-entailment}
    Let $\E=\tup{\T,\A,\P}$ be a $\dlliter$ CQE instance such that 
    $\P$ is an acyclic policy for $\T$, and let $q \in \BUCQ$. Then, 
    $\icensentailed(q,\T,\P)$ is a first-order rewriting of IC-entailment of $q$ for $\T$ and $\P$.
\end{lemma}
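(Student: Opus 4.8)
The plan is to reduce the claim directly to Lemma~\ref{lem:rewriting-cqe-entailment} together with Theorem~\ref{thm:cqe-icqe}. First I would recall the basic semantic fact that, for a $\dlliter$ TBox $\T$ and any BUCQ $q$, we have $\E \modelsicqe q$ iff there exists a BCQ $q_i \in q$ such that $\E \modelsicqe q_i$; this is the same observation used in the proof of Proposition~\ref{pro:indistinguishability-icqe}, and it holds because $\T \cup \intcens \models q$ iff $\T \cup \intcens \models q_i$ for some disjunct $q_i$ (a set of ground/closed atoms entailing a union of CQs must entail one of its disjuncts, and $\perfectref$ preserves this). Then, by Theorem~\ref{thm:cqe-icqe}, $\E \modelsicqe q_i$ iff $\E \modelscqe q_i$ for every BCQ $q_i$. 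Combining the two, $\E \modelsicqe q$ iff $\E \modelscqe q_i$ for some $q_i \in q$.

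Next I would invoke Lemma~\ref{lem:rewriting-cqe-entailment} applied to each individual disjunct $q_i$ (which is itself a BUCQ, indeed a BCQ): for every ABox $\A$, $\E = \tup{\T,\A,\P} \modelscqe q_i$ iff $\eval(\censentailed(q_i,\T,\P),\I_\A)$ is true. Here it is crucial that $\P$ being acyclic for $\T$ does not depend on $\A$, so the hypothesis of Lemma~\ref{lem:rewriting-cqe-entailment} is met for each call. Chaining the equivalences, $\E \modelsicqe q$ iff $\eval(\censentailed(q_i,\T,\P),\I_\A)$ is true for some $q_i \in q$, i.e.\ iff $\eval\big(\bigvee_{q_i\in q}\censentailed(q_i,\T,\P),\I_\A\big)$ is true, which is exactly $\eval(\icensentailed(q,\T,\P),\I_\A)$. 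Since $\icensentailed(q,\T,\P)$ is a finite disjunction of FO sentences and hence an FO sentence, this establishes that it is a first-order rewriting of IC-entailment of $q$ for $\T$ and $\P$.

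There is essentially no deep obstacle here, since the real work has already been done in Lemma~\ref{lem:rewriting-cqe-entailment} and Theorem~\ref{thm:cqe-icqe}; the only point requiring a little care is the very first step, namely justifying that IC-entailment of a BUCQ decomposes disjunct-wise. The subtlety is that $\intcens$ is a \emph{set of BCQs} rather than a single ABox, so one should argue via $\perfectref$: $\T \cup \intcens \models q$ iff $\intcens \models \perfectref(q,\T)$, and a UCQ $\perfectref(q,\T)$ is entailed by $\intcens$ iff one of its CQ disjuncts has a homomorphism into $\intcens$; pulling this back through $\perfectref$ gives $\T \cup \intcens \models q_i$ for the corresponding original disjunct $q_i$. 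Once that is noted, the remainder is a routine chain of "iff"s, and no further combinatorial or complexity-theoretic argument is needed. The in-$\aczero$ corollary for IC-entailment then follows exactly as Theorem~\ref{thm:cqe-entailment-full} followed from Lemma~\ref{lem:rewriting-cqe-entailment}.
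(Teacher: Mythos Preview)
Your proposal is correct and follows essentially the same route as the paper: the paper's proof also reduces immediately to Lemma~\ref{lem:rewriting-cqe-entailment} together with the observation that $\E\modelsicqe q$ iff $\E\modelscqe q_i$ for some disjunct $q_i\in q$, which it derives from Definition~\ref{def:icqe-entailment} and Theorem~\ref{thm:cqe-icqe}. Your treatment is in fact more explicit than the paper's about why IC-entailment of a BUCQ decomposes disjunct-wise over $\intcens$, but the overall argument is the same.
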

\begin{proof}
The result follows immediately from Lemma~\ref{lem:rewriting-cqe-entailment} and from the fact that $\E\modelsicqe q$ iff there exists a BCQ $q_i\in q$ such that $\E\modelscqe q_i$ (the latter property easily follows from Definition~\ref{def:icqe-entailment} and Theorem~\ref{thm:cqe-icqe}).
\end{proof}

\begin{theorem}
\label{thm:icqe-entailment-full}
    \icqe-\acdecprob[$\BUCQ$] is in $\aczero$ in data complexity.
\end{theorem}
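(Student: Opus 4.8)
The plan is to reduce Theorem~\ref{thm:icqe-entailment-full} to Theorem~\ref{thm:cqe-entailment-full} essentially for free, exactly as Lemma~\ref{lem:rewriting-icqe-entailment} does at the level of rewritings. First I would invoke Definition~\ref{def:icqe-entailment} together with Theorem~\ref{thm:cqe-icqe}: for a $\dlliter$ TBox $\T$, the intersection $\intcens$ of all optimal CQ-censors is itself closed under the relevant reasoning, so a BUCQ $q$ is \icqe-entailed iff some disjunct $q_i\in q$ is, and for a single BCQ $q_i$ we have $\E\modelsicqe q_i$ iff $\E\modelscqe q_i$. Hence $\E\modelsicqe q$ iff $\bigvee_{q_i\in q}\E\modelscqe q_i$.

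Next I would appeal to Lemma~\ref{lem:rewriting-cqe-entailment} (which yields Theorem~\ref{thm:cqe-entailment-full}): for each BCQ $q_i\in q$, since $\P$ is acyclic for $\T$, the FO sentence $\censentailed(q_i,\T,\P)$ is a first-order rewriting of SC-entailment of $q_i$, i.e.\ $\E\modelscqe q_i$ iff $\eval(\censentailed(q_i,\T,\P),\I_\A)$ is true. Taking the finite disjunction over $q_i\in q$, the sentence $\icensentailed(q,\T,\P)=\bigvee_{q_i\in q}\censentailed(q_i,\T,\P)$ is a first-order rewriting of IC-entailment of $q$ for $\T$ and $\P$ (this is precisely Lemma~\ref{lem:rewriting-icqe-entailment}). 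Because $q$ is fixed in data complexity, this is a fixed FO sentence, and $\I_\A$ is computable from $\A$ in $\logspace$ (indeed in $\aczero$); evaluating a fixed FO sentence over a finite structure is in $\aczero$ in data complexity by the classical result on FO model checking. Composing these, \icqe-\acdecprob[$\BUCQ$] is in $\aczero$ in data complexity.

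There is essentially no new obstacle here: all the work has been front-loaded into Lemma~\ref{lem:rewriting-cqe-entailment} and Theorem~\ref{thm:cqe-entailment-full}. The only points needing a line of care are (a) that the reduction from $\icqe$ to $\cqe$ on disjuncts is legitimate, which is where Theorem~\ref{thm:cqe-icqe} (coincidence of \cqe- and \icqe-entailment on BCQs) is used and the observation that \icqe-entailment of a BUCQ decomposes over its disjuncts because $\T\cup\intcens\models q_1\vee\ldots\vee q_n$ iff $\T\cup\intcens\models q_i$ for some $i$ (a consequence of $\intcens$ being a set of BCQs closed under subqueries and $\dlliter$-rewritability, as already exploited in the proof of Lemma~\ref{lem:icqe-entails}); and (b) that a finite disjunction of FO rewritings is again an FO rewriting, which is immediate. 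So the proof is a two-sentence composition of Lemma~\ref{lem:rewriting-icqe-entailment} with the standard fact that fixed FO sentences are evaluable in $\aczero$ in data complexity, mirroring the derivation of Theorem~\ref{thm:cqe-entailment-full} from Lemma~\ref{lem:rewriting-cqe-entailment}.
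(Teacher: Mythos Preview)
Your proposal is correct and follows essentially the same approach as the paper: the paper derives Theorem~\ref{thm:icqe-entailment-full} directly from Lemma~\ref{lem:rewriting-icqe-entailment} (first-order rewritability of IC-entailment via $\icensentailed(q,\T,\P)=\bigvee_{q_i\in q}\censentailed(q_i,\T,\P)$), exactly mirroring the step from Lemma~\ref{lem:rewriting-cqe-entailment} to Theorem~\ref{thm:cqe-entailment-full}. Your justification of the disjunct-wise decomposition via Theorem~\ref{thm:cqe-icqe} and the argument in Lemma~\ref{lem:icqe-entails} matches the paper's reasoning.
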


\section{Conclusions}
\label{sec:conclusions}

The results given in this paper are summarized in Table~\ref{tab:results}. Beyond their theoretical connotation, our results for acyclic dependencies are particularly interesting for practical applications, since data complexity in these cases is the same as that for standard query answering over databases, and this paves the way for implementation through consolidated SQL technology. Moreover, the table shows that confidentiality is preserved in most of the cases that we have considered.

We posit that the epistemic nature of our framework makes it 
suited to being extended to incorporate user background knowledge, which can be modeled through appropriate epistemic formulas. The implications of this extension remain a subject for future investigation. Further possible development may explore ontology languages alternative to $\dlliter$, such as $\mathcal{EL}$~\cite{BaBL05} or the OWL~2 profiles~\cite{W3Crec-OWL2-Profiles}. Additionally, extending the framework to accommodate preferences on data to be censored while ensuring compliance to the policy, as in~\cite{CLMRS21}, and examining a dynamic context where censors filter responses based on previous answers, as explored in~\cite{BCLMRSS22}, are 
paths for further research. 


\ifcameraready
\section*{Acknowledgements}
This work was partially supported by: projects FAIR (PE0000013) and SERICS (PE00000014) under the MUR National Recovery and Resilience Plan funded by the EU - NextGenerationEU; GLACIATION project funded by the EU (N. 101070141); ANTHEM (AdvaNced Technologies for Human-centrEd Medicine) project (CUP B53C22006700001) funded by the National Plan for NRRP Complementary Investments; the MUR PRIN 2022LA8XBH project Polar (POLicy specificAtion and enfoRcement for privacy-enhanced data management); and by the EU under the H2020-EU.2.1.1 project TAILOR (grant id.~952215).
\fi

\bibliographystyle{named}
\bibliography{bibliography/medium-string,bibliography/krdb,bibliography/w3c,bibliography/local-bib}

\begin{thebibliography}{}

\bibitem[\protect\citeauthoryear{Abiteboul \bgroup \em et al.\egroup }{1995}]{AbHV95}
Serge Abiteboul, Richard Hull, and Victor Vianu.
\newblock {\em Foundations of Databases}.
\newblock Addison Wesley Publ.\ Co., 1995.

\bibitem[\protect\citeauthoryear{Baader \bgroup \em et al.\egroup }{2005}]{BaBL05}
Franz Baader, Sebastian Brandt, and Carsten Lutz.
\newblock Pushing the {$\mathcal{EL}$} envelope.
\newblock In {\em Proc.\ of the 19th Int.\ Joint Conf.\ on Artificial Intelligence (IJCAI)}, pages 364--369, 2005.

\bibitem[\protect\citeauthoryear{Baader \bgroup \em et al.\egroup }{2020}]{BKNP20}
Franz Baader, Francesco Kriegel, Adrian Nuradiansyah, and Rafael Pe{\~{n}}aloza.
\newblock Computing compliant anonymisations of quantified {ABoxes} w.r.t.\ $\mathcal{EL}$ policies.
\newblock In {\em Proc.\ of the 19th Int.\ Semantic Web Conf.\ (ISWC)}, volume 12506 of {\em Lecture Notes in Computer Science}, pages 3--20. Springer, 2020.

\bibitem[\protect\citeauthoryear{Baget \bgroup \em et al.\egroup }{2011}]{BLMS11}
Jean{-}Fran{\c{c}}ois Baget, Michel Lecl{\`{e}}re, Marie{-}Laure Mugnier, and Eric Salvat.
\newblock On rules with existential variables: Walking the decidability line.
\newblock {\em Artif. Intell.}, 175(9-10):1620--1654, 2011.

\bibitem[\protect\citeauthoryear{Biskup and Bonatti}{2004}]{BiBo04}
Joachim Biskup and Piero~A. Bonatti.
\newblock Controlled query evaluation for enforcing confidentiality in complete information systems.
\newblock {\em Int. J. Inf. Sec.}, 3(1):14--27, 2004.

\bibitem[\protect\citeauthoryear{Biskup and Weibert}{2008}]{BiWe08}
Joachim Biskup and Torben Weibert.
\newblock Keeping secrets in incomplete databases.
\newblock {\em Int. J. Inf. Sec.}, 7(3):199--217, 2008.

\bibitem[\protect\citeauthoryear{Biskup}{2000}]{Bisk00}
Joachim Biskup.
\newblock For unknown secrecies refusal is better than lying.
\newblock {\em Data and Knowledge Engineering}, 33(1):1--23, 2000.

\bibitem[\protect\citeauthoryear{Bonatti and Sauro}{2013}]{BoSa13}
Piero~A. Bonatti and Luigi Sauro.
\newblock A confidentiality model for ontologies.
\newblock In {\em Proc.\ of the 12th Int.\ Semantic Web Conf.\ (ISWC)}, pages 17--32, 2013.

\bibitem[\protect\citeauthoryear{Bonatti \bgroup \em et al.\egroup }{2022}]{BCLMRSS22}
Piero Bonatti, Gianluca Cima, Domenico Lembo, Lorenzo Marconi, Riccardo Rosati, Luigi Sauro, and Domenico~Fabio Savo.
\newblock Controlled query evaluation in {OWL 2 QL}: A "longest honeymoon" approach.
\newblock In {\em Proc.\ of the 21st Int.\ Semantic Web Conf.\ (ISWC)}, volume 12922 of {\em Lecture Notes in Computer Science}, pages 428--444. Springer, 2022.

\bibitem[\protect\citeauthoryear{Bonatti}{2022}]{Bona22}
Piero~A. Bonatti.
\newblock A false sense of security.
\newblock {\em Artificial Intelligence}, 310, 2022.

\bibitem[\protect\citeauthoryear{Cal\`{\i} \bgroup \em et al.\egroup }{2012}]{CaliGL12}
Andrea Cal\`{\i}, Georg Gottlob, and Thomas Lukasiewicz.
\newblock A general datalog-based framework for tractable query answering over ontologies.
\newblock {\em J.\ of Web Semantics}, 14:57--83, 2012.

\bibitem[\protect\citeauthoryear{Calvanese \bgroup \em et al.\egroup }{2007a}]{CDLLR07b}
Diego Calvanese, Giuseppe De~Giacomo, Domenico Lembo, Maurizio Lenzerini, and Riccardo Rosati.
\newblock {EQL-Lite}: {E}ffective first-order query processing in description logics.
\newblock In {\em Proc.\ of the 20th Int.\ Joint Conf.\ on Artificial Intelligence (IJCAI)}, pages 274--279, 2007.

\bibitem[\protect\citeauthoryear{Calvanese \bgroup \em et al.\egroup }{2007b}]{CDLLR07}
Diego Calvanese, Giuseppe De~Giacomo, Domenico Lembo, Maurizio Lenzerini, and Riccardo Rosati.
\newblock Tractable reasoning and efficient query answering in description logics: The \textit{DL-Lite} family.
\newblock {\em J.\ of Automated Reasoning}, 39(3):385--429, 2007.

\bibitem[\protect\citeauthoryear{Cima \bgroup \em et al.\egroup }{2021}]{CLMRS21}
Gianluca Cima, Domenico Lembo, Lorenzo Marconi, Riccardo Rosati, and Domenico~Fabio Savo.
\newblock Controlled query evaluation over prioritized ontologies with expressive data protection policies.
\newblock In {\em Proc.\ of the 20th Int.\ Semantic Web Conf.\ (ISWC)}, volume 12922 of {\em Lecture Notes in Computer Science}, pages 374--391. Springer, 2021.

\bibitem[\protect\citeauthoryear{Console and Lenzerini}{2020}]{CoLe20}
Marco Console and Maurizio Lenzerini.
\newblock Epistemic integrity constraints for ontology-based data management.
\newblock In {\em Proc.\ of the 37th AAAI Conf.\ on Artificial Intelligence (AAAI)}, pages 2790--2797. {AAAI} Press, 2020.

\bibitem[\protect\citeauthoryear{Cuenca~Grau \bgroup \em et al.\egroup }{2013}]{CKKZ13}
Bernardo Cuenca~Grau, Evgeny Kharlamov, Egor~V. Kostylev, and Dmitriy Zheleznyakov.
\newblock Controlled query evaluation over {OWL} 2 {RL} ontologies.
\newblock In {\em Proc.\ of the 12th Int.\ Semantic Web Conf.\ (ISWC)}, pages 49--65, 2013.

\bibitem[\protect\citeauthoryear{Cuenca~Grau \bgroup \em et al.\egroup }{2015}]{CKKZ15}
Bernardo Cuenca~Grau, Evgeny Kharlamov, Egor~V. Kostylev, and Dmitriy Zheleznyakov.
\newblock Controlled query evaluation for datalog and {OWL} 2 profile ontologies.
\newblock In {\em Proc.\ of the 24th Int.\ Joint Conf.\ on Artificial Intelligence (IJCAI)}, pages 2883--2889, 2015.

\bibitem[\protect\citeauthoryear{{European Union}}{2016}]{GDPR-art25}
{European Union}.
\newblock Regulation ({EU}) 2016/679 of the {E}uropean {P}arliament and of the {C}ouncil of 27 {A}pril 2016.
\newblock {\em Official J.\ of the European Union}, L~119:48, 2016.

\bibitem[\protect\citeauthoryear{K{\"{o}}nig \bgroup \em et al.\egroup }{2015}]{DBLP:journals/semweb/KonigLMT15}
M{\'{e}}lanie K{\"{o}}nig, Michel Lecl{\`{e}}re, Marie{-}Laure Mugnier, and Micha{\"{e}}l Thomazo.
\newblock Sound, complete and minimal {UCQ}-rewriting for existential rules.
\newblock {\em Semantic Web}, 6(5):451--475, 2015.

\bibitem[\protect\citeauthoryear{Lembo \bgroup \em et al.\egroup }{2019}]{LeRS19}
Domenico Lembo, Riccardo Rosati, and Domenico~Fabio Savo.
\newblock Revisiting controlled query evaluation in description logics.
\newblock In {\em Proc.\ of the 28th Int.\ Joint Conf.\ on Artificial Intelligence (IJCAI)}, pages 1786--1792, 2019.

\bibitem[\protect\citeauthoryear{Motik \bgroup \em et al.\egroup }{2012}]{W3Crec-OWL2-Profiles}
Boris Motik, Bernardo Cuenca~Grau, Ian Horrocks, Zhe Wu, Achille Fokoue, and Carsten Lutz.
\newblock {OWL~2} {W}eb {O}ntology {L}anguage profiles (second edition).
\newblock {W3C} {R}ecommendation, World Wide Web Consortium, December 2012.
\newblock Available at \protect\url{http://www.w3.org/TR/owl2-profiles/}.

\bibitem[\protect\citeauthoryear{Sicherman \bgroup \em et al.\egroup }{1983}]{SiJR83}
George~L. Sicherman, Wiebren de~Jonge, and Reind~P. van~de Riet.
\newblock Answering queries without revealing secrets.
\newblock {\em ACM Trans.\ on Database Systems}, 8(1):41--59, 1983.

\bibitem[\protect\citeauthoryear{Vardi}{1982}]{Vard82}
Moshe~Y. Vardi.
\newblock The complexity of relational query languages.
\newblock In {\em Proc.\ of the 14th ACM SIGACT Symp.\ on Theory of Computing (STOC)}, pages 137--146, 1982.

\end{thebibliography}

\ifjournal\else
    \section*{Appendix}
\label{sec:appendix}

\subsection*{Auxiliary notions and results}

We recall the following property, which is a direct consequence of~\cite[Theorem~6]{CDLLR07b}.

\begin{proposition}
\label{pro:eqllite-ucq-eval}
    Let $\Phi,\Phi'$ be FO theories, and let $\phi$ be an ED. 
    If, for every subformula $Kq(\vec{x})$ occurring in $\phi$, and for every ground substitution $\vec{t}$ for $\vec{x}$, 
    $\Phi\models q(\vec{t})$ iff $\Phi'\models q(\vec{t})$, then $\Phi\modelseql\phi$ iff $\Phi'\modelseql\phi$.
\end{proposition}

%


We say that a set of BCQs $\C$ is \emph{closed under subqueries} if, for every $q\in\C$ and for every subquery\footnote{Given a BCQ $q$ of the form $\exists \vec{x}_1 (\alpha_1\wedge \ldots \wedge\alpha_n)$, a \emph{subquery} of $q$ is a BCQ $q'$ of the form $\exists \vec{x}_2 (\alpha_{i_1}\wedge \ldots \wedge\alpha_{i_m})$ such that, for every $j\in\{1,\ldots,m\}$, $1\leq i_j\leq n$, and $\vec{x}_2$ are the variables of $\vec{x}_1$ that occur in some $\alpha_{i_j}$. Informally, the subquery $q'$ is obtained from $q$ deleting some of its atoms.} $q'$ of $q$, we have $q'\in\C$.

\begin{proposition}
\label{pro:dlliter-bcqk-projection}
    Let $\T$ be a $\dlliter$ TBox, and let $\C$ be a set of BCQs closed under subqueries. For every BUCQ $q$, $\T\cup\C\models q$ iff $\T\cup\C_k\models q$, where $k=\maxlength(q)$ and $\C_k=\C\cap\BCQk$.
\end{proposition}
\begin{proof}
    We prove that, if $\T\cup\C\models q$ then $\T\cup\C_k\models q$ (the only-if direction is trivial). First, by Proposition~\ref{pro:rewriting-dlliter}, $\T\cup\C\models q$ iff $\C\models q'$, where $q'=\perfectref(q,\T)$. As said, we also have that $\maxlength(q)=\maxlength(q')$. Moreover, $\C\models q'$ iff there exists a BCQ $q''$ in $q'$ and a homomorphism $h$ mapping $q''$ into $\C$: now, since $\C$ is closed under subqueries, the existence of $h$ implies the existence of a homomorphism that maps $q''$ onto a subset of BCQs $\C'$ of $\C$ such that the length of each BCQ of $\C'$ is not greater than $\cqlength(q'')$ (which of course is not greater than $\maxlength(q')$). This immediately implies that $\C\models q'$ iff $\C_k\models q'$, which in turn implies the thesis.
\end{proof}


\smallskip
\subsection*{Proofs of theorems and lemmas}

\noindent\textbf{Proof of Proposition~\ref{pro:indistinguishability-cqe}}

Let $\Q$ be the finite set of BCQs $\{ q\in\BCQh \mid \E\modelscqe q \}$, where $h=\max(k,\maxlengthatoms(\P))$. We observe that $\Q \subseteq \bigcap_{\C\in\optcqcens(\E)} \C$.  Let also $\A'$ be the ABox isomorphic to $\Q$, i.e.\ the ABox obtained by $(i)$ creating, for each $q\in \Q$, a set $\A_q$ collecting the atoms occurring in $q$; $(ii)$ replacing distinct variables occurring in each $\A_q$ with distinct fresh labeled nulls; $(iii)$ representing $\A'$ as $\bigcup_{q\in \Q} \A_q$.

We now prove by contradiction that $\T\cup\A'\modelseql\P$, so let us suppose this does not hold. In this case, there would exist an ED $\tau\in\P$ of the form $\forall \vec{x}_1,\vec{x}_2 (K q_b(\vec{x}_1,\vec{x}_2) \rightarrow K q_h(\vec{x}_2))$ and two ground substitutions $\vec{c_1}$ and $\vec{c_2}$ for $\vec{x}_1$ and $\vec{x_2}$, respectively, such that $\T\cup\A'\models q_b(\vec{c}_1,\vec{c}_2)$ and $\T\cup\A'\not\models q_h(\vec{c}_2)$.
By construction of $\A'$ this holds only if $\T\cup\Q\models q_b(\vec{c}_1,\vec{c}_2)$ and $\T\cup\Q\not\models q_h(\vec{c}_2)$.
Since, as noticed, $\Q$ is contained in every CQ-censor for $\E$, then from $\T\cup\Q\models q_b(\vec{c}_1,\vec{c}_2)$ it follows by monotonicity that $\T\cup\C\models q_b(\vec{c}_1,\vec{c}_2)$ holds for every CQ-censor $\C$ for $\E$ (i.e.\ $\E\modelscqe q_b(\vec{c}_1,\vec{c}_2)$).
Moreover, from $\T\cup\Q\not\models q_h(\vec{c}_2)$ and by construction of $\Q$ it follows that $\E\not\modelscqe q_h(\vec{c}_2)$ (indeed, from $\E\modelscqe q_h(\vec{c}_2)$ we would have that $q_h(\vec{c}_2)\in\Q$, and so that $\T\cup\Q\modelscqe q_h(\vec{c}_2)$).

Since $\E\modelscqe q_b(\vec{c}_1,\vec{c}_2)$ and $\E\not\modelscqe q_h(\vec{c}_2)$, then it holds $\T\cup\C\not\modelseql\P$ for at least one CQ-censor $\C$ for $\E$. This, however, contradicts Definition~\ref{def:cq-censor}, thus proving that $\T\cup\A'\modelseql\P$.
Then, it is immediate to verify that, given any $q\in\BCQk$, $\E\modelscqe q$ iff $\E'\modelscqe q$.
\qed


\bigskip
\noindent
\textbf{Proof of Proposition~\ref{pro:indistinguishability-icqe}}

Let $\A'$ be the ABox isomorphic to the finite set of BCQs $\Q=\{ q\in\BCQh \mid \E\modelscqe q \}$, where $h=\max(k,\maxlengthatoms(\P))$. 
First, as shown in the proof of Proposition~\ref{pro:indistinguishability-cqe}, we have that $\T\cup\A'\modelseql\P$.
Then, since $\T\cup\A'\modelseql\P$, it is immediate to verify that, for every $q\in\BUCQk$, $\E\modelsicqe q$ iff $\E'\modelsicqe q$ (the key property is that a BUCQ is \icqe-entailed iff at least one of the BCQs occurring in it is \icqe-entailed).
\qed


\bigskip
\noindent
\textbf{Proof of Theorem~\ref{thm:indistinguishability-icqe}}

This property is an immediate consequence of Proposition~\ref{pro:indistinguishability-icqe} when we assume that $k$ is the maximum length of a BCQ occurring in a BUCQ of $\Q$.
\qed


\bigskip
\noindent
\textbf{Proof of Lemma~\ref{lem:cqe-entailment-bcq-lb}}

We show a reduction of 3-CNF, a well-known NP-hard problem, to the complement of \cqe-\decprob[\BCQ].
Let $\T$ be the empty TBox, and let $\P$ contain the following EDs:
\[
\begin{array}{r@{}l}
\forall x,y,v,z\,
\big( K (&C_1(x,y)\wedge V_1(x,v)\wedge V(y,v) \wedge \\&\quad N(x,z)\wedge S(x) ) \rightarrow K S(z) \big) \\
\forall x,y,v,z\,
\big( K (&C_2(x,y)\wedge V_2(x,v)\wedge V(y,v) \wedge \\&\quad N(x,z)\wedge S(x) ) \rightarrow K S(z) \big) \\
\forall x,y,v,z\,
\big( K (&C_3(x,y)\wedge V_3(x,v)\wedge V(y,v) \wedge \\&\quad N(x,z)\wedge S(x) ) \rightarrow K S(z) \big) \\
\forall x \, \big(K (&V(x,f)\wedge V(x,t)) \rightarrow K \bot \big)
\end{array}
\]

Given a 3-CNF formula $\phi$ with $m$ clauses, we represent every clause of $\phi$ in the ABox $\A$ through the roles $C_1,C_2,C_3,V_1,V_2,V_3$: e.g., if the $i$-th clause of $\phi$ is $\neg a \vee b \vee \neg c$, we add to $\A$ the assertions $C_1(i,a),C_2(i,b),C_3(i,c),V_1(i,f),V_2(i,t),V_3(i,f)$. Moreover, $\A$ contains the assertions
\[ 
\begin{array}{l}
\{ V(a,f),V(a,t) \mid a\in\varprop(\phi)\}\: \cup \\
\{ S(i), N(i,i+1) \mid 1\leq i\leq m \} 
\end{array}
\]
where $\varprop(\phi)$ are the propositional variables of $\phi$.

We now show that $\phi$ is satisfiable iff $\tup{\emptyset,\A,\P}\not\modelscqe S(1)$.

First, if $\phi$ is satisfiable, then let $P$ be the set of propositional variables occurring in $\phi$, let $I$ be a propositional interpretation (subset of $P$) satisfying $\phi$, and let $\C$ be the following subset of $\bcqclosure(\A)$:
\[
\begin{array}{r@{}l}
    \C = \A \setminus (
    &\{V(p,f) \mid p\in P\cap I \}\;\cup\\
    &\{ V(p,t) \mid p\in P\setminus I \} \cup \{S(1),\ldots,S(m)\} )
\end{array}
\]
    
It is immediate to verify that $\C\modelseql\P$, and thus there exists an optimal CQ-censor $\C'$ of $\E$ containing $\C$.
Now suppose that $S(1)\in\C'$. Then, since $I$ satisfies $\phi$, there exists an instantiation of $y$ and $v$ that, together with the instantiation $x=1$, $z=2$ is such that the BCQ in the premise of one of the first three dependencies of $\P$ is entailed by $\C'$, which implies (for $\C'\modelseql\P$ to hold) that $S(2)\in\C'$. But now, the presence of $S(2)$ in $\C'$ implies (for the same argument as above) that $S(3)\in\C'$, Iterating the above argument, we conclude that $S(m)\in\C'$: this in turn would imply that $S(m+1)\in\C'$, but $S(m+1)$ does not belong to $\bcqclosure(\A)$, and since $\C'\subseteq\bcqclosure(\A)$, we have a contradiction.
Consequently, $\C'$ can not contain $S(1)$, which implies that $\E\not\modelscqe S(1)$.

On the other hand, given a guess of the atoms of the $V$ predicate in $\A$ satisfying the fourth dependency and corresponding to an interpretation of the propositional variables that does not satisfy $\phi$, it is immediate to verify that the sequence of instantiations of the bodies of the first three dependencies of $\Sigma$ mentioned above (which, as above explained, leads to the need of adding $S(m+1)$ to the optimal CQ-censor when the interpretation corresponding to such a guess of the atoms for $V$ satisfies $\phi$) is blocked by the absence of some ground atom for $V$ in the optimal CQ-censor. More precisely: there exists a positive integer $k\geq 1$ such that the atoms $S(k), S(k-1), \ldots, S(1)$ can be added to all the optimal CQ-censors corresponding to such a guess of the $V$ atoms. This implies that, if $\phi$ is unsatisfiable, then $S(1)$ belongs to all the optimal CQ-censors of $\E$, hence $\E\modelscqe S(1)$.
\qed


\bigskip
\noindent
\textbf{Proof of Lemma~\ref{lem:cqe-entails}}

Suppose the algorithm returns false. Then, there exists a set of BCQ$_k$s $\C$ satisfying conditions (i), (ii), and (iii) of the algorithm. Now, by condition (i) it follows that there exists an optimal CQ-censor $\C'$ of $\E$ such that $\C\subseteq\C'$. Then, since condition (iii) of the algorithm holds, by Lemma~\ref{lem:kcons} it follows that $\C=\C'\cap\BCQk$ (hence $\C$ is closed under subqueries); consequently, by condition (ii) and Proposition~\ref{pro:dlliter-bcqk-projection} it follows that $\T\cup\C'\not\models q$, which implies that $\E\not\modelscqe q$. 

Conversely, suppose $\E\not\modelscqe q$. then, there exists an optimal CQ-censor $\C'$ of $\E$ such that $\T\cup\C'\not\models q$. Now let $k=\max(\maxlength(q),\maxlengthatoms(\P))$ and let $\C=\C'\cap\BCQk$. Since $\C$ is closed under subqueries, by Proposition~\ref{pro:dlliter-bcqk-projection}, for every subformula $Kq(\vec{x})$ occurring in $\P$ and for every ground substitution $\vec{t}$ for $\vec{x}$, where $\vec{t}$ is a tuple of constants occurring in $\A$, $\T\cup\C\models q(\vec{t})$ iff $\T\cup\C'\models q(\vec{t})$, hence by Proposition~\ref{pro:eqllite-ucq-eval} it follows that $\T\cup\C\modelseql\P$, therefore condition (i) of the algorithm holds for $\C$.
Moreover, by Proposition~\ref{pro:dlliter-bcqk-projection} it follows that condition (ii) holds for $\C$.
Finally, suppose condition (iii) does not hold for $\C$, i.e.\ there exists $q'\in \bcqkclosure(\T\cup\A) \setminus \C$ such that $\kcons(\T,\P,\C\cup\{q'\})\subseteq\bcqkclosure(\T\cup\A)$. 
Then, it is immediate to verify that $\T\cup\kcons(\T,\P,\C\cup\{q'\})\modelseql\P$, i.e.\ $\kcons(\T,\P,\C\cup\{q'\})$ is a CQ-censor of $\E$. 
Now let $\C''$ be the maximal subset of $\bcqclosure(\T\cup\A)$ such that $\kcons(\T,\P,\C\cup\{q'\})=\C''\cap\BCQk$ and $\C''=\bcqclosure(\T\cup\C'')$. By Proposition~\ref{pro:dlliter-bcqk-projection} and Proposition~\ref{pro:eqllite-ucq-eval} it follows that $\T\cup\C''\modelseql\P$. But now, $\C'$ is the maximal subset of $\bcqclosure(\T\cup\A)$ such that $\C=\C'\cap\BCQk$, and since $\C\subset\kcons(\T,\P,\C\cup\{q'\})$, it follows that $\C'\subset\C''$, thus contradicting the hypothesis that $\C'$ is an optimal CQ-censor of $\E$. Consequently, condition (iii) holds for $\C$, hence the algorithm returns false.
\qed


\bigskip
\noindent
\textbf{Proof of Lemma~\ref{lem:rewriting-consistency}}

Suppose $\eval(q_r(\vec{c}),\I_\A)$ is true. Then, $\A\models q_r(\vec{c})$, hence by Proposition~\ref{pro:rewriting-dlliter} it follows that $\T\cup\A\models q_r'(\vec{c})$, where $q_r'(\vec{x})=\phikcons(\T,\P,q(\vec{x}))$, which implies that $\kcons(\T,\P,\{q(\vec{c})\})\subseteq\bcqclosure(\T\cup\A)$, thus by Lemma~\ref{lem:kcons} there exists an optimal CQ-censor of $\E$ that contains $q(\vec{c})$.

Conversely, suppose there exists an optimal CQ-censor of $\E$ that contains $q(\vec{c})$. Then, by Lemma~\ref{lem:kcons} $\kcons(\T,\P,\{q(\vec{c})\})\subseteq\bcqclosure(\T\cup\A)$. Thus, $\T\cup\A\models q'$ for every BCQ $q'\in \kcons(\T,\P,\{q(\vec{c})\})$, which implies that $\T\cup\A\models q_r'(\vec{c})$. Hence, by Proposition~\ref{pro:rewriting-dlliter} we derive $\A\models \perfectref(q_r'(\vec{c}),\T)$.
Now, from the properties of $\perfectref$ it immediately follows that the BCQ $\perfectref(q_r'(\vec{c}),\T)$ is equivalent to $q_r(\vec{c})$, hence $\A\models q_r(\vec{c})$, and therefore $\eval(q_r(\vec{c}),\I_\A)$ is true.
\qed


\bigskip
\noindent
\textbf{Proof of Lemma~\ref{lem:clash}}

Suppose $q'(\vec{c})$ is a clash for $q$ in $\E$, Then, there exists an optimal CQ-censor of $\E$ containing $q'(\vec{c})$, which by Lemma~\ref{lem:rewriting-consistency} implies that $\eval(q_r(\vec{c}),\I_\A)$ is true, where $q_r(\vec{x})=\perfectref(\phikcons(\T, \P, q'(\vec{x})),\T)$. Moreover, for every $q_i\in q$, $q'(\vec{c})$ is a clash for $q_i$ in $\E$, i.e.\ there exists no optimal CQ-censor of $\E$ containing $q'(\vec{c})\wedge q_i$, which again by Lemma~\ref{lem:rewriting-consistency} implies that $\eval(\bigwedge_{q_i\in q}\neg q_r'(\vec{c}),\I_\A)$ is true, where $q_r'(\vec{x})=\perfectref(\phikcons(\T, \P, q'(\vec{x})\wedge q_i),\T)$. Consequently, $\eval(q_{cl}(\vec{c}),\I_\A)$ is true.

Conversely, suppose $\eval(q_{cl}(\vec{c}),\I_\A)$ is true. This implies that $\eval(q_r(\vec{c}),\I_\A)$ is true, hence by Lemma~\ref{lem:rewriting-consistency} it follows that there exists an optimal CQ-censor of $\E$ containing $q'(\vec{c})$. Moreover, since $\eval(q_{cl}(\vec{c}),\I_\A)$ is true, then for every $q_i\in q$, $\eval(\neg q_r'(\vec{c}),\I_\A)$ is true, hence $\eval(q_r'(\vec{c}),\I_\A)$ is false, and thus by Lemma~\ref{lem:rewriting-consistency} it follows that there exists no optimal CQ-censor of $\E$ containing $q'(\vec{c})\wedge q_i$. Consequently, $q'(\vec{c})$ is a clash for $q$ in $\E$.
\qed



\bigskip
\noindent
\textbf{Proof of Lemma~\ref{lem:clash-dlliter-acyclic}}

To prove the lemma, we need to introduce some auxiliary definitions and properties.

Given an ED $\delta$ of the form $\forall\vec{x}\,(K q_b\rightarrow K q_h)$, we denote by $\body(\delta)$ the CQ $q_b$ and denote by $\head(\delta)$ the CQ $q_h$.
Also, we use the notation $\delta(\vec{x})$ to indicate an ED with universally quantified variables $\vec{x}$.

\newcommand{\edclosure}{\textit{ED-closure}}

Given an ED $\delta$ of the form $\forall x_1,\ldots,x_m\, (K q_b \rightarrow K q_h)$,
we define $\TGD(\delta)$ as the following FO implication corresponding to a \emph{tuple-generating dependency} (\emph{TGD}), also known as \emph{existential rule} (see e.g.\ \cite{CaliGL12,BLMS11}):
\[
\forall x_1,\ldots,x_m\, (q_b \wedge\bigwedge_{i=1}^m\constpred(x_i) \rightarrow q_h)
\]
Moreover, given a policy $\P$, we define $\TGD(\P)$ as the set of TGDs $=\{\TGD(\delta)\mid\delta\in\P\}$.

Then, given a $\dlliter$ positive TBox inclusion $\eta$, we define $\TGD(\eta)$ as the TGD corresponding to $\eta$, i.e.:
\[
\begin{array}{l@{\ =\ }l}
  \TGD(A\ISA B) & \forall x (A(x)\rightarrow B(x)) \\
  \TGD(A\ISA \exists s) & \forall x (A(x)\rightarrow \exists z\, s(x,z)) \\
  \TGD(A\ISA \exists s^-) & \forall x (A(x)\rightarrow \exists z\, s(z,x)) \\
  \TGD(\exists r\ISA A) & \forall x,y (r(x,y)\rightarrow A(x)) \\
  \TGD(\exists r\ISA \exists s) & \forall x,y (r(x,y)\rightarrow \exists z \, s(x,z)) \\
  \TGD(\exists r\ISA \exists s^-) & \forall x,y (r(x,y)\rightarrow \exists z \, s(z,x)) \\
  \TGD(\exists r^-\ISA A) & \forall x,y (r(y,x)\rightarrow A(x)) \\
  \TGD(\exists r^-\ISA \exists s) & \forall x,y (r(y,x)\rightarrow \exists z \, s(x,z)) \\
  \TGD(\exists r^-\ISA \exists s^-) & \forall x,y (r(y,x)\rightarrow \exists z \, s(z,x)) \\
  \TGD(r\ISA s) & \forall x,y (r(x,y)\rightarrow s(x,y)) \\
  \TGD(r\ISA s^-) & \forall x,y (r(x,y)\rightarrow s(y,x)) \\
  \TGD(r^-\ISA s) & \forall x,y (r(y,x)\rightarrow s(x,y)) \\
  \TGD(r^-\ISA s^-) & \forall x,y (r(y,x)\rightarrow s(y,x))  
\end{array}  
\]
We also define $\TGD(\T)=\{\TGD(\eta) \mid \eta\in\T\}$ for a $\dlliter$ TBox $\T$.
Finally, we denote by $\TGD(\P,\T)$ the set $\TGD(\P)\cup\TGD(\T)$.

\ 

\newcommand{\noconstpred}{\textit{No}\constpred}

In the following, given a CQ $q(\vec{x})$, we denote by $\noconstpred(q(\vec{x}))$ the CQ obtained from the $q(\vec{x})$ by eliminating the atoms with predicate $\constpred$.

We now make use of an existing query rewriting algorithm for TGDs, in particular the algorithm shown in~\cite{DBLP:journals/semweb/KonigLMT15}
, which computes a UCQ rewriting (i.e.\ a set of CQs) of a CQ with respect to a set of TGDs. More precisely, we denote by $\ucqrewrite(q(\vec{x}),\Sigma)$ the set of CQs returned by such an algorithm for the input CQ $q(\vec{x})$ and set of TGDs $\Sigma$.

We first show the following property:
\begin{quote}
    (PR1): Let $\delta(\vec{x})\in\P$. Then, the algorithm $\ucqrewrite$ on input $\body(\delta(\vec{x}))$, $\TGD(\P,\T)$ terminates and, for every $q'(\vec{x}')\in\ucqrewrite(\body(\delta(\vec{x})),\TGD(\P,\T))$, it holds $\cqlength(\noconstpred(q'(\vec{x}')))\leq k^h$.
\end{quote}

Proof of (PR1): It is not hard to verify that, since $\P$ is an acyclic policy and $\T$ is a $\dlliter$ TBox, the algorithm $\ucqrewrite(q,\TGD(\P,\T))$
terminates on every CQ $q$ (and thus in particular on $\body(\delta(\vec{x}))$.
Indeed, every atom whose predicate is $\constpred$ can not be rewritten at all (since the predicate $\constpred$ does not appear in the right-hand side of any TGD in $\TGD(\P,\T)$), while every atom whose predicate is not $\constpred$ can be rewritten at most once by TGDs from $\TGD(\P)$ (because $\P$ is acyclic for $\T$). Moreover, the set $\TGD(\T)$ is linear, which implies that only a finite number of consecutive applications of such TGDs can be applied to the rewriting of any atom.
Moreover, since the application of the TGDs in $\TGD(\T)$ in the rewriting cannot increase the size of the rewritten query, it follows that every atom of $\body(\delta(\vec{x}))$ can be rewritten in no more than $k^{h-1}$ atoms: since the length of $\body(\delta(\vec{x}))$ is not greater than $k$, it follows that every CQ returned by $\ucqrewrite(\body(\delta(\vec{x})),\TGD(\P,\T))$ has a length not greater than $k^h$, thus property (PR1) holds.

\

We now make use of $\ucqrewrite$ to define $\edclosure(\P,\T)$ as the following set of EDs:

\begin{small}
\[
\begin{array}{l}
\{ \forall \vec{x}' (K \noconstpred(q'(\vec{x}'))\rightarrow K \head(\delta(\vec{x}')) \mid \\
\;\delta(\vec{x})\in\P \wedge q'(\vec{x}')\in\ucqrewrite(\body(\delta(\vec{x})),\TGD(\P,\T)) \}
\end{array}
\]
\end{small}

\

Then, we prove the following property:
\begin{quote}
(PR2): Let $q$ be a BCQ such that $\T\cup\A\models q$. There exists an optimal CQ-censor of $\E$ containing $q$ iff, for every ED $\delta(\vec{x})\in\edclosure(\P,\T)$,
and for every ground substitution $\vec{c}$ of $\vec{x}$, if $q\models\body(\delta(\vec{c}))$, then $\T\cup\A\models\head(\delta(\vec{c}))$.
\end{quote}

Proof of (PR2):
Suppose that there exists an optimal CQ-censor $\C$ of $\E$ containing $q$. Now suppose there exists $\delta(\vec{x})\in\edclosure(\P,\T)$ and a ground substitution $\vec{c}$ of $\vec{x}$ such that $\{q\}\models\body(\delta(\vec{c}))$ and $\T\cup\A\not\models\head(\delta(\vec{c}))$. Of course, since $q\in\C$, it follows that $\T\cup\C\models\body(\delta(\vec{c}))$. Now let $\delta_0(\vec{x})$ be the ED in $\P$ that has been rewritten into $\delta(\vec{x})$. Given the properties of $\ucqrewrite$, it follows that, if $\T\cup\C\models\body(\delta(\vec{c}))$, then $\T\cup\C\models\body(\delta_0(\vec{c}))$. Consequently, since $\C$ is an optimal CQ-censor of $\E$, $\T\cup\C\models\head(\delta(\vec{c}))$, and since $\C\subseteq\bcqclosure(\T\cup\A)$, it follows that $\T\cup\A\models\head(\delta(\vec{c}))$.

Conversely, suppose there exists no optimal CQ-censor of $\E$ containing $q$. Then, by Lemma~\ref{lem:kcons}, $\kcons(\T,\P,\{q\})\not\subseteq\bcqclosure(\T\cup\A)$.
By Definition~\ref{def:kcons}, it follows that there exist $\delta(\vec{x})\in\P$ and a ground substitution $\vec{c}$ for $\vec{x}$ such that $\T\cup\kcons(\T,\P,\{q\})\models\body(\delta(\vec{c}))$ and $\T\cup\A\not\models\head(\delta(\vec{c}))$.

Now, it is easy to verify that $\T\cup\kcons(\T,\P,\{q\})\models \body(\delta(\vec{c}))$ iff $\chase(q,\TGD(\P,\T))\models\body(\delta(\vec{c}))$, where $\chase$ is a well-known algorithm \cite{AbHV95} that, given a set of atoms\footnote{As often done in previous work, here we apply the chase to a BCQ rather than to a set of ground atoms: it suffices to consider the BCQ as the set of its atoms, and its variables as a new and additional set of constants.} and a set of TGDs, returns a (possibly infinite) set of atoms that contains the initial set and is isomorphic to a \emph{canonical model} (or minimal model) of all the input formulas.

Moreover, the known properties of $\ucqrewrite$ imply that
$\chase(q,\TGD(\P,\T))\models \body(\delta(\vec{c}))$ iff there exists $q'(\vec{x}')\in\ucqrewrite(\body(\delta(\vec{x}),\TGD(\P,\T)))$ such that $\{q\}\models q'(\vec{c})$.

Consequently, by the definition of $\edclosure$, there exists $\delta'(\vec{x}')\in\edclosure(\P,\T)$ such that $\{q\}\models\body(\delta'(\vec{c}))$ and $\head(\delta'(\vec{c}))=\head(\delta(\vec{c}))$, and thus $\T\cup\A\not\models\head(\delta'(\vec{c}))$, which concludes the proof.










As a consequence of (PR2), we immediately obtain the following property:
\begin{quote}
(PR3): $q'$ is a clash for $q$ in $\E$ iff, for every $q_i\in q$, there exist an ED $\delta(\vec{x})\in\edclosure(\P,\T)$ and a ground substitution $\vec{c}$ for $\vec{x}$ such that $\{q'\wedge q_i\}\models\body(\delta(\vec{c}))$.
\end{quote}

Finally, it is immediate to verify that $\E\modelscqe q$ iff there exists no BCQ $q'$ such that $q'$ is a clash for $q$ in $\E$. This property, property (PR1), and property (PR3) immediately imply the lemma.
\qed


\bigskip
\noindent
\textbf{Proof of Lemma~\ref{lem:rewriting-cqe-entailment}}

First, let $\psi_1(q^p)$ be the sentence
\[
\begin{array}{l}
\displaystyle
 \bigwedge_{q_i\in q^p} \perfectref(q_i,\T)\: \wedge 
 \bigwedge_{q_i\in q\setminus q^p}\neg\perfectref(q_i,\T)
\end{array}
\]
and let $\psi_2(q^p)$ be the sentence 
\[
\begin{array}{l}
\displaystyle
\bigwedge_{q'(\vec{x})\in \Q}  
\neg \big( \exists \vec{x} \,(\clash(q^p,q'(\vec{x}),\T,\P) \wedge \bigwedge_{x\in\vec{x}} Ind(x) )
\end{array}
\]

Now, suppose $\eval(\censentailed(q,\T,\P),\I_\A)$ is true. Thus,
there exists a non-empty subset $q^p$ of $q$ such that $\eval(\psi_1(q^p)\wedge\psi_2(q^p),\I_\A)$ is true. Then, since $\eval(\psi_1(q^p),\I_\A)$ is true, it follows that $q_i\in\bcqclosure(\T\cup\A)$ for every $q_i\in q$. Therefore, 
from Lemma~\ref{lem:clash} 
and from the fact that $\eval(\psi_2(q^p),\I_\A)$ is true, it follows that there exists no BCQ $q'$ such that 
$q'\in\Q'$
and $q'$ is a clash for $q^p$ in $\E$,
where $\Q'$ is the set of BCQs defined over the predicates and constants occurring in $\{q\} \cup\P \cup \T$ and whose maximum length is $m\cdot k^h$, with
$m$ representing the number of BCQs in $q$, $h$ representing the number of EDs in $\P$, and $k=\maxlengthatoms(\P)$.
Consequently, by Lemma~\ref{lem:clash-dlliter-acyclic} we conclude that $\E\modelscqe q^p$, which immediately implies that $\E\modelscqe q$.

Conversely, suppose $\E\modelscqe q$. Let $q^p$ be the non-empty subset of BCQs $\{q_i\in q \mid q_i\in\bcqclosure(\T\cup\A)\}$. Then, $\eval(\psi_1(q^p),\I_\A)$ is true. Moreover, since there exists no BCQ $q'$ such that $q'\in\BCQ$ and $q'$ is a clash for $q$ in $\E$, by Lemma~\ref{lem:clash} it follows that $\eval(\psi_2(q^p),\I_\A)$ is true. Consequently, $\eval(\censentailed(q,\T,\P),\I_\A)$ is true.
\qed

\fi

\end{document}